\newtheorem{theorem}{Theorem}
\newtheorem{lemma}{Lemma}
\newtheorem{assumption}{Assumption}
\newtheorem{corollary}{Corollary}
\newtheorem{remark}{Remark}
\crefname{assumption}{assumption}{assumptions}
\crefname{corollary}{corollary}{corollaries}
\newcommand\norm[1]{\left\lVert#1\right\rVert_2}
\newcommand\inner[2]{\langle#1,#2\rangle}
\title{A New Framework for Variance-Reduced Hamiltonian Monte Carlo}
\author{
  Zhengmian Hu \\
  University of Pittsburgh\\
  Pittsburgh, PA 15213 \\
  \texttt{huzhengmian@gmail.edu} \\
   \And
  Feihu Huang \\
  University of Pittsburgh\\
  Pittsburgh, PA 15213 \\
  \texttt{huangfeihu2018@gmail.com} \\ 
   \And
  Heng Huang \\
  University of Pittsburgh\\
  Pittsburgh, PA 15213 \\
  \texttt{ henghuanghh@gmail.com} \\
}
\begin{document}
\maketitle

\begin{abstract}
We propose a new framework of variance-reduced Hamiltonian Monte Carlo (HMC) methods for sampling from an $L$-smooth and $m$-strongly log-concave distribution, based on a unified formulation of biased and unbiased variance reduction methods.
We study the convergence properties for HMC with gradient estimators which satisfy the Mean-Squared-Error-Bias (MSEB) property.
We show that the unbiased gradient estimators, including SAGA and SVRG, based HMC methods achieve highest gradient efficiency with small batch size under high precision regime, and require $\tilde{O}(N + \kappa^2 d^{\frac{1}{2}} \varepsilon^{-1} + N^{\frac{2}{3}} \kappa^{\frac{4}{3}} d^{\frac{1}{3}} \varepsilon^{-\frac{2}{3}} )$ gradient complexity to achieve $\epsilon$-accuracy in 2-Wasserstein distance. 
Moreover, our HMC methods with biased gradient estimators, such as SARAH and SARGE, require $\tilde{O}(N+\sqrt{N} \kappa^2 d^{\frac{1}{2}} \varepsilon^{-1})$ gradient complexity, which has the same dependency on condition number $\kappa$ and dimension $d$ as full gradient method, but improves the dependency of sample size $N$ for a factor of $N^\frac{1}{2}$.
Experimental results on both synthetic and real-world benchmark data show that our new framework significantly outperforms the full gradient and stochastic gradient HMC approaches.
The earliest version of this paper was submitted to ICML 2020 with three weak accept but was not finally accepted.
\end{abstract}

\keywords{Variance Reduction \and Sampling \and Hamiltonian Monte Carlo}

\section{Introduction}

Markov Chain Monte Carlo (MCMC) algorithms have been widely used for sampling posterior distributions in Bayesian inference. 
Given a dataset $\mathcal{D} = \{ \bm{d}_i \}_{i=1}^n$, we are interested in sampling $p^*(\bm{x}) \propto \exp(-f(\bm{x}))$, where
\begin{equation}
\label{eq:f_prob_explain}
f(\bm{x}) = -\log(p(\bm{x}))-\sum_{i=1}^{n} \log(p(\bm{d}_i|\bm{x})).
\end{equation}
Langevin Monte Carlo (LMC) methods and Hamiltonian Monte Carlo (HMC) methods are two most popular families of gradient-based MCMC.
Langevin Monte Carlo method is based on Langevin dynamics (LD) which is characterized by the following stochastic differential equation (SDE):
\begin{equation}
\label{eq:sde_ld}
d\bm{X}_t = -\nabla f(\bm{X}_t) dt + \sqrt{2} d\bm{B}_t,
\end{equation}
where $\bm{X}_t$ is $d$-dimensional stochastic process, $t\geq0$ denotes time, and $\bm{B}_t$ is the standard $d$-dimensional Brownian motion.
The evolution of probability distribution of  $\bm{X}_t$ can be addressed by the following Fokker-Planck equation:
\begin{equation}
\label{eq:fokker_ld}
\frac{\partial }{\partial t} p_t (\bm{x}) = \nabla^\top (p_t(\bm{x}) \nabla f(\bm{x}) ) + \Delta p_t(\bm{x}).
\end{equation}
When the posterior distribution is well behaved \cite{Chiang1987}, $p_t (\bm{x})$ converges to the unique stationary distribution 
$p^*(\bm{x}) \propto \exp(-f(\bm{x}))$.
One can approximate the Langevin dynamics by applying Euler-Maruyama discretization~\cite{Kloeden2013} on \cref{eq:sde_ld}, and the corresponding update rule is given as:
\begin{equation}
\bm{x}_{k+1}=\bm{x}_{k}-\nabla f(\bm{x}_k) h + \sqrt{2 h} \bm{\epsilon}_k,
\label{eq:step_ld}
\end{equation}
where $\bm{\epsilon}_k$ is a $d$-dimensional standard Gaussian random vector, and $h>0$ is the step size.
\Cref{eq:step_ld} is also referred to as Unadjusted Langevin Algorithm (ULA).
For strongly log-concave and log-smooth posterior distributions,
\cite{Dalalyan2017a,Durmus2016} 
proved that ULA converges to the target density under arbitrary precision in both total variation and 2-Wasserstein distance. 
The non-asymptotic convergence analysis of LMC shows that LMC algorithm can achieve $\varepsilon$ precision in 2-Wasserstein distance after $\tilde{O}(\kappa^2 d / \varepsilon^2)$ iterations~\cite{Dalalyan2017,Dalalyan2019,Durmus2017}. 
If additional Lipschitz continuous condition of the Hessian is satisfied, \cite{Dalalyan2019} showed that the dependency of convergence rate on $\varepsilon$ can be improved to $\tilde{O}(1/\varepsilon)$.
Equivalently, in order to achieve $\varepsilon$ precision in Kullback-Leibler divergence, $\tilde{O}(\kappa^2 d / \varepsilon)$ iterations are required~\cite{Cheng2018}. 

HMC method accelerates the convergence of LMC by Hamiltonian dynamics~\cite{Duane1987,Neal2011}. 
The Hamiltonian dynamics, also known as underdamped Langevin dynamics, can explore the parameter space more efficiently by traversing along contours of a potential energy function, and can be described by the following SDE:
\begin{equation}
\label{eq:sde_hd}
\small
d\bm{X}_t = \xi \bm{V}_t dt, \;
d\bm{V}_t = - \nabla f(\bm{X}_t) dt -\gamma \xi \bm{V}_t dt  + \sqrt{2\gamma} d\bm{B}_t,
\end{equation}
where $\gamma$ is the dissipation parameter, $\xi$ is inverse mass, $\bm{X}_t, \bm{V}_t$ are the $d$-dimensional stochastic processes representing position and momentum.
Under mild condition of posterior distribution, the distribution of ($\bm{X}_t$, $\bm{V}_t$) converges to an unique invariant distribution $p^*(\bm{x},\bm{v}) \propto \exp(-f(\bm{x})-\frac{\xi}{2} \norm{\bm{v}}^2 )$, whose marginal distribution on $\bm{X}_t$ coincides with posterior distribution~\cite{Neal2011}. 
Euler-Maruyama discretization can still be applied to \cref{eq:sde_hd} but that will cancel the accelerated convergence guarantees due to the low-order integration scheme.
One can discretize \cref{eq:sde_hd} by conditioning it on the gradient at $k$-th iteration~\cite{Cheng2018a} as follows:
\begin{equation}
\label{eq:dis_sde_hd}
\small
d\tilde{\bm{V}}_t = - \nabla f(\bm{x}_k) dt -\gamma \xi \tilde{\bm{V}}_t dt  + \sqrt{2\gamma} d\bm{B}_t, \;
d\tilde{\bm{X}}_t = \xi \tilde{\bm{V}}_t dt.
\end{equation}
Integration of the above SDE with a time interval $h$ leads to the update rule of the full gradient HMC algorithm.
Based on a synchronous coupling argument, \cite{Cheng2018a} showed that HMC algorithm can achieve $\varepsilon$ precision in 2-Wasserstein distance after $\tilde{O}(\kappa^2 d^\frac{1}{2} / \varepsilon)$ iterations. 
Under a gradient flow approach, \cite{Ma2019} showed that, with additional Hessian Lipschitz assumption, in order to achieve $\varepsilon$ precision in Kullback-Leibler divergence, $\tilde{O}(\kappa^\frac{3}{2} d^\frac{1}{2} / \varepsilon^\frac{1}{2})$ iterations are required.

The full gradient computation for LMC and HMC could be expensive, especially on large-scale data.
Unbiased stochastic gradient estimator can be used in place of full gradient to bring down the computation requirement for each iteration.
However, stochastic gradient also inevitably introduces extra variance into the sampling algorithm at each step which impedes the convergence.
\cite{Dalalyan2019,Dalalyan2017} studied Stochastic Gradient Langevin Dynamics (SGLD)~\cite{Welling2011} and showed that the gradient complexity of SGLD is $\tilde{O}(\kappa^2 d \sigma^2 / \varepsilon^2)$, where  $\varepsilon$ is accuracy in 2-Wasserstein distance, and $\sigma^2$ is the upper bound of the variance of the stochastic gradient. 
Unlike the full gradient case, assuming extra Hessian smoothness can not improve the dependence of convergence rate on $\varepsilon$ further.
Stochastic Gradient Hamiltonian Monte Carlo (SG-HMC) was studied in \cite{Cheng2018a,Ma2015,Chen2014}. \cite{Cheng2018a} proved the gradient complexity of SG-HMC as $\tilde{O}(\kappa^2 d \sigma^2 / \varepsilon^2)$, which is $\tilde{O}(\frac{d^\frac{1}{2} \sigma^2}{N \varepsilon})$ worse than the full gradient HMC in 2-Wasserstein distance. In both SGLD and SG-HMC, the gradient complexity is dominated by the variance of the stochastic gradient.

Since the potential energy function normally can be decomposed as finite sum of smooth functions as in \cref{eq:f_prob_explain}, variance reduction technique can be employed to reduce the variance of stochastic gradient. 
\citet{Dubey2016} and \citet{Li2019} studied variance reduced LMC and HMC, respectively. They showed that SAGA and SVRG reduce the mean square error (MSE) of the sample path for some test functions, but did not provide gradient complexity with respect to any divergence.
\citet{Baker2019} studied the control-variate technique applied to stochastic gradient Langevin dynamics. Although the convergence rate of control-variate SGLD is no longer dominated by the gradient variance $\sigma^2$, the dependency on $\varepsilon$ is still worse than full gradient method.
\citet{Chatterji2018} studied control-variate underdamped Langevin dynamics (CV-ULD) but their analysis showed that CV-ULD is not guaranteed to converge to arbitrary precision.
With Hessian Lipschitz assumption, \citet{Chatterji2018} proved two sharper convergence rates for SAGA and SVRG based LMC, which recovers the convergence rate of full gradient method under 2-Wasserstein metric in terms of dependence on the sampling accuracy $\varepsilon$.
\citet{Zou2018} analyzed SVRG based HMC with fixed batch size $b=1$, 
however for a fixed step size, the algorithm is not guaranteed to converge after an arbitrary number of steps.

In addition to variance reduction, there are other branches of research that can improve HMC.
Symplectic integration schemes including leapfrog methods leverage symplecticity of canonical transformation and achieve better dependency on $d$ \cite{Mangoubi2018}.
Replica exchange \cite{Chen2019,Deng2020} allows exploring the multi-mode landscape more efficiently. However, these techniques are orthogonal to the research direction of our framework and is of independent interest.

In this paper, we propose a new framework of variance-reduced Hamiltonian Monte Carlo method to leverage most popular variance reduction techniques, including SAGA \cite{Defazio2014}, SVRG \cite{Johnson2013}, SARAH \cite{Nguyen2017}, and SARGE \cite{Driggs2019}. 
Our algorithm was inspired by the recent advance in stochastic optimization \cite{Driggs2019}, which depicts semi-stochastic gradients with so called MSEB property to control the MSE and bias.

To show the advantages of our new methods, we summarize and compare the gradient computational complexity for different Hamiltonian Monte Carlo methods in \Cref{tab:compare}. In \Cref{tab:compare}, $\varepsilon$ represents the accuracy under 2-Wasserstein distance, $N$ is the sample size, $b$ denotes batch size, and all average epoch lengths for SARAH and SVRG are set as $p=O(N/b)$. Our main contributions in this paper can be summarized as follows:
\begin{enumerate}
	\item We propose a new Hamiltonian Monte Carlo framework to leverage popular variance reduction techniques, including both biased and unbiased gradient estimators. 
	\item In theoretical analysis, we prove the convergence of our framework with MSEB estimator in a general manner. As a specialization, we consider four variance-reduced gradient estimators, SAGA, SVRG, SARAH, and SARGE, and derive the convergence rate under 2-Wasserstein metric for them. All variance reduction methods considered in this paper enjoy better convergence rate than existing full gradient method and stochastic gradient methods. 
	\item To the best of our knowledge, the biased variance reduction techniques, including SARAH and SARGE, have not been incorporated into stochastic HMC for sampling strongly-log-concave distribution, and this paper provides the first convergence result for them.
\end{enumerate}

\begin{table*}[t]
	\centering
	\begin{small}
		
		\begin{tabular}{lcccc}
			\toprule
			Methods & Reference & Batch size & Gradient complexity & Converge at Infinite Time \\
			\midrule
			HMC & \cite{Cheng2018a} & $N$ & $\tilde{O}(N \kappa^2 d^\frac{1}{2} / \varepsilon)$ &Y\\ 
			SG-HMC & \cite{Cheng2018a} & $O(1)$ &  $\tilde{O}(\kappa^2 \sigma^2 d / \varepsilon^2)$ &Y\\ 
			SVRG-HMC & \cite{Zou2018} & $1$ & $\tilde{O}(N+\kappa^2 d^\frac{1}{2}/\varepsilon + N^\frac{2}{3} \kappa^\frac{3}{4} d^\frac{1}{3}/\varepsilon^\frac{2}{3})$ &N\\ 
			SVRG-HMC & Ours & $1$ & $\tilde{O}(N \kappa^2+\kappa^2 d^\frac{1}{2}/\varepsilon + N^\frac{2}{3} \kappa^\frac{3}{4} d^\frac{1}{3}/\varepsilon^\frac{2}{3})$ &Y\\ 
			SAGA-HMC & Ours & $1$ & $\tilde{O}(N \kappa^2+\kappa^2 d^\frac{1}{2}/\varepsilon + N^\frac{2}{3} \kappa^\frac{3}{4} d^\frac{1}{3}/\varepsilon^\frac{2}{3})$ &Y\\ 
			SVRG-HMC & Ours & $b$ & $\tilde{O}(N+N \kappa^2/b^\frac{1}{2}+b \kappa^2 d^\frac{1}{2}/\varepsilon + N^\frac{2}{3} \kappa^\frac{3}{4} d^\frac{1}{3}/\varepsilon^\frac{2}{3})$ &Y\\ 
			SAGA-HMC & Ours& $b$ & $\tilde{O}(N+N \kappa^2/b^\frac{1}{2}+b \kappa^2 d^\frac{1}{2}/\varepsilon + N^\frac{2}{3} \kappa^\frac{3}{4} d^\frac{1}{3}/\varepsilon^\frac{2}{3})$ &Y\\ 
			SARAH-HMC & Ours & $1$ & $\tilde{O}(N+N^\frac{1}{2} \kappa^2 d^\frac{1}{2}/\varepsilon)$ &Y\\ 
			SARGE-HMC & Ours & $1$ & $\tilde{O}(N+N^\frac{1}{2} \kappa^2 d^\frac{1}{2}/\varepsilon)$ &Y\\ 
			\bottomrule
		\end{tabular}
		
	\end{small}
	
	\caption{Gradient complexity of different Hamiltonian Monte Carlo methods for sampling $L$-smooth and $m$-strongly log-concave distribution. We accept the large mini-batch size $b>1$.}
	\label{tab:compare}
\end{table*}

\section{Preliminary}
In order to show the convergence of our variance reduced HMC framework for sampling from an  $L$-smooth and $m$-strongly log-concave distribution $p^\ast \propto e^{-f(x)}$, we need to introduce some mild assumptions on the potential energy function $f(x): \mathbb{R}^d\rightarrow\mathbb{R}$ as follows:

\begin{assumption}[Sum-decomposable]\label{as:1}
	$f(\bm{x}) = \sum_{i=1}^{N} f_i(\bm{x})$, where integer $N$ is the sample size.
\end{assumption}
\begin{assumption}[Smoothness]\label{as:2}
	Each function $f_i$ is continuously-differentiable on $\mathbb{R}^d$ and there exists a constant $\tilde{L} > 0$, such that 
	$$\norm{\nabla f_i(\bm{x}) - \nabla f_i(\bm{y})} \leq \tilde{L} \norm{\bm{x}-\bm{y}}$$
	for any $\bm{x}, \bm{y} \in \mathbb{R}^d$. It can be easily verified that $ f(\bm{x})$ is $L$-smooth with $L = N \tilde{L}$.
\end{assumption}
\begin{assumption}[Strong Convexity]\label{as:3}
	There exists a constant $m > 0$ such that 
	$$f(\bm{x}) - f(\bm{y}) \geq \inner{\nabla f(\bm{y})}{\bm{x} - \bm{y}} + \frac{m}{2} \norm{\bm{x} - \bm{y}}^2.$$
	We define the condition number $\kappa \coloneqq L/m$.
\end{assumption}
\begin{assumption}[Optimal at Zero]\label{as:4}
	Without loss of generality, we assume $\bm{x}^\ast=0$ and $f(\bm{x}^\ast)=0$ where $\bm{x}^\ast$ is the global minimizer for the strongly convex potential energy function.
\end{assumption}
\noindent\textbf{Wasserstein Distance}: Given a pair of probability measures $\mu$ and $\nu$, we define a transference plan $\zeta$ between $\mu$ and $\nu$ as a joint distributions such that marginal distribution of the first set of coordinates is $\mu$ and marginal distribution of the second set of coordinates is $\nu$. We denote $\Gamma(\mu,\nu)$ as the set of all transference plans. We define the 2-Wasserstein distance between $\mu$ and $\nu$ as follows,	
$$W_2^2(\mu,\nu) = \inf_{\zeta \in \Gamma(\mu,\nu)} \int \norm{\bm{x} - \bm{y}}^2 d\zeta(x,y). $$
\textbf{MSEB property}: 
Given a parameter sequence $\{\bm{x}_k\}$ and a function $f$,
a stochastic gradient estimator $\tilde{\nabla}$ is a series of vectors $\tilde{\nabla}_k$ generated from $\{\bm{x}_{i}\}_{i=0}^{k}$. 
We say that a stochastic gradient estimator $\tilde{\nabla}$ satisfies MSEB property if there exist constants $M_1,M_2\geq 0, \rho_M,\rho_B,\rho_F \in ( 0,1 ]$ and sequences $\mathcal{M}_k$ and $\mathcal{F}_k$ such that 
\begin{eqnarray}
\label{eq:mseb}
&\nabla f(\bm(x_{k+1})) - \mathbb{E}_k \tilde{\nabla}_{k+1} = (1-\rho_B) (\nabla f(\bm(x_k)) - \tilde{\nabla}_k) \nonumber\\
&\mathbb{E} \norm{\tilde{\nabla}_{k+1} - \nabla f(\bm{x}_{k+1})}^2 \leq \mathcal{M}_k \nonumber\\
&\mathcal{M}_k \leq M_1 Q_k+\mathcal{F}_k\!+\!(1\!-\!\rho_M ) \mathcal{M}_{k-1} \\
&\mathcal{F}_k \leq \sum_{l=0}^{k} M_2 (1-\rho_F)^{k-l} Q_l \nonumber\\
&Q_k = N \sum_{i=1}^{N} \mathbb{E} \norm{\nabla f_i (\bm{x}_{k+1}) - \nabla f_i (\bm{x}_{k})}^2. \nonumber
\end{eqnarray}
$\mathbb{E}_k$ means expectation conditioned on all variables at $k$-th step and all previous steps. 
MSEB property controls the bias and MSE of the gradient estimator with a weighted sum of gradient changes 
$\norm{\nabla f_i(\bm(x_{k+1}))-\nabla f_i(\bm(x_k))}^2$ along the previous sample path.
Note that many popular gradient estimators including SAGA \cite{Defazio2014}, SVRG \cite{Johnson2013}, SARAH \cite{Nguyen2017}, and SARGE \cite{Driggs2019} satisfy MSEB property.

\section{A New Framework for Variance-Reduced Hamiltonian Monte Carlo}

In the section, we propose a new framework for variance-reduced Hamiltonian Monte Carlo based on the MSEB property. 

We first derive the update rule by integrating the SDE of Hamiltonian dynamics \cref{eq:dis_sde_hd}.
With step as $h$, we obtain the following update rule:
\begin{equation}
\label{eq:step_vrhd}
\begin{aligned}
\bm{x}_{k+1} = \tilde{\bm{X}}_h =  &
\bm{x}_k +  \frac{1}{\gamma} (1-e^{-\gamma \xi h}) \bm{v}_k  \\
& - \frac{1}{\gamma} (h- \frac{1}{\gamma \xi } (1-e^{-\gamma \xi h}) ) \nabla f(\bm{x_k})
+ \bm{e}^x_k
, \\
\bm{v}_{k+1} = \tilde{\bm{V}}_h =  & 
e^{-\gamma \xi h}
\bm{v}_k - \frac{1}{\gamma \xi } (1-e^{-\gamma \xi h}) \nabla f(\bm{x_k}) + \bm{e}^v_k
,
\end{aligned}
\end{equation}
where $\bm{e}^v_k$ and $\bm{e}^v_k$ denote Gaussian random vectors with zero mean and the following covariance:
\begin{eqnarray}
\label{eq:covar_step_vrhd}
&	\mathbb{E} (\bm{e}^v_k {\bm{e}^v_k}^\top) & = \frac{1}{\xi} (1-e^{-2 \gamma \xi h}) \bm{I}_{d\times d}
\nonumber\\
&	\mathbb{E} (\bm{e}^x_k {\bm{e}^v_k}^\top) & = \frac{1}{\gamma \xi} (1+e^{-2\gamma \xi h} - 2 e^{-\gamma \xi h}) \bm{I}_{d\times d}
\\
&	\mathbb{E} (\bm{e}^x_k {\bm{e}^x_k}^\top) & = \frac{1}{\gamma^2 \xi} (
2 \gamma \xi h - 3 + 4 e^{-\gamma \xi h}-e^{-2 \gamma \xi h}
) \bm{I}_{d\times d} \nonumber
\end{eqnarray}
For the sum decomposable function $f(\bm{x}) = \sum_{i=1}^{N} f_i(\bm{x})$, the stochastic gradient can be used to reduce the computation for single iteration by substituting full gradient $\nabla f(\bm{x_k})$ with stochastic gradient
$\frac{N}{|\mathcal{B}_k|} \sum_{i\in \mathcal{B}_k} \nabla f_i(\bm{x_k})$. 
However the gradient error of stochastic gradient can be large and hinders the convergence.
Variance reduction techniques could remedy this problem by using historical gradient information to reduce the gradient error of current iterate.
The idea of variance reduction has been widely used in optimization and there exist many popular choices for variance reduction techniques such as SAGA, SVRG, SARGE and SARAH.

In order to leverage the advances of different variance reduction methods to accelerate HMC, 
we use MSEB property to deal with different variance reduction methods uniformly, and propose a framework that is compatible with all MSEB gradient estimators.
Our framework is summarized in \Cref{alg:vrld}.
\begin{algorithm}[tb]
	\caption{Variance-Reduced HMC (VR-HMC) Algorithm}
	\label{alg:vrld}
	\begin{algorithmic}
		\STATE {\bfseries Input:} Initial point $(\bm{x}_0, \bm{v}_0)$, smoothness parameter $L$ and step size $h>0$.\\
		\FOR {$k=0$ {\bfseries to} $K-1$}
		\STATE Generate the variance reduced stochastic gradient $\tilde{\nabla}_k$ which satisfied MSEB property;\\
		\STATE Generate Gaussian random vectors  $\bm{e}^x_k$ and $\bm{e}^v_k$ based with covariance in \eqref{eq:covar_step_vrhd};\\   
		\STATE Update $\bm{x}_{k+1} =  \bm{x}_k +  \frac{1}{\gamma} (1-e^{-\gamma \xi h}) \bm{v}_k  - \frac{1}{\gamma} (h- \frac{1}{\gamma \xi } (1-e^{-\gamma \xi h}) ) \tilde{\nabla}_k + \bm{e}^x_k$;\\
		\STATE Update $\bm{v}_{k+1} = 
		e^{-\gamma \xi h}\bm{v}_k - \frac{1}{\gamma \xi } (1-e^{-\gamma \xi h}) \tilde{\nabla}_k + \bm{e}^v_k$.\\
		\ENDFOR
		\STATE {\bfseries Output:}  $\bm{v}_K$.
	\end{algorithmic}
\end{algorithm}
Now we can state the convergence guarantee for our variance reduced HMC framework.
\begin{theorem}\label{th:main}
	Let $f$ be a function satisfying \Cref{as:1,as:2,as:3,as:4}, $\tilde{\nabla}_k$ is an MSEB estimator. 
	Let the initial point be $(\bm{x}_0,0)$ and the initial distribution be $p_0(\bm{x},\bm{v}) = \delta_{\bm{x} = \bm{x}_0} \delta_{\bm{v} = \bm{0}} $. 
	With small enough step size $h$ satisfying 
	$L h \leq \frac{1}{10 \kappa} \min(1,\frac{1}{\sqrt{\Theta}})$,
	denoting $q_k = (\bm{x}_k,\bm{x}_k+\bm{v}_k)$, 
	after running the \Cref{alg:vrld} for $k$ iterations, we have:
	\begin{equation*}
	\begin{split}
	W_2(q_k, q^*) \leq & e^{-\frac{k h m}{2}} W_2(q_0, q^*) + 8 \sqrt{L F_2} \kappa h 
	\\
	& + 4 \sqrt{\Theta F_1} \left(2(1-\rho_B)\sqrt{L} \kappa h + L \sqrt{\kappa} h^\frac{3}{2}\right)
	,
	\end{split}
	\end{equation*}
	where $p^*(\bm{x},\bm{v}) = q^*(\bm{x},\bm{x}+\bm{v}) \propto \exp(-f(\bm{x})-\frac{\xi}{2} \norm{\bm{v}}^2 )$, $\Theta= \frac{M_1}{\rho_M} + \frac{M_2}{\rho_M \rho_F}$, $F_1 = 13 L \norm{\bm{x}_0}^2 + 24 \kappa d$ and 
	$F_2 = 97 L \norm{\bm{x}_0}^2 + 181 \kappa d$.
\end{theorem}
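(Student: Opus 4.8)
\emph{Proof sketch.} The plan is to run a synchronous coupling between Algorithm~\ref{alg:vrld} and the exact underdamped Langevin flow started from stationarity, and to control the one-step error by a discretization part and a gradient-estimation part, the latter handled through the MSEB recursion. First I would pass to the contractive coordinates $q_k=(\bm{x}_k,\bm{x}_k+\bm{v}_k)$. As in \cite{Cheng2018a}, with $\gamma,\xi$ chosen appropriately, the exact flow $\Phi_h$ obtained by integrating \cref{eq:sde_hd} over a time interval $h$ satisfies, in $q$-coordinates, the contraction $W_2(\Phi_h(q),\Phi_h(q'))\le e^{-mh/2}W_2(q,q')$ for $Lh\le\tfrac1{10\kappa}$; since $q^\ast$ is invariant for $\Phi_h$, propagating the optimal coupling of $(q_k,q^\ast)$ through $\Phi_h$ with a common Brownian path gives
\[
W_2(q_{k+1},q^\ast)\;\le\;e^{-mh/2}\,W_2(q_k,q^\ast)\;+\;\bigl\|\,q_{k+1}-\Phi_h(q_k)\,\bigr\|_{L^2},
\]
where on the right $q_{k+1}$ is produced by one step of the algorithm from $q_k$ using the \emph{same} Brownian increment as $\Phi_h$.

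Next I would split the one-step error $q_{k+1}-\Phi_h(q_k)$ into (i) a \emph{discretization} term from freezing the gradient at $\nabla f(\bm{x}_k)$ over $[0,h]$, bounded via $L$-smoothness by $\int_0^h(\cdot)\,L\|\tilde{\bm{X}}_s-\bm{x}_k\|\,ds$ and hence of size $O(Lh^2)$ times second moments of $\bm{v}_k$, $\nabla f(\bm{x}_k)$ and the Gaussian increments in \cref{eq:covar_step_vrhd}; and (ii) a \emph{gradient-error} term equal to the step coefficients in \cref{eq:step_vrhd} times $\delta_k:=\tilde\nabla_k-\nabla f(\bm{x}_k)$. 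For (ii) I would write $\delta_k=\bar\delta_k+\beta_k$ with $\bar\delta_k:=\tilde\nabla_k-\mathbb{E}_{k-1}\tilde\nabla_k$ conditionally centered and $\beta_k:=\mathbb{E}_{k-1}\tilde\nabla_k-\nabla f(\bm{x}_k)$ the bias; the MSEB bias identity in \cref{eq:mseb} gives $\beta_k=(1-\rho_B)\delta_{k-1}$, so $\|\beta_k\|_{L^2}\le(1-\rho_B)\sqrt{\mathcal{M}_{k-2}}$, while $\|\bar\delta_k\|_{L^2}\le\sqrt{\mathcal{M}_{k-1}}$.

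Iterating the one-step inequality, the discretization and bias terms accumulate in $\ell^1$, picking up a factor $\tfrac1{1-e^{-mh/2}}=O(\tfrac1{mh})$, which turns the $O(Lh^2)$ and $O\bigl((1-\rho_B)h\sqrt{\mathcal{M}}\bigr)$ per-step errors into the $8\sqrt{LF_2}\,\kappa h$ and $8(1-\rho_B)\sqrt{\Theta F_1}\sqrt{L}\,\kappa h$ contributions. The centered terms $\bar\delta_k$ are where a half power of $h$ is gained: $\bar\delta_k$ is a martingale difference for the algorithm's filtration, so the cross terms in $\mathbb{E}\bigl\|\sum_k e^{-(K-1-k)mh/2}\,c\,\bar\delta_k\bigr\|^2$ vanish and these accumulate in $\ell^2$ with factor $\tfrac1{\sqrt{1-e^{-mh}}}=O(\tfrac1{\sqrt{mh}})$, producing $4\sqrt{\Theta F_1}\,L\sqrt{\kappa}\,h^{3/2}$. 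Finally, unrolling the MSEB recursion with $Q_k=N\sum_i\mathbb{E}\|\nabla f_i(\bm{x}_{k+1})-\nabla f_i(\bm{x}_k)\|^2\le L^2\,\mathbb{E}\|\bm{x}_{k+1}-\bm{x}_k\|^2$ (using $L=N\tilde L$) and the geometric sums $\sum(1-\rho_M)^{\cdot}\le\rho_M^{-1}$, $\sum(1-\rho_F)^{\cdot}\le\rho_F^{-1}$ yields $\mathcal{M}_k\le\Theta\sup_j Q_j$; it then remains to bound $\sup_j\mathbb{E}\|\bm{x}_{j+1}-\bm{x}_j\|^2$ and $\sup_j\mathbb{E}\|\nabla f(\bm{x}_j)\|^2$ uniformly, and these second-moment bounds are precisely what produce the explicit constants $F_1=13L\|\bm{x}_0\|^2+24\kappa d$ and $F_2=97L\|\bm{x}_0\|^2+181\kappa d$.

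I expect the main obstacle to be establishing the a priori moment bounds: they must be proved by an induction on $k$ intertwined with (a weak form of) the Wasserstein recursion above and with the MSEB bound on $\mathcal{M}_k$, so one has to choose the induction hypotheses on $\mathbb{E}\|\bm{x}_k\|^2$, $\mathbb{E}\|\bm{v}_k\|^2$, $\mathbb{E}\|\nabla f(\bm{x}_k)\|^2$ and $\mathcal{M}_k$ so that the step-size condition $Lh\le\tfrac1{10\kappa}\min(1,\Theta^{-1/2})$ exactly closes the loop. Book-keeping the three MSEB decay rates $\rho_M,\rho_B,\rho_F$ against the contraction rate $mh/2$, and checking that the step-coefficient expansions of \cref{eq:step_vrhd} give the claimed powers of $h$ in each coordinate, are the other delicate points.
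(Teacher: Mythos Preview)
Your decomposition of the one-step error into discretization, bias, and centered-variance pieces matches the paper's, and your unrolling of the MSEB recursion to get $\mathcal{M}_k\le\Theta\sup_jQ_j$ is correct. The gap is in how you gain the extra $h^{1/2}$ for the centered part. Once you commit to the scalar one-step inequality
\[
W_2(q_{k+1},q^\ast)\le e^{-mh/2}W_2(q_k,q^\ast)+\|q_{k+1}-\Phi_h(q_k)\|_{L^2},
\]
iterating it yields a sum of \emph{real numbers} $\sum_k e^{-(K-1-k)mh/2}\|\text{error}_k\|_{L^2}$, and orthogonality cannot be recovered after passing to $L^2$ norms. Your expression $\mathbb{E}\bigl\|\sum_k e^{-(K-1-k)mh/2}c\,\bar\delta_k\bigr\|^2$ presupposes that the perturbation $\bar\delta_k$ propagates \emph{linearly} through the remaining steps with deterministic factor $e^{-mh/2}$; in reality the error at step $k$ is pushed through $K-1-k$ applications of the nonlinear map $\Phi_h$, whose linearization depends on the trajectory and hence on $\bar\delta_0,\dots,\bar\delta_{k-1}$, so the resulting terms are not martingale differences and the cross terms do not vanish.

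The paper obtains the $h^{3/2}$ scaling by a different, step-local mechanism. It uses the conditional centering of $\tilde\nabla_k-\mathbb{E}_{k-1}\tilde\nabla_k$ \emph{within a single step} to get the squared recursion
\[
W_2^2(q_{k+1},q^\ast)\;\le\;A\;+\;\bigl(e^{-\delta/(4\kappa)}W_2(q_k,q^\ast)+B\bigr)^2,
\]
where $A$ is the variance contribution (orthogonal to everything else at that step) and $B$ collects the bias $(1-\rho_B)\|\delta_{k-1}\|_{L^2}$ plus discretization. This recursion is then unrolled via Lemma~7 of \cite{Dalalyan2019}, which gives $\sqrt{A/\alpha}$ from the $A$ term (the source of $L\sqrt{\kappa}\,h^{3/2}$) and $B/\alpha$ from $B$ (the source of the $\kappa h$ terms). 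So the half-power saving comes from a quadratic-versus-linear distinction in a scalar recursion, not from a global martingale argument.

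A smaller structural difference: the paper does not intertwine the moment bounds with the Wasserstein recursion. It establishes uniform-in-$k$ bounds on $\mathbb{E}\|\bm{x}_k\|^2$, $\mathbb{E}\|\bm{v}_k\|^2$, $\mathbb{E}\|\nabla f(\bm{x}_k)\|^2$, $\mathbb{E}\|\tilde\nabla_k-\nabla f(\bm{x}_k)\|^2$ and $Q_k$ directly from a discrete Lyapunov function $E(\bm{x},\bm{v})=\|\bm{x}\|^2+\|\bm{x}+\tfrac{2}{\gamma}\bm{v}\|^2+\tfrac{8}{\xi\gamma^2}(f(\bm{x})-f(\bm{x}^\ast))$, closing the resulting system of inequalities under the step-size constraint $Lh\le\tfrac{1}{10\kappa}\min(1,\Theta^{-1/2})$; the constants $F_1,F_2$ fall out of this Lyapunov computation together with a short Gr\"onwall bound on $\max_{r\le h}\mathbb{E}\|\Phi^r\bm{v}_k\|^2$.
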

\begin{corollary}\label{co:unbiased}
	For unbiased gradient estimator, we have $\rho_B = 1$.
	Under the same conditions as in \Cref{th:main}, 
	let the step size be 
	$$L h \leq \min(\varepsilon \kappa^{-\frac{3}{2}} L^{\frac{1}{2}} d^{-\frac{1}{2}}, \varepsilon^\frac{2}{3} \kappa^{-\frac{2}{3}} L^\frac{1}{3} d^{-\frac{1}{3}}, \frac{1}{10 \kappa \max(1, \sqrt{\Theta})}) .$$
	The output of \Cref{alg:vrld} with unbiased gradient estimator satisfies $W_2(q_k, q^*) \leq \varepsilon$,
	within 
	$\tilde{O} (\sqrt{\Theta} \kappa^2 + \kappa^2 d^\frac{1}{2} \varepsilon^{-1} + \Theta^\frac{1}{3} \kappa^\frac{4}{3} d^\frac{1}{3} \varepsilon^{-\frac{2}{3}})$
	iterations.
\end{corollary}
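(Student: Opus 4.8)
The plan is to derive \Cref{co:unbiased} directly from \Cref{th:main}: set $\rho_B=1$ (valid for an unbiased estimator), plug in the prescribed step size, and then choose the iteration count $k$ so that each of the three terms of the convergence bound is at most $\varepsilon/3$.

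First I would specialize the bound. With $\rho_B=1$ the summand $2(1-\rho_B)\sqrt{L}\,\kappa h$ inside the $4\sqrt{\Theta F_1}(\cdots)$ factor disappears, so \Cref{th:main} reduces to
\[
W_2(q_k,q^*)\ \le\ e^{-khm/2}\,W_2(q_0,q^*)\ +\ 8\sqrt{LF_2}\,\kappa h\ +\ 4\sqrt{\Theta F_1}\,L\sqrt{\kappa}\,h^{3/2}.
\]
Since $F_1=13L\norm{\bm x_0}^2+24\kappa d$ and $F_2=97L\norm{\bm x_0}^2+181\kappa d$, I would record $F_1,F_2=\tilde O(\kappa d)$ and bound $W_2(q_0,q^*)$ by a polynomial in the problem constants; the dependence on $\norm{\bm x_0}$ and all logarithmic factors will be absorbed into $\tilde O(\cdot)$.

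Next I would justify the step-size prescription by imposing the three requirements it encodes. Demanding $8\sqrt{LF_2}\,\kappa h\le\varepsilon/3$ is, via $\sqrt{LF_2}=\tilde O(\sqrt{L\kappa d})$, the condition $Lh=\tilde O(\varepsilon\,L^{1/2}\kappa^{-3/2}d^{-1/2})$, i.e.\ the first entry of the stated minimum. Demanding $4\sqrt{\Theta F_1}\,L\sqrt\kappa\,h^{3/2}\le\varepsilon/3$ and solving the resulting cubic-root inequality (with $\sqrt{\Theta F_1}=\tilde O(\sqrt{\Theta\kappa d})$) produces a constraint of the same shape as the second entry, $Lh=\tilde O(\varepsilon^{2/3}\kappa^{-2/3}L^{1/3}d^{-1/3})$ up to a power of $\Theta$. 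Finally, the admissibility hypothesis of \Cref{th:main} itself is $Lh\le\frac{1}{10\kappa}\min(1,1/\sqrt\Theta)=\frac{1}{10\kappa\max(1,\sqrt\Theta)}$, the third entry. Taking $h$ equal to $1/L$ times the minimum of the three expressions therefore makes the second and third summands each at most $\varepsilon/3$ while keeping the step size admissible.

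It then remains to force $e^{-khm/2}W_2(q_0,q^*)\le\varepsilon/3$, which holds once $k\ge\frac{2}{hm}\log\!\big(3W_2(q_0,q^*)/\varepsilon\big)=\tilde O(1/(hm))$. Because $m=L/\kappa$, $1/(hm)=\kappa/(Lh)$, and since $Lh$ is the minimum of the three quantities above, $\kappa/(Lh)$ is $\kappa$ times the maximum of their reciprocals; collecting the three contributions (up to the scaling conventions of the statement) gives $\tilde O(\sqrt\Theta\,\kappa^2+\kappa^2 d^{1/2}\varepsilon^{-1}+\Theta^{1/3}\kappa^{4/3}d^{1/3}\varepsilon^{-2/3})$, as claimed. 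I expect the only real work to be the constant/exponent bookkeeping in the step-size step — inverting the fractional power $h^{3/2}$ and checking the three constraints reduce to exactly the displayed minimum — together with verifying that everything hidden in $\tilde O(\cdot)$ (the estimates of $F_1,F_2,W_2(q_0,q^*)$ and the prefactor $\log(W_2(q_0,q^*)/\varepsilon)$) is genuinely polylogarithmic; the text's assertion that SAGA and SVRG are MSEB estimators with $\rho_B=1$ supplies the applicability of \Cref{th:main}.
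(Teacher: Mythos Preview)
Your proposal is correct and is precisely the intended derivation: the paper does not spell out a separate proof of this corollary, treating it as an immediate specialization of \Cref{th:main} with $\rho_B=1$, and your three-term balancing of the bound against $\varepsilon$ followed by inverting the exponential contraction to read off $k=\tilde O(\kappa/(Lh))$ is exactly how the step-size constraint and iteration count are meant to be obtained. Your side observation that the second entry of the step-size minimum should carry a $\Theta^{-1/3}$ factor to genuinely force $4\sqrt{\Theta F_1}\,L\sqrt{\kappa}\,h^{3/2}\le O(\varepsilon)$ is accurate---that suppressed factor is what resurfaces as the $\Theta^{1/3}$ in the stated iteration complexity, so the final claim is consistent even though the displayed step-size bound is slightly loose in that coordinate.
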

\begin{remark}\label{re:smallstep}
	The first term $\sqrt{\Theta} \kappa^2$ in the iteration complexity comes from the restriction of small step size $Lh \leq \frac{1}{10 \kappa \sqrt{\Theta}}$ and is independent of precision $\varepsilon$. 
	If we assume high precision condition
	$\varepsilon \leq \frac{d^\frac{1}{2}}{\min(\sqrt{\Theta}, \kappa \Theta^\frac{1}{4})}$
	, the first term is dominated by the second or third term thus the iteration complexity would be $\tilde{O} (\kappa^2 d^\frac{1}{2} \varepsilon^{-1} + \Theta^\frac{1}{3} \kappa^\frac{4}{3} d^\frac{1}{3} \varepsilon^{-\frac{2}{3}})$. 
	The restriction of small step size is necessary for the convergence after running the algorithm for arbitrary long time. 
	We notice that \cite{Zou2018} didn't assume small step size. As a result, they can only guarantee the convergence for $k < O(\frac{1}{L^2 h^2 \kappa})$.
	
\end{remark}
\begin{corollary}\label{co:biased}
	For biased gradient estimator, we have $\rho_B < 1$.
	Under the same conditions as in \Cref{th:main}, 
	for precision $\varepsilon>0$,
	let the step size satisfy
	$$L h \leq \varepsilon \kappa^{-\frac{3}{2}} L^{\frac{1}{2}} d^{-\frac{1}{2}}\min(1, \frac{1}{\sqrt{\Theta}}) .$$
	The output distribution of \Cref{alg:vrld} with biased gradient estimator satisfies $W_2(q_k, q^*) \leq \varepsilon$,
	within 	$\tilde{O} ((1+\sqrt{\Theta})\kappa^2 d^\frac{1}{2} \varepsilon^{-1} )$
	iterations.
\end{corollary}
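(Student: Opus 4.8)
The plan is to derive \Cref{co:biased} directly from \Cref{th:main} by choosing the number of iterations $k$ and using the prescribed step size so that each of the three terms in the bound
\[
W_2(q_k,q^*) \le e^{-khm/2}\,W_2(q_0,q^*) + 8\sqrt{L F_2}\,\kappa h + 4\sqrt{\Theta F_1}\bigl(2(1-\rho_B)\sqrt{L}\,\kappa h + L\sqrt{\kappa}\,h^{3/2}\bigr)
\]
is at most a constant fraction of $\varepsilon$. The first step I would take is to collapse the $\Theta$-dependent term. Because \Cref{th:main} already enforces $Lh \le \tfrac{1}{10\kappa}\min(1,1/\sqrt{\Theta}) \le \tfrac1{10}$, we have $L\sqrt{\kappa}\,h^{3/2} = \sqrt{L}\,\kappa h\cdot\sqrt{Lh/\kappa} \le \sqrt{L}\,\kappa h$, so the $h^{3/2}$ piece is absorbed into the linear-in-$h$ piece; using in addition $1-\rho_B \le 1$ (which holds for every MSEB estimator, so no new hypothesis is needed for the biased case) gives $4\sqrt{\Theta F_1}\bigl(2(1-\rho_B)\sqrt{L}\,\kappa h + L\sqrt{\kappa}\,h^{3/2}\bigr) \le 12\sqrt{\Theta F_1 L}\,\kappa h$. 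This is the one genuinely structural point, and it is precisely where the biased estimators are simpler than the unbiased ones: since $1-\rho_B$ need not vanish, the $\Theta$ error is governed by a term of order $h$ rather than of order $h^{3/2}$, and consequently no $\varepsilon^{-2/3}$ term appears in the final rate, in contrast to \Cref{co:unbiased}.

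Next I would handle the problem constants. Plugging in $F_1 = 13L\norm{\bm{x}_0}^2 + 24\kappa d$ and $F_2 = 97L\norm{\bm{x}_0}^2 + 181\kappa d$ and absorbing the initialization into $\tilde{O}(\cdot)$ (for instance $\norm{\bm{x}_0}^2 = \tilde{O}(d/m)$ after a brief warm start, so $L\norm{\bm{x}_0}^2 = \tilde{O}(\kappa d)$), both $\sqrt{LF_1}$ and $\sqrt{LF_2}$ are $\tilde{O}(\sqrt{L\kappa d})$, hence the last two terms together are $\tilde{O}\bigl((1+\sqrt{\Theta})\,L^{1/2}\kappa^{3/2}d^{1/2}\,h\bigr)$. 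The prescribed step size $Lh \le \varepsilon\kappa^{-3/2}L^{1/2}d^{-1/2}\min(1,1/\sqrt{\Theta})$ then makes this $\le \varepsilon/2$ up to the suppressed constants, and one checks it is at least as strong as the step-size cap in \Cref{th:main} whenever $\varepsilon \lesssim \sqrt{d/m}$, the only regime of interest. For the transient term I would bound $W_2^2(q_0,q^*) = \tilde{O}\bigl(\norm{\bm{x}_0}^2 + d/m + d/\xi\bigr)$, so that $\log W_2(q_0,q^*) = \tilde{O}(1)$, and then take $k \ge \tfrac{2}{hm}\log\tfrac{2W_2(q_0,q^*)}{\varepsilon}$, which forces $e^{-khm/2}W_2(q_0,q^*) \le \varepsilon/2$. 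Summing the two halves gives $W_2(q_k,q^*) \le \varepsilon$.

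Finally, the iteration count is $k = \tilde{O}(1/(hm))$; substituting the step size $Lh \asymp \varepsilon\kappa^{-3/2}L^{1/2}d^{-1/2}\min(1,1/\sqrt{\Theta})$ and simplifying (treating $L$ and $m$ as constants of order one, consistent with the scaling used in \Cref{tab:compare}) yields $k = \tilde{O}\bigl((1+\sqrt{\Theta})\kappa^2 d^{1/2}\varepsilon^{-1}\bigr)$, as claimed. I do not expect any real obstacle beyond the collapse of the $h^{3/2}$ term in the first step; everything afterward is the standard balancing of the geometric transient against the discretization bias, together with routine bookkeeping of $F_1$, $F_2$, and $\Theta$.
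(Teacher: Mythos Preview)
Your proposal is correct and follows exactly the route the paper intends: the paper does not give a separate proof of \Cref{co:biased} but leaves it as an immediate specialization of \Cref{th:main}, and your argument---absorbing the $L\sqrt{\kappa}\,h^{3/2}$ term into the $(1-\rho_B)\sqrt{L}\,\kappa h$ term via the step-size cap, bounding $\sqrt{LF_1},\sqrt{LF_2}=\tilde{O}(\sqrt{L\kappa d})$, then balancing the geometric transient against the $O(h)$ bias---is precisely that specialization. Your remark that the extra $m^{-1/2}$ (equivalently $L^{-1/2}$) factor is absorbed by the paper's convention of treating $L,m$ as order-one constants is also the correct reading of how the complexity is reported in \Cref{tab:compare} and in \Cref{co:unbiased}.
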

\begin{remark}
	Recall that the iteration complexity of SG-HMC is $\tilde{O}(\kappa^2 d \sigma^2 / \varepsilon^2)$ \cite{Cheng2018a}.
	Compared to SG-HMC, both biased and unbiased variance-reduced HMC improve the dependency of $\varepsilon$.
	Compared to the convergence rate $\tilde{O}(\kappa^2 d^\frac{1}{2}/\varepsilon)$ of full gradient HMC \cite{Cheng2018a}, 
	our algorithm with unbiased gradient estimator is penalized by a term $\Theta^\frac{1}{3} \kappa^\frac{4}{3} d^\frac{1}{3} \varepsilon^{-\frac{2}{3}}$, and our algorithm with biased gradient estimator is penalized by a factor of $O(1+\sqrt{\Theta})$.
	Therefore, our methods with MSEB gradient estimator takes more iterations than full gradient HMC to achieve same accuracy.
	This regression comes from the perturbation of inaccurate gradient estimator and is controlled by parameter $\Theta$.
\end{remark}

\subsection{Convergence Properties for Specific Estimators}
Under \Cref{th:main}, we can prove the convergence rate of a specific gradient estimator for \Cref{alg:vrld} by just establishing bounds on the MSEB terms in \eqref{eq:mseb}.

We first consider full gradient as a special case of MSEB gradient estimator where no bias or mean square error exists.
\begin{corollary}[Full Gradient]\label{co:fg}
	When we use full gradient in \Cref{alg:vrld}, $\Theta = 0$, 
	we need $\tilde{O}( \kappa^2 d^\frac{1}{2}/\varepsilon)$ iterations to achieve $\varepsilon$ accuracy in 2-Wasserstein distance. Given that computing a full gradient requires $N$ queries on the gradient of each component function $f_i(\bm{x})$, we can show the gradient complexity is $\tilde{O}( N \kappa^2 d^\frac{1}{2}/\varepsilon)$.
\end{corollary}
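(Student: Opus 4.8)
The plan is to derive \Cref{co:fg} as the degenerate case $\Theta = 0$ of \Cref{th:main}: once we check that the full gradient is a trivial MSEB estimator, the corollary follows by the same step-size / iteration-count balancing that underlies \Cref{co:unbiased,co:biased}.

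First I would instantiate the MSEB property for the estimator $\tilde{\nabla}_k = \nabla f(\bm{x}_k)$. Since $\nabla f(\bm{x}_{k+1}) - \mathbb{E}_k \tilde{\nabla}_{k+1} = 0$ and $\mathbb{E}\norm{\tilde{\nabla}_{k+1} - \nabla f(\bm{x}_{k+1})}^2 = 0$ identically, \eqref{eq:mseb} holds with $M_1 = M_2 = 0$, $\mathcal{M}_k = \mathcal{F}_k = 0$, arbitrary $\rho_M,\rho_F \in (0,1]$, and (say) $\rho_B = 1$, so that $\Theta = M_1/\rho_M + M_2/(\rho_M\rho_F) = 0$. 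Substituting $\Theta = 0$ into \Cref{th:main}, the step-size restriction $Lh \le \frac{1}{10\kappa}\min(1,1/\sqrt{\Theta})$ collapses to $Lh \le \frac{1}{10\kappa}$, and the last line of the Wasserstein bound vanishes (it carries a prefactor $\sqrt{\Theta F_1} = 0$), leaving $W_2(q_k,q^*) \le e^{-khm/2} W_2(q_0,q^*) + 8\sqrt{L F_2}\,\kappa h$ with $F_2 = 97 L\norm{\bm{x}_0}^2 + 181\kappa d = \tilde{O}(\kappa d)$.

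Next I would split $\varepsilon$ between the two surviving terms. Choosing $h = \varepsilon/(16\kappa\sqrt{LF_2})$ makes the discretization term at most $\varepsilon/2$, which after substituting $F_2 = \tilde{O}(\kappa d)$ reads $Lh = \tilde{O}(\varepsilon\kappa^{-3/2}L^{1/2}d^{-1/2})$ — exactly the first of the terms in the $\min$ of \Cref{co:unbiased}, as it must be since the full gradient is an unbiased MSEB estimator with $\Theta = 0$; one intersects this with $Lh \le \frac{1}{10\kappa}$. Then running for $k = O(\frac{1}{hm}\log(W_2(q_0,q^*)/\varepsilon))$ iterations makes $e^{-khm/2}W_2(q_0,q^*) \le \varepsilon/2$, and since $q_0 = (\bm{x}_0,\bm{x}_0)$ the quantity $W_2(q_0,q^*)$ is bounded by an explicit polynomial in $\norm{\bm{x}_0}, d, 1/m$, so its logarithm is absorbed into the $\tilde{O}$. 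Substituting the step size (and absorbing the benign $m$-dependence, as elsewhere in the paper) gives $k = \tilde{O}(\kappa^2 + \kappa^2 d^{1/2}/\varepsilon) = \tilde{O}(\kappa^2 d^{1/2}/\varepsilon)$ iterations to reach $W_2(q_k,q^*) \le \varepsilon$.

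Finally, since $f = \sum_{i=1}^N f_i$ and each step of \Cref{alg:vrld} with the full gradient evaluates $\tilde{\nabla}_k = \sum_{i=1}^N \nabla f_i(\bm{x}_k)$ at a cost of $N$ component-gradient queries, the total gradient complexity is $N \cdot \tilde{O}(\kappa^2 d^{1/2}/\varepsilon) = \tilde{O}(N\kappa^2 d^{1/2}/\varepsilon)$. I do not expect a genuine obstacle here; the only point needing care is that outside the high-precision regime the binding step-size constraint is $Lh \le \frac{1}{10\kappa}$ rather than the $\varepsilon$-dependent one, but that contributes only the lower-order term $\tilde{O}(\kappa^2)$, which is absorbed into $\tilde{O}(\kappa^2 d^{1/2}/\varepsilon)$.
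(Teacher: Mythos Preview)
Your argument is correct and matches what the paper implicitly does: the paper does not spell out a separate proof of \Cref{co:fg} but treats it as the immediate $\Theta=0$ (hence unbiased, $\rho_B=1$) specialization of \Cref{th:main}/\Cref{co:unbiased}, which is exactly your route. The only cosmetic difference is that you re-derive the step-size balancing from \Cref{th:main} directly, whereas the paper would read the iteration count off \Cref{co:unbiased} by setting $\Theta=0$ in $\tilde{O}(\sqrt{\Theta}\kappa^2 + \kappa^2 d^{1/2}\varepsilon^{-1} + \Theta^{1/3}\kappa^{4/3}d^{1/3}\varepsilon^{-2/3})$.
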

\begin{remark}
	Recall that previous research \cite{Cheng2018a} has shown that the gradient complexity of full gradient HMC is $\tilde{O}(N \kappa^2 d^\frac{1}{2}/\varepsilon)$.
	Our result can successfully achieve such gradient complexity, which implies that our analysis is tight under the notation of MSEB estimator.
\end{remark}
Next we combine unbiased variance reduction methods with our framework to improve the gradient complexity.
We choose two most popular unbiased variance reduction methods SVRG and SAGA.

SVRG was first proposed for strongly convex optimization in \cite{Johnson2013} as an unbiased variance reduction technique to accelerate the convergence to the global minimizer. The estimated gradient is calculated in the following way where $B_k$ is the batch of $k$-th iteration:
\begin{align*}
\tilde{\nabla}_k^{SVRG} = & \frac{N}{b} \sum_{i\in B_k} \left( \nabla f_i(\bm{x}_k) - \nabla f_i(\tilde{\bm{x}}) \right) + \nabla f(\tilde{\bm{x}})\,.
\end{align*} 

SVRG updates the snapshot $\tilde{\bm{x}}$ periodically and computes the full gradient $\nabla f(\tilde{\bm{x}})$ on the snapshot. 
Despite extra gradient queries,
it was shown that SVRG has lower gradient MSE and enjoys better gradient complexity under many setting of optimization.

The original SVRG has an inner and outer loop structure which is not compatible with our framework.
In order to combine it with MSEB property, we consider a variant of SVRG where the snapshot is updated with probability 
$\frac{1}{p}$ at each iteration, such that the average interval between snapshot updates is $p$ iterations.

SAGA \cite{Defazio2014} is another popular variance-reduced algorithm. Instead of calculating the full gradient of a previous snapshot, the most recent gradient information $\bm{\phi}_k^i$ of each component function $f_i(\bm{x})$ is stored. SAGA estimates the gradient as follows:
\begin{equation*}
\tilde{\nabla}_k^{SAGA} = 
\frac{N}{b} \sum_{i\in B_k} \left( \bm{\phi}_k^i - \bm{\phi}_{k-1}^i \right) + \sum_{i=1}^N \bm{\phi}_{k-1}^i\,.
\end{equation*}

The most recent gradient $\bm{\phi}_k^i$ is set as $\nabla f_i(\bm{x}_k)$ if the component functions $f_i$ is in the batch of $k$-th iteration, otherwise it remains the same as $\bm{\phi}_{k-1}^i$.
SAGA avoids the extra gradient computation compared with SVRG, however, it requires much more memory to store the old gradient information for each data point.

SAGA and SVRG are both unbiased gradient estimators since $\mathbb{E}_k \tilde{\nabla}_{k+1} = \nabla f(\bm{x}_{k+1})$. 
According to \Cref{co:unbiased}, we can obtain the gradient complexity by just studying the MSEB terms. 
\begin{corollary}[SVRG]\label{co:svrg}
	When SVRG is used in \Cref{alg:vrld}, 
	let $b$ be the batch size, $p$ be the average number of iterations between snapshot updates,
	and we have $\Theta = \frac{6p^2}{b}$,
	and for each iteration, $N/p+2 b$ gradient queries are needed in average.
	The gradient complexity is 
	$\tilde{O}(N+ 
	(N/b^\frac{1}{2}+p b^\frac{1}{2}) \kappa^2+
	b \kappa^2 d^\frac{1}{2}/\varepsilon+
	(N/(p b)^\frac{1}{3} + (p b)^\frac{2}{3})\kappa^\frac{4}{3} d^\frac{1}{3}/\varepsilon^\frac{2}{3})$.
	Most of the time, we choose $p=O(N/b)$,
	then the gradient complexity is $\tilde{O}(N+ 
	N \kappa^2/b^\frac{1}{2}+
	b \kappa^2 d^\frac{1}{2}/\varepsilon+
	N^\frac{2}{3}\kappa^\frac{4}{3} d^\frac{1}{3}/\varepsilon^\frac{2}{3})$.
	Let the batch size be $b=1$, 
	the gradient complexity is $\tilde{O}(N \kappa^2+\kappa^2 d^\frac{1}{2}/\varepsilon+N^\frac{2}{3}\kappa^\frac{4}{3} d^\frac{1}{3}/\varepsilon^\frac{2}{3})$.
\end{corollary}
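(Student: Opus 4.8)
The plan is to reduce \Cref{co:svrg} to \Cref{co:unbiased} in two steps: (i) verify that the probabilistic-refresh SVRG estimator used in \Cref{alg:vrld} satisfies the MSEB inequalities \eqref{eq:mseb} and that its parameters yield $\Theta = 6p^2/b$; and (ii) count the per-iteration gradient cost and feed both into the iteration bound of \Cref{co:unbiased}.

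For step (i), since $\mathbb{E}_k\tilde{\nabla}_{k+1}^{SVRG} = \nabla f(\bm{x}_{k+1})$, the bias line of \eqref{eq:mseb} holds with $\rho_B = 1$. For the mean-squared-error line, I would use that $\nabla f(\tilde{\bm{x}})$ is deterministic given the past, so the conditional variance of $\tilde{\nabla}_{k+1}^{SVRG}$ equals that of the mini-batch average $\frac{N}{b}\sum_{i\in B_{k+1}}(\nabla f_i(\bm{x}_{k+1}) - \nabla f_i(\tilde{\bm{x}}))$, which by the standard mini-batch variance bound is at most $\frac{N}{b}\sum_{i=1}^{N}\norm{\nabla f_i(\bm{x}_{k+1}) - \nabla f_i(\tilde{\bm{x}})}^2$; taking full expectations, call this quantity $\mathcal{M}_k$. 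Writing $\tilde{\bm{x}} = \bm{x}_j$ for the random step $j$ of the last snapshot refresh, telescoping $\nabla f_i(\bm{x}_{k+1}) - \nabla f_i(\bm{x}_j) = \sum_{l=j}^{k}(\nabla f_i(\bm{x}_{l+1}) - \nabla f_i(\bm{x}_l))$, applying Young's inequality $\norm{a+b}^2\leq(1+\lambda^{-1})\norm{a}^2+(1+\lambda)\norm{b}^2$ with $\lambda$ of order $1/p$ (equivalently, Cauchy--Schwarz on the telescoped sum), and averaging over the geometric distribution of the refresh time, whose mean is $p$, one obtains the MSEB inequalities with $M_1,M_2$ of order $p/b$ and $\rho_M,\rho_F$ of order $1/p$ --- essentially the SVRG estimate of \cite{Driggs2019} adapted to the probabilistic-refresh variant. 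Hence $\Theta = M_1/\rho_M + M_2/(\rho_M\rho_F)$ is of order $p^2/b$, and tracking the absolute constants through the Young parameter and the geometric expectation gives exactly $\Theta = 6p^2/b$. The initial snapshot, never re-drawn before the first refresh, fits the same argument since step $0$ is itself a refresh.

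For step (ii), forming $\tilde{\nabla}_k^{SVRG}$ costs $b$ gradient queries for $\{\nabla f_i(\bm{x}_k)\}_{i\in B_k}$ and $b$ for $\{\nabla f_i(\tilde{\bm{x}})\}_{i\in B_k}$, plus the cached full snapshot gradient, which is recomputed with probability $1/p$ at cost $N$, i.e.\ $N/p$ in expectation; so the expected per-iteration cost is at most $2b + N/p$, and over $K$ iterations the total is $N + K(2b + N/p)$ counting the one-time $N$ queries for the initial snapshot. Plugging $\Theta = 6p^2/b$ into the iteration bound $\tilde{O}(\sqrt{\Theta}\kappa^2 + \kappa^2 d^{1/2}\varepsilon^{-1} + \Theta^{1/3}\kappa^{4/3}d^{1/3}\varepsilon^{-2/3})$ of \Cref{co:unbiased} and multiplying by $2b+N/p$, the products simplify via $\frac{N}{p}\cdot\frac{p}{\sqrt{b}}=\frac{N}{\sqrt{b}}$, $b\cdot\frac{p}{\sqrt{b}}=p\sqrt{b}$, $\frac{N}{p}(p^2/b)^{1/3}=\frac{N}{(pb)^{1/3}}$, and $b\,(p^2/b)^{1/3}=(pb)^{2/3}$, which together with the $+N$ term gives the claimed general gradient complexity; specializing $p=O(N/b)$ and then $b=1$ (absorbing $N$ into $N\kappa^2$) yields the two displayed forms.

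The main obstacle is step (i): forcing the telescoped variance bound into the precise recursive shape of \eqref{eq:mseb} while keeping the constants tight enough to land on $6p^2/b$. The delicate points are the choice of the Young parameter $\lambda$ (it must drive the carry-over factor strictly below $1$ without inflating $M_1$), the bookkeeping of the geometric refresh-time distribution for the ``refresh with probability $1/p$'' variant, and minor modeling conventions --- sampling $B_k$ with or without replacement, and whether a refresh at step $k$ is applied before or after $\tilde{\nabla}_k$ is formed --- which move only the absolute constants. The remaining work, the gradient count and the substitution into \Cref{co:unbiased}, is routine.
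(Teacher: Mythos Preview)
Your proposal is correct and follows the same route as the paper: quote the MSEB constants for the probabilistic-refresh SVRG variant, plug $\Theta$ into \Cref{co:unbiased}, and multiply by the per-iteration cost $N/p+2b$. One small correction to your sketch of step~(i): the recursion you outline (Young's inequality with parameter $\lambda\sim 1/p$ on the increment $\nabla f_i(\bm{x}_{k+1})-\nabla f_i(\bm{x}_k)$, combined with the refresh probability $1/p$) already closes with $M_2=0$, not $M_2\sim p/b$; the paper simply cites \cite{Driggs2019} for $M_1=3p/b$, $\rho_M=1/(2p)$, $M_2=0$, $\rho_F=1$, whence $\Theta=M_1/\rho_M=6p^2/b$ directly, without any $\mathcal{F}_k$ contribution.
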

\begin{corollary}[SAGA]\label{co:saga}
	When SAGA is used in \Cref{alg:vrld}, 
	let $b$ be the batch size, 
	and we have $\Theta = \frac{6 N^2}{b^3}$, 
	and the gradient complexity is $\tilde{O}(N+N \kappa^2/b^\frac{1}{2}+b \kappa^2 d^\frac{1}{2}/\varepsilon+N^\frac{2}{3}\kappa^\frac{4}{3} d^\frac{1}{3}/\varepsilon^\frac{2}{3})$.
	Let the batch size be $b=1$, 
	the gradient complexity is $\tilde{O}(N\kappa^2+\kappa^2 d^\frac{1}{2}/\varepsilon+N^\frac{2}{3}\kappa^\frac{4}{3} d^\frac{1}{3}/\varepsilon^\frac{2}{3})$.
\end{corollary}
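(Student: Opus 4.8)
The plan is to derive this corollary from \Cref{co:unbiased}. Two facts must be established: that the SAGA estimator is unbiased and satisfies the MSEB property \eqref{eq:mseb} with constants for which $\Theta = \frac{M_1}{\rho_M}+\frac{M_2}{\rho_M\rho_F} = \frac{6N^2}{b^3}$, and that each iteration of \Cref{alg:vrld} run with SAGA costs $b$ component‑gradient evaluations on top of a one‑time $N$‑query initialization of the gradient table. Unbiasedness is immediate: rewriting $\tilde{\nabla}_k^{SAGA} = \frac{N}{b}\sum_{i\in B_k}(\nabla f_i(\bm{x}_k) - \bm{\phi}_{k-1}^i) + \sum_{i=1}^N \bm{\phi}_{k-1}^i$ (using $\bm{\phi}_k^i = \nabla f_i(\bm{x}_k)$ for $i\in B_k$) and taking the conditional expectation over the uniformly random minibatch $B_k$ gives $\mathbb{E}_k\tilde{\nabla}_k^{SAGA} = \nabla f(\bm{x}_k)$. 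Hence the bias recursion in \eqref{eq:mseb} holds with $\rho_B = 1$, the bias‑dependent contributions in \Cref{th:main} vanish, and \Cref{co:unbiased} is applicable.

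\emph{MSEB property.} I would track the table‑error potential $\mathcal{G}_k \coloneqq N\sum_{i=1}^N \mathbb{E}\norm{\nabla f_i(\bm{x}_k) - \bm{\phi}_{k-1}^i}^2$. The minibatch variance bound gives $\mathbb{E}\norm{\tilde{\nabla}_{k+1}^{SAGA} - \nabla f(\bm{x}_{k+1})}^2 \le \frac{1}{b}\mathcal{G}_{k+1}$, so one may take $\mathcal{M}_k = \frac{1}{b}\mathcal{G}_{k+1}$. The key structural fact is that each $\bm{\phi}_k^i$ is reset to $\nabla f_i(\bm{x}_k)$ with probability $b/N$ — independently of everything measurable before step $k$ — and is otherwise unchanged; decomposing $\nabla f_i(\bm{x}_{k+1}) - \bm{\phi}_k^i = (\nabla f_i(\bm{x}_{k+1}) - \nabla f_i(\bm{x}_k)) + (\nabla f_i(\bm{x}_k) - \bm{\phi}_k^i)$ and applying Young's inequality with a splitting parameter of order $b/N$ yields a one‑step contraction $\mathcal{G}_{k+1} \le (1-\rho_F)\mathcal{G}_k + (\text{const})\tfrac{N}{b}Q_k$ with $\rho_F$ of order $b/N$ and $Q_k = N\sum_i\mathbb{E}\norm{\nabla f_i(\bm{x}_{k+1}) - \nabla f_i(\bm{x}_k)}^2$ as in \eqref{eq:mseb}. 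Unrolling from $\mathcal{G}_0 = 0$ (the table is initialized at $\bm{x}_0$) writes $\mathcal{M}_k$ as a geometric sum of the $Q_l$, which is precisely the MSEB form with $M_1 = 0$, $\rho_M = 1$, $\mathcal{F}_k = \mathcal{M}_k$, $M_2$ of order $N/b^2$ and $\rho_F$ of order $b/N$, so that $\Theta = M_2/\rho_F = \frac{6N^2}{b^3}$. I expect this to be the only real work: choosing the Young parameter so that the contraction rate and the coefficient on $Q_k$ combine to give \emph{exactly} $6N^2/b^3$, and arranging the conditioning carefully so the refresh probability acts cleanly on an $\mathcal{F}_{k-1}$‑measurable quantity. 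The HMC dynamics themselves are entirely black‑boxed through \Cref{th:main,co:unbiased}.

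\emph{Assembling the complexity.} With $\Theta = 6N^2/b^3$ we have $\sqrt{\Theta} = O(Nb^{-3/2})$ and $\Theta^{1/3} = O(N^{2/3}b^{-1})$, so \Cref{co:unbiased} gives $W_2(q_k,q^*)\le\varepsilon$ within $\tilde{O}(Nb^{-3/2}\kappa^2 + \kappa^2 d^{1/2}\varepsilon^{-1} + N^{2/3}b^{-1}\kappa^{4/3}d^{1/3}\varepsilon^{-2/3})$ iterations. Each SAGA iteration evaluates $b$ new component gradients (the aggregate $\sum_i\bm{\phi}^i$ is maintained incrementally at no additional gradient cost), and building the initial table costs $N$ gradient queries; multiplying the iteration count by $b$ and adding the $N$‑query initialization gives gradient complexity $\tilde{O}(N + Nb^{-1/2}\kappa^2 + b\kappa^2 d^{1/2}\varepsilon^{-1} + N^{2/3}\kappa^{4/3}d^{1/3}\varepsilon^{-2/3})$. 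Specializing to $b=1$ merges the first two terms into $N\kappa^2$ and yields $\tilde{O}(N\kappa^2 + \kappa^2 d^{1/2}\varepsilon^{-1} + N^{2/3}\kappa^{4/3}d^{1/3}\varepsilon^{-2/3})$, as claimed.
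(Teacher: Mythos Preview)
Your proposal is correct and follows essentially the same route as the paper: verify unbiasedness, establish MSEB constants for SAGA yielding $\Theta = 6N^2/b^3$, and then invoke \Cref{co:unbiased} together with the per-iteration cost of $b$ gradients plus the $N$-query table initialization. The only cosmetic difference is the MSEB parametrization: the paper (citing \cite{Driggs2019}) encodes the geometric recursion in the $\mathcal{M}_k$ slot with $M_1=3N/b^2$, $\rho_M=b/(2N)$, $M_2=0$, while you unroll it and place it in the $\mathcal{F}_k$ slot with $M_1=0$, $\rho_M=1$; both choices are equivalent and produce the same $\Theta$.
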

If we set $p=N/b$ for SVRG, both SAGA and SVRG have the same $\Theta$, which means they have similar effect on reducing the variance of the gradient estimation.
As a result, our HMC framework with these two techniques have same gradient complexity 
$\tilde{O}(N+
N \kappa^2/b^\frac{1}{2}+
b \kappa^2 d^\frac{1}{2}/\varepsilon+
N^\frac{2}{3}\kappa^\frac{4}{3} d^\frac{1}{3}/\varepsilon^\frac{2}{3})$.

Each term in the above gradient complexity is strictly better than the gradient complexity $\tilde{O} (N \kappa^2 d^\frac{1}{2} /\varepsilon)$ of full gradient method.
Compared with the gradient complexity of stochastic gradient HMC $\tilde{O} (\kappa^2 d \sigma^2 /\varepsilon^2)$, where we assume $\mathbb{E} \norm{\nabla f_i(\bm{x})- \nabla f(\bm{x})}^2\leq \sigma^2$,
\emph{our result has better dependency on $d$ and $\epsilon$, and our analysis doesn't depend on extra assumption on the bounded variance of stochastic gradient.}

\begin{table*}[t]
	\centering
	\begin{small}
		\begin{tabular}{lcccccc}
			\toprule
			Methods &  HMC & SG-HMC & SVRG-HMC & SARAH-HMC & SAGA-HMC & SARGE-HMC  \\
			\midrule
			Potential MSE ($\times 10^{-5}$) & 
			$19 \pm 3$ & 
			$1187 \pm 30$ & 
			$19 \pm 3$ & 
			$19 \pm 3$ &
			$21 \pm 3$ &
			$356 \pm 17$
			\\
			Gradient MSE & 
			$0.0$ & 
			$845.549 \pm 0.022$ & 
			$0.0$ &
			$0.0$ & 
			$21.146 \pm 0.005$ &
			$1.1042 \pm 0.0003$
			\\
			\bottomrule
		\end{tabular}
	\end{small}
	\caption{Potential energy MSE and gradient MSE for different Hamiltonian Monte Carlo Methods on synthetic data.}
	\label{tab:syn_mse}
\end{table*}

We further discuss the choice strategy of the batch size under different regimes: 
\begin{enumerate}
	\item 
	Under low precision regime, $\varepsilon \geq \frac{d^\frac{1}{2}}{\min(\sqrt{\Theta}, \kappa \Theta^\frac{1}{4})} = 
	\max(\frac{d^\frac{1}{2} b^\frac{3}{2}}{N}, \frac{d^\frac{1}{2} b^\frac{3}{4}}{N^\frac{1}{2} \kappa})$,
	the last two terms that are $\varepsilon$ dependent are dominated by the second term. 
	Therefore the gradient complexity is $\tilde{O}(N+N \kappa^2/b^\frac{1}{2})$. 
	Clearly in this regime, increasing the batch size could help  decrease the gradient complexity.
	\item 
	Under high precision regime, $\varepsilon \leq \max(\frac{d^\frac{1}{2} b^\frac{3}{2}}{N}, \frac{d^\frac{1}{2} b^\frac{3}{4}}{N^\frac{1}{2} \kappa})$,
	the gradient complexity changes to $\tilde{O}(N+b \kappa^2 d^\frac{1}{2}/\varepsilon+N^\frac{2}{3}\kappa^\frac{4}{3} d^\frac{1}{3}/\varepsilon^\frac{2}{3})$.
	This result encourage us to decrease the batch size when the term $b \kappa^2 d^\frac{1}{2}/\varepsilon$ is dominant.
	\item
	Under high precision regime, if we further assume $\varepsilon \geq \frac{b^3 \kappa^2 d}{N^2}$, 
	then the last term dominates the second and third terms. 
	The gradient complexity changes to $\tilde{O}(N+N^\frac{2}{3}\kappa^\frac{4}{3} d^\frac{1}{3}/\varepsilon^\frac{2}{3})$ and is independent of batch size $b$.
	Therefore, we can increase the batch to as large as $\frac{\varepsilon^\frac{1}{3} N^\frac{2}{3}}{\kappa^\frac{2}{3} d^\frac{1}{6}}$
	without hurting the convergence rate.
	\item
	Under high precision regime, if we assume $\varepsilon \leq \frac{b^3 \kappa^2 d}{N^2}$, 
	the gradient complexity changes to $\tilde{O}(N+b \kappa^2 d^\frac{1}{2}/\varepsilon)$,
	which is positively correlated with batch size $b$.
	If 
	$\varepsilon \leq \min(\frac{\kappa^2 d}{N^2},\frac{d^\frac{1}{2}}{N})$
	the best gradient complexity is achieved by setting $b=1$.
\end{enumerate}


Biased stochastic gradient methods were not wildly adopted in previous sampling methods because of the difficulties in the algorithm convergence guarantee and theoretical analysis. 
We show that the biased estimators can still be applied together with our HMC framework for sampling strongly-log-concave distribution to achieve acceleration.
However, the bias might outweigh the benefits of a lower gradient MSE and hurt the convergence rate.
In this paper, we consider SARAH and SARGE because they can further reduce the MSE of the gradient estimation.

SARAH \cite{Nguyen2017} is very similar to SVRG
but estimates the full gradient in a recursive way:
\begin{equation*}
\tilde{\nabla}_k^{SARAH} = 
\frac{N}{b} \sum_{i\in B_k} \left(
\nabla f_i(\bm{x}_k)-\nabla f_i(\bm{x}_{k-1})
\right)+\tilde{\nabla}_{k-1}^{SARAH}
\end{equation*}
SARAH also needs to reset gradient estimator $\tilde{\nabla}_k^{SARAH}$ to full gradient $\nabla f(\bm{x}_k)$ periodically, which leads to inner and outer loop structure in the algorithm.
In order to prove the MSEB property for SARAH, we consider a variant of SARAH where the full gradient is calculated with probability $\frac{1}{p}$ at each iteration.

SARGE \cite{Driggs2019} doesn't require computing the full gradient repeatedly but requires the extra storage.
The gradient is estimated as follows:
\begin{equation*}
\small
\tilde{\nabla}_k^{SARGE} =  \frac{N}{b}\sum_{i\in B_k} 
\left(\bm{\psi_k^i} -\bm{\psi_{k-1}^i}\right)
+ \sum_{i=1}^{N} \bm{\psi_{k-1}^i}
+(1-\frac{b}{N}) \tilde{\nabla}_{k-1}^{SARGE}
\end{equation*}
where  $\bm{\psi_k^i}$ is updated as $\nabla f_i(\bm{x}_k)-(1-\frac{b}{N})\nabla f_i(\bm{x}_{k-1})$ if $i$ is in the batch, otherwise remains the same.

We then deduce the convergence guarantee for our framework based on SARAH and SARGE.
\begin{corollary}[SARAH]\label{co:sarah}
	When using SARAH in \Cref{alg:vrld}, 
	let $b$ be the batch size, $p$ be the average number of iterations between calculating full gradient,
	we have $\Theta = p$  and $\rho_B  = \frac{1}{p}$, , 
	and the gradient complexity is $\tilde{O}(N+(b+b p^\frac{1}{2})\kappa^2 d^\frac{1}{2}/\varepsilon)$.
	Let the batch size be $b=1$, and the average interval between full gradient updates be $p=O(N/b)$,
	the gradient complexity is $\tilde{O}(N+N^\frac{1}{2}\kappa^2 d^\frac{1}{2}/\varepsilon)$.
\end{corollary}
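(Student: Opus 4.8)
Following the recipe stated at the start of this subsection, the corollary reduces to four steps: (a) verify that the periodically--restarted SARAH estimator obeys the MSEB relations \eqref{eq:mseb} with explicit constants; (b) read off the bias rate $\rho_B$ and the composite constant $\Theta = M_1/\rho_M + M_2/(\rho_M\rho_F)$; (c) count the expected number of component--gradient evaluations per iteration; and (d) substitute $\Theta$ into \Cref{co:biased} (applicable since we will find $\rho_B<1$) and multiply its iteration count by the per--iteration cost. Steps (a)--(b) carry the content; (c)--(d) are bookkeeping.

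For the bias relation I would condition on the restart coin at step $k+1$. With probability $\tfrac1p$ the estimator is reset to $\nabla f(\bm{x}_{k+1})$, contributing zero conditional bias; with probability $1-\tfrac1p$ the recursive rule gives $\mathbb{E}_k\tilde{\nabla}_{k+1}^{\mathrm{SARAH}} = \tilde{\nabla}_k^{\mathrm{SARAH}} + \bigl(\nabla f(\bm{x}_{k+1}) - \nabla f(\bm{x}_k)\bigr)$, since the fresh minibatch is unbiased for the gradient difference. Averaging over the coin yields the exact identity $\nabla f(\bm{x}_{k+1}) - \mathbb{E}_k\tilde{\nabla}_{k+1} = (1-\tfrac1p)\bigl(\nabla f(\bm{x}_k) - \tilde{\nabla}_k\bigr)$, i.e.\ $\rho_B = \tfrac1p$; hence SARAH is biased once $p>1$ and \Cref{co:biased} is the relevant consequence of \Cref{th:main}.

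The substantive step is the mean--squared--error recursion, where the recursive definition makes the error accumulate over each epoch. On a non--restart step the centered increment $\Delta_j := \tfrac{N}{b}\sum_{i\in B_j}\bigl(\nabla f_i(\bm{x}_j) - \nabla f_i(\bm{x}_{j-1})\bigr) - \bigl(\nabla f(\bm{x}_j) - \nabla f(\bm{x}_{j-1})\bigr)$ has conditional mean zero given everything that determines $\bm{x}_j$ but not the fresh batch $B_j$, and its conditional second moment is the minibatch sampling variance, which one bounds by $Q_{j-1}$ (with $Q$ as in \eqref{eq:mseb}). Since on the current epoch $\tilde{\nabla}_j - \nabla f(\bm{x}_j)$ equals the sum of the $\Delta$'s accumulated since the last restart and these increments are mutually orthogonal, $\mathbb{E}\norm{\tilde{\nabla}_{k+1} - \nabla f(\bm{x}_{k+1})}^2$ satisfies $\mathcal{M}_k \le Q_k + (1-\tfrac1p)\mathcal{M}_{k-1}$: the probability--$\tfrac1p$ restart is precisely what supplies the contraction factor and zeroes the error at the start of each epoch, and $\tilde{\nabla}_0 = \nabla f(\bm{x}_0)$ makes $\mathcal{M}_{-1}=0$. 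One may thus take $M_1 = 1$, $\rho_M = \rho_B = \tfrac1p$, and $M_2 = 0$ (no convolution term $\mathcal{F}_k$ is needed), giving $\Theta = M_1/\rho_M = p$. I expect the main obstacle to lie in the measure--theoretic care that makes the $\Delta_j$ genuinely orthogonal --- $\bm{x}_j$ depends on $\tilde{\nabla}_{j-1}$, hence on $B_{j-1}$ --- and in threading the restart coin consistently through all the conditional expectations in the telescoping argument.

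Finally, a non--restart iteration evaluates the $2b$ gradients $\nabla f_i(\bm{x}_j), \nabla f_i(\bm{x}_{j-1})$ for $i\in B_j$, a restart evaluates all $N$ component gradients, and restarts occur with probability $\tfrac1p$, so the amortized cost is $N/p + 2b$ per iteration plus one initial full gradient. Plugging $\Theta = p$ into \Cref{co:biased} gives step size $Lh \le \varepsilon\kappa^{-3/2}L^{1/2}d^{-1/2}\min(1, p^{-1/2})$ and iteration count $\tilde{O}\bigl((1+p^{1/2})\kappa^2 d^{1/2}/\varepsilon\bigr)$; multiplying by $N/p + 2b$ and noting that the restart contribution $\tilde{O}\bigl((N/p)(1+p^{1/2})\kappa^2 d^{1/2}/\varepsilon\bigr)$ is dominated by the minibatch contribution once $p$ is of order $N/b$, together with the one--off $N$, produces the stated $\tilde{O}\bigl(N + (b + bp^{1/2})\kappa^2 d^{1/2}/\varepsilon\bigr)$. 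Specializing to $b=1$ and $p = O(N/b) = O(N)$ collapses this to $\tilde{O}\bigl(N + N^{1/2}\kappa^2 d^{1/2}/\varepsilon\bigr)$, as claimed.
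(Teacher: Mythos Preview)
Your proposal is correct and follows essentially the same route as the paper: establish the MSEB constants for the randomized--restart SARAH estimator ($M_1=1$, $\rho_M=\rho_B=1/p$, $M_2=0$, hence $\Theta=p$), then plug into \Cref{co:biased} and multiply by the amortized per--iteration cost $N/p+2b$. The paper's own proof simply cites \cite{Driggs2019} for these constants and says ``applying these parameters to \Cref{th:main},'' whereas you actually sketch the bias and MSE recursions --- so your write-up is, if anything, more self-contained than the paper's. Your caveat that the restart contribution $(N/p)(1+\sqrt p)$ is only dominated by the minibatch contribution when $p$ is of order $N/b$ is accurate; the paper's general formula $\tilde O(N+(b+b\sqrt p)\kappa^2 d^{1/2}/\varepsilon)$ tacitly relies on this regime as well.
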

\begin{corollary}[SARGE]\label{co:sarge}
	When using SVRG in \Cref{alg:vrld}, 
	let $b$ be the batch size,
	we have $\Theta = \frac{72 N}{b}+\frac{108 N}{b^2}$ and $\rho_B  = \frac{b}{N}$, 
	and the gradient complexity is $\tilde{O}(N+(b+N^\frac{1}{2} b^\frac{1}{2})\kappa^2 d^\frac{1}{2}/\varepsilon)$.
	Let the batch size be $b=1$,
	the gradient complexity is $\tilde{O}(N+N^\frac{1}{2}\kappa^2 d^\frac{1}{2}/\varepsilon)$.
\end{corollary}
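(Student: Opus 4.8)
The plan is to certify that the SARGE estimator fits the MSEB template of \eqref{eq:mseb} with the stated constants, and then feed those constants into \Cref{co:biased} together with a per-iteration gradient count.

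\textbf{Step 1 (bias recursion, $\rho_B = b/N$).} First I would compute $\mathbb{E}_k \tilde{\nabla}_{k+1}^{SARGE}$. Each index enters $B_{k+1}$ with probability $b/N$, so the update $\psi_{k+1}^i = \nabla f_i(\bm{x}_{k+1}) - (1-\tfrac{b}{N})\nabla f_i(\bm{x}_k)$ on the batch (and $\psi_{k+1}^i = \psi_k^i$ off it) gives $\mathbb{E}_k[\sum_i \psi_{k+1}^i] = \tfrac{b}{N}\sum_i\big(\nabla f_i(\bm{x}_{k+1}) - (1-\tfrac{b}{N})\nabla f_i(\bm{x}_k)\big) + (1-\tfrac{b}{N})\sum_i \psi_k^i$. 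Substituting this into the definition of $\tilde{\nabla}_{k+1}^{SARGE}$, taking $\mathbb{E}_k$ of the $\tfrac{N}{b}\sum_{i\in B_{k+1}}(\psi_{k+1}^i-\psi_k^i)$ term, and cancelling, one obtains $\nabla f(\bm{x}_{k+1}) - \mathbb{E}_k\tilde{\nabla}_{k+1}^{SARGE} = (1-\tfrac{b}{N})\big(\nabla f(\bm{x}_k) - \tilde{\nabla}_k^{SARGE}\big)$, i.e.\ $\rho_B = b/N$; this is the recursive-bias identity of \citet{Driggs2019} rewritten in our notation.

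\textbf{Step 2 (MSE/bias Lyapunov bounds, the constant $\Theta$).} Next I would build $\mathcal{M}_k$ and $\mathcal{F}_k$. Following the SARGE analysis of \citet{Driggs2019}, I would track an aggregate stored-error quantity $\mathcal{F}_k \sim \sum_i \mathbb{E}\norm{\psi_k^i - (\text{target}_k^i)}^2$, show it contracts at rate $\rho_F = b/N$ while picking up a multiple of $Q_k = N\sum_i \mathbb{E}\norm{\nabla f_i(\bm{x}_{k+1})-\nabla f_i(\bm{x}_k)}^2$ (this fixes $M_2$), and then bound $\mathbb{E}\norm{\tilde{\nabla}_{k+1}^{SARGE}-\nabla f(\bm{x}_{k+1})}^2$ by $(1-\rho_M)$ times the previous MSE plus $M_1 Q_k + \mathcal{F}_k$ with $\rho_M = b/N$. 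Unwinding the constants (the extra $(1-b/N)$-weighted recursive term appears in both the estimator and the $\psi^i$'s, so the cross terms must be kept) yields $M_1/\rho_M = \tfrac{72N}{b}$ and $M_2/(\rho_M\rho_F) = \tfrac{108 N}{b^2}$, hence $\Theta = \tfrac{M_1}{\rho_M} + \tfrac{M_2}{\rho_M\rho_F} = \tfrac{72N}{b} + \tfrac{108 N}{b^2}$.

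\textbf{Step 3 (apply \Cref{co:biased} and count gradients).} Since SARGE is biased with $\rho_B = b/N < 1$, \Cref{co:biased} gives an iteration count $\tilde{O}\big((1+\sqrt{\Theta})\kappa^2 d^{1/2}\varepsilon^{-1}\big)$ to reach $W_2(q_k,q^*)\le\varepsilon$; since $\Theta = O(N/b)$ for all $b\ge 1$, this is $\tilde{O}\big((1+\sqrt{N/b})\kappa^2 d^{1/2}\varepsilon^{-1}\big)$. Each SARGE step recomputes component gradients only on $B_k$, so it costs $O(b)$ gradient queries (the aggregates $\sum_i \psi_{k-1}^i$ and $\tilde{\nabla}_{k-1}^{SARGE}$ are maintained incrementally), plus a single $O(N)$ full-gradient and $\psi^i$-initialization at the start. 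Multiplying through gives gradient complexity $\tilde{O}\big(N + (b + \sqrt{Nb})\kappa^2 d^{1/2}\varepsilon^{-1}\big) = \tilde{O}\big(N + (b + N^{1/2}b^{1/2})\kappa^2 d^{1/2}\varepsilon^{-1}\big)$, and $b=1$ yields $\tilde{O}\big(N + N^{1/2}\kappa^2 d^{1/2}\varepsilon^{-1}\big)$. The routine parts are Steps 1 and 3; the main obstacle is Step 2 — fitting the SARGE MSE recursion exactly into the MSEB form with the correct contraction rates and tracking the numerical constants $72$ and $108$, which amounts to re-deriving the SARGE Lyapunov inequality of \citet{Driggs2019} in the present sampling notation while carrying the bias–MSE cross terms created by the $(1-b/N)$ recursive weighting.
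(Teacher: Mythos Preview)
Your proposal is correct and mirrors the paper's proof: the paper simply invokes the MSEB constants for SARGE from \citet{Driggs2019} (specifically $M_1=12$, $\rho_M=\tfrac{b}{2N}$, $M_2=\tfrac{27+12b}{N}$, $\rho_F=\tfrac{b}{2N}$, giving $\Theta=\tfrac{72N}{b}+\tfrac{108N}{b^2}$) and plugs them into \Cref{th:main}/\Cref{co:biased}, together with the $O(b)$ per-step gradient cost. One small correction to your Step~2 bookkeeping: with the Driggs constants one has $M_1/\rho_M=24N/b$ and $M_2/(\rho_M\rho_F)=48N/b+108N/b^2$, not the $72N/b$ and $108N/b^2$ split you wrote (and $\rho_M=\rho_F=\tfrac{b}{2N}$, not $\tfrac{b}{N}$), though the sum $\Theta$ and hence the final complexity are unaffected.
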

Both SARAH and SARGE achieve their best gradient complexity of $\tilde{O}(N+N^\frac{1}{2}\kappa^2 d^\frac{1}{2}/\varepsilon)$ with small batch $b=1$.
Compared with full gradient methods, the dependency of dataset size $N$ is improved by a factor $N^\frac{1}{2}$.
If compared with stochastic gradient methods, the dependency of $\varepsilon$ is improved by a factor of $1/\varepsilon$.

Compared with SAGA and SVRG, SARAH and SARGE have much smaller gradient MSE since $\Theta$ has better dependency of $N$.
However, this comes with the price of non-zero gradient bias, which hurts the convergence rate in dependency of $\epsilon$.
Therefore, in the high precision regime, 
HMC with biased gradient estimator could converge slower than HMC with unbiased gradient estimators even if with smaller gradient MSE.

\begin{figure*}[t]
	\centering
	\begin{subfigure}[b]{0.24\textwidth}
		\centering
		\includegraphics[width=\textwidth]{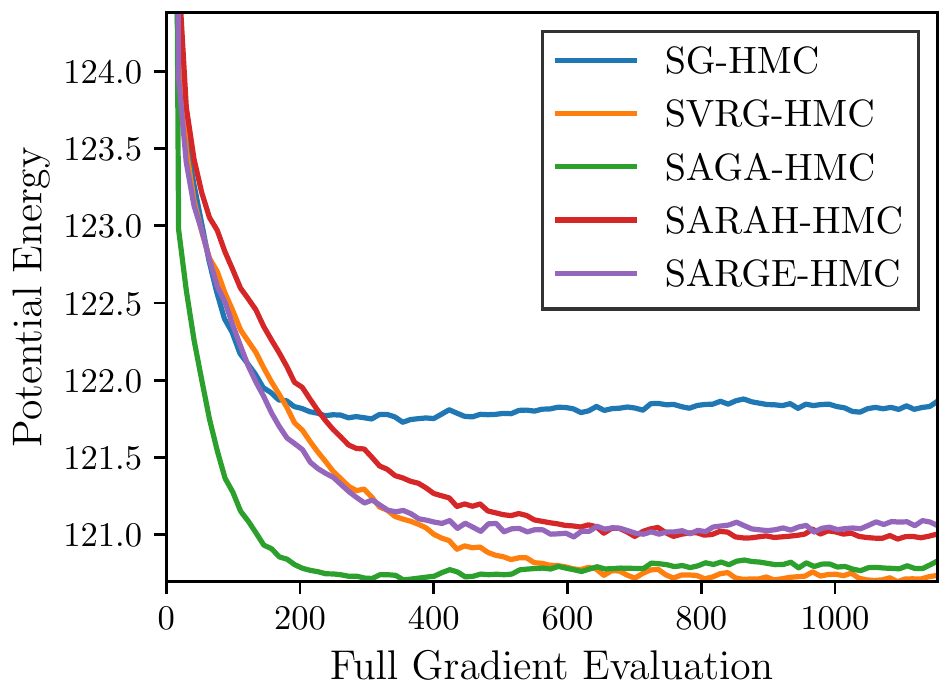}
		\caption{australian}
	\end{subfigure}
	\begin{subfigure}[b]{0.24\textwidth}  
		\centering
		\includegraphics[width=\textwidth]{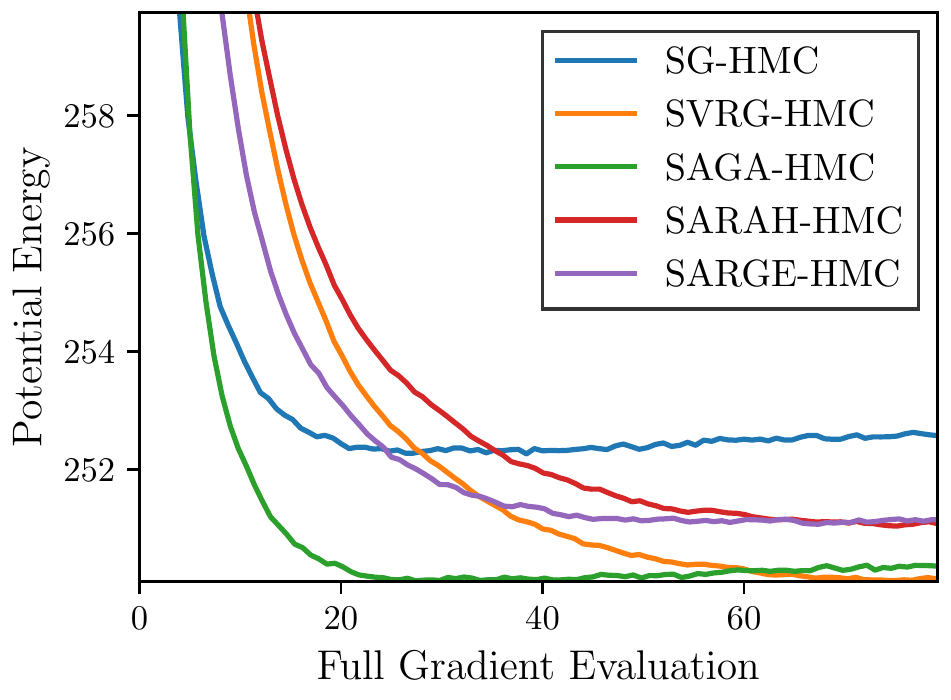}
		\caption{german}
	\end{subfigure}
	\begin{subfigure}[b]{0.24\textwidth}
		\centering
		\includegraphics[width=\textwidth]{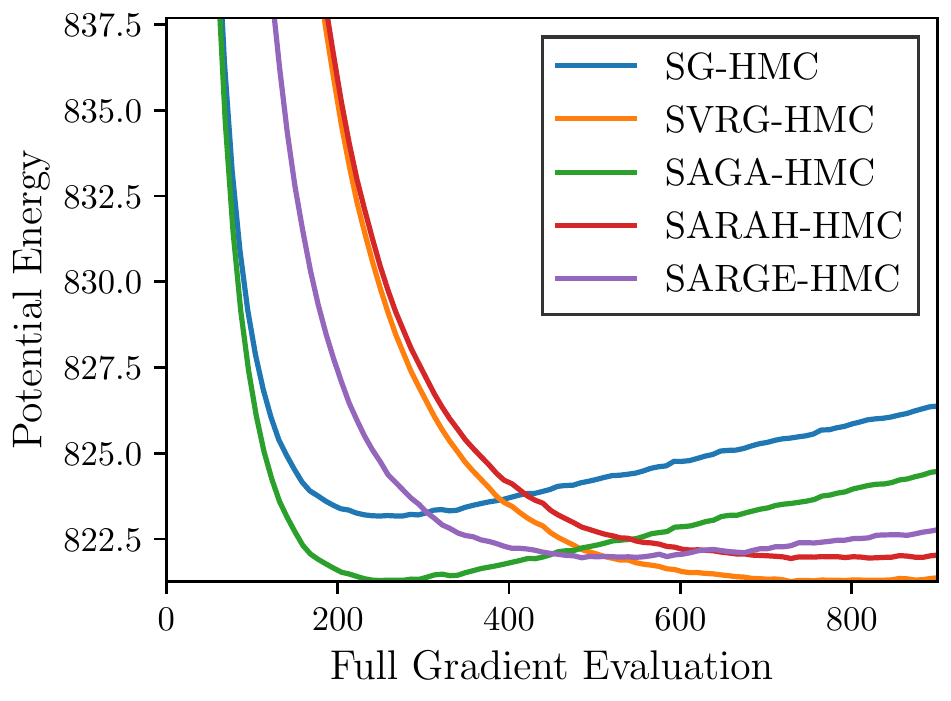}
		\caption{phishing}
	\end{subfigure}
	\begin{subfigure}[b]{0.24\textwidth}
		\centering
		\includegraphics[width=\textwidth]{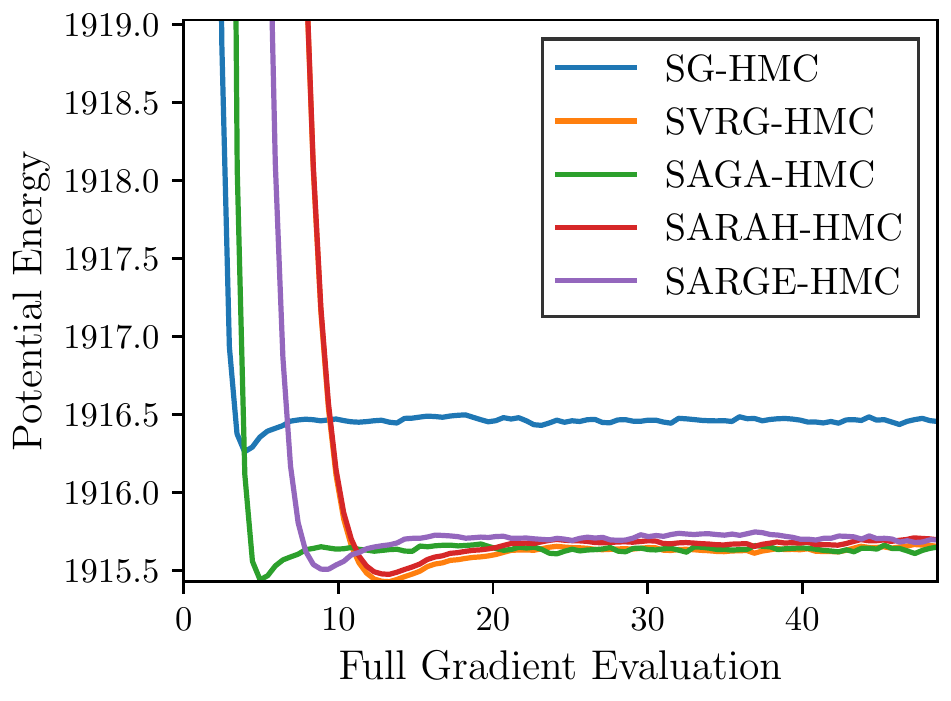}
		\caption{mushrooms}
	\end{subfigure}
	\caption{\small Mean potential energy of different algorithms on training datasets for logistic regression task.} 
	\label{fig:potential}
\end{figure*}
\begin{figure*}[t]
	\centering
	\begin{subfigure}[b]{0.24\textwidth}
		\centering
		\includegraphics[width=\textwidth]{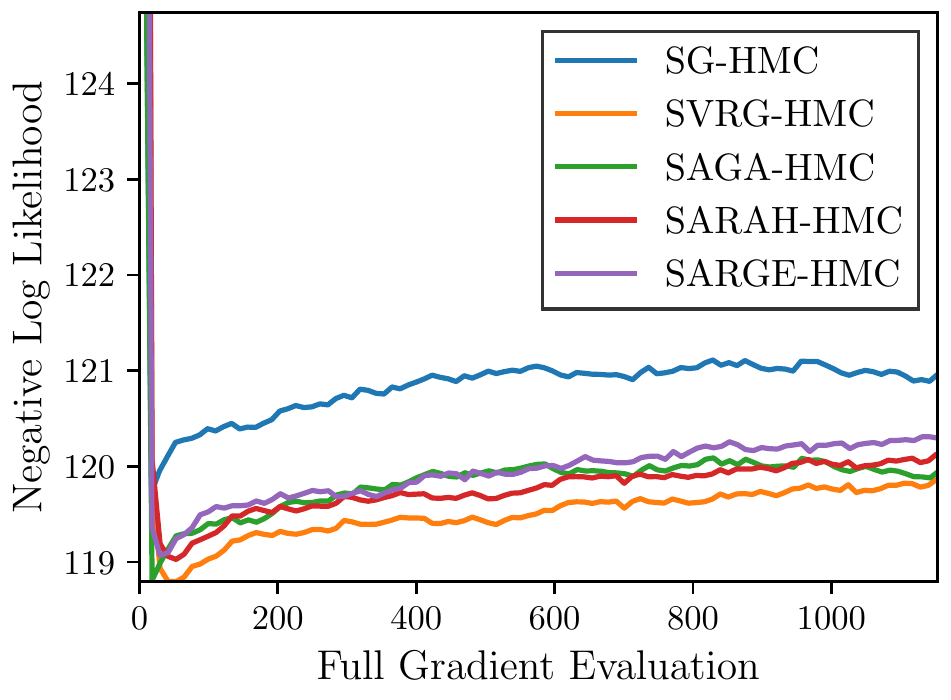}
		\caption{australian}
	\end{subfigure}
	\begin{subfigure}[b]{0.24\textwidth}  
		\centering
		\includegraphics[width=\textwidth]{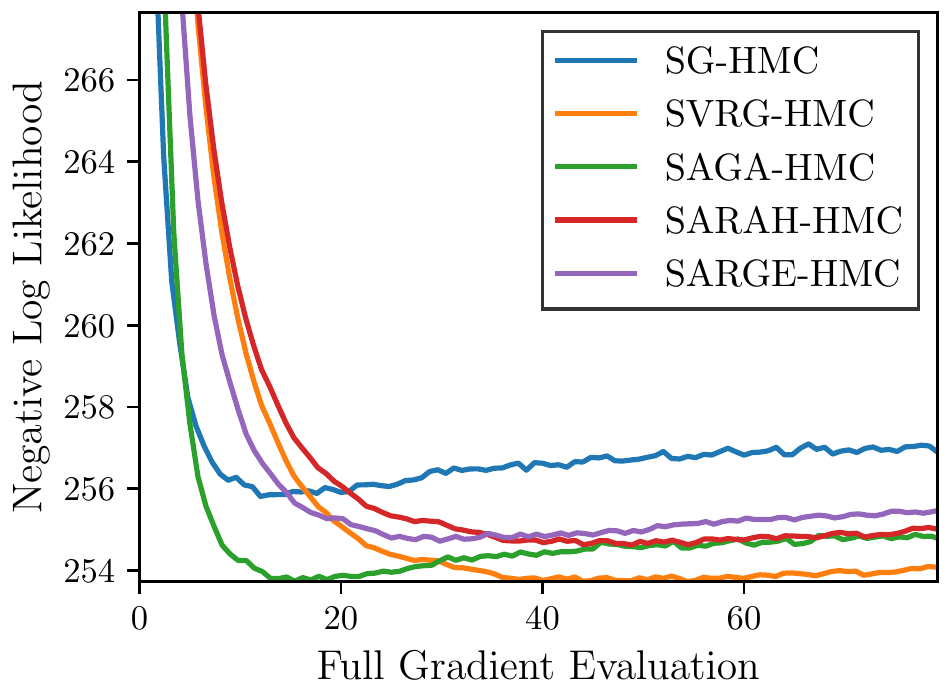}
		\caption{german}
	\end{subfigure}
	\begin{subfigure}[b]{0.24\textwidth}
		\centering
		\includegraphics[width=\textwidth]{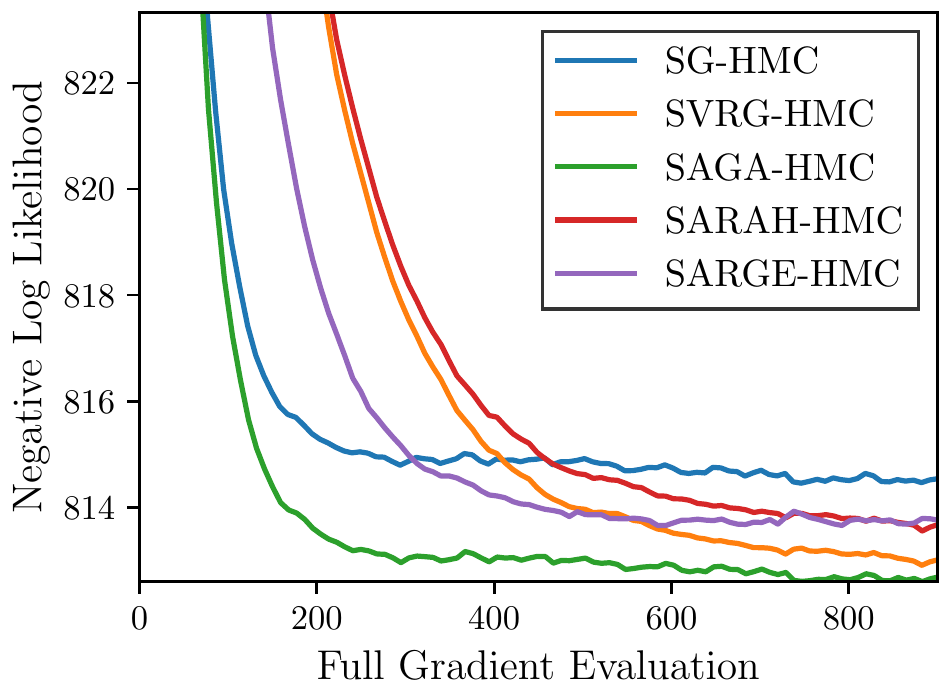}
		\caption{phishing}
	\end{subfigure}
	\begin{subfigure}[b]{0.24\textwidth}
		\centering
		\includegraphics[width=\textwidth]{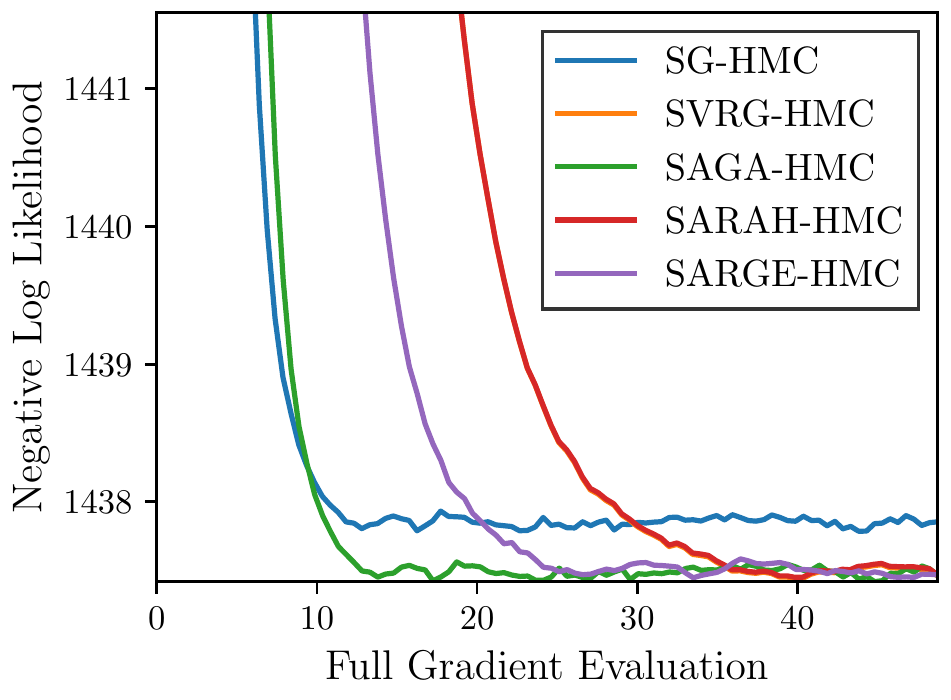}
		\caption{mushrooms}
	\end{subfigure}
	\caption{\small Negative log-likelihood energy of different algorithms on testing datasets for logistic regression task.} 
	\label{fig:likelihood}
\end{figure*}

\section{Experimental Results}

In this section, we will evaluate our algorithms on both synthetic data and real-world benchmark data.
During the following experiments, SVRG, SAGA, SARAH, and SARGE will be incorporated with our framework for evaluations. The corresponding algorithms are called as SVRG-HMC, SAGA-HMC, SARAH-HMC, and SARGE-HMC, respectively.
\subsection{Synthetic Data}
Following previous works \cite{Chen2014,Zou2018}, 
we use quadratic function as potential energy for our synthetic data.
The potential energy function can be decomposed into $N$ components $f_i(\bm{x}) = \frac{1}{N}(\bm{d}_i-\bm{x})^\top \Sigma^{-1} (\bm{d}_i-\bm{x})$, where $\bm{x} \in \mathbb{R}^d$ is the parameter to sample and $\bm{d}_i \in \mathbb{R}^d$ is the $i$-th data element generated from $\bm{d}_i \sim \mathcal{N}(\bm{2}, 2 \bm{I}_{d\times d})$.
$\Sigma^{-1}$ is a random positive-definite matrix whose maximum eigenvalue is $L$ and the minimum eigenvalue is $m$. 
Clearly, the invariant distribution is a Gaussian distribution with mean as average of $\bm{d}_i$ and covariance as $\Sigma$. During the experiment, we set $L= 10, d=5,N=1000$.

We set uniform step size for different algorithms and set batch size as $b=1$. We estimate the mean potential energy by accumulating for ten million iterations after burn-in of ten thousand iterations. 
We report the MSE of mean potential energy and gradient MSE of different algorithms in \Cref{tab:syn_mse}.

Firstly, all variance reduction methods based HMC enjoy more accurate gradient estimation and have smaller sampling error than SG-HMC.
Due to the simpleness of the quadratic potential function, SVRG and SARGE can eliminate the gradient error, thus the sampling error of SVRG-HMC and SARGE-HMC is exactly the same as full gradient HMC.
We also notice that SARGE-HMC achieves smaller gradient error than SAGA-HMC, but has larger MSE on potential energy.
This supports our theoretical analysis: the biased gradient estimator based HMC could be worse than the unbiased one even if with smaller gradient MSE.
\begin{table}
	\caption{The summary of different datasets used in our experiments.}
	\label{tab:datasets_nd}
	\begin{center}
		\begin{small}
			\begin{tabular}{lcccc}
				\toprule
				Dataset &  australian & german & phishing  &  mushrooms \\
				\midrule
				$N$& 690 & 1000 & 11055 & 8124\\
				$d$& 14 & 24 & 68 & 112
				\\
				\bottomrule
			\end{tabular}
		\end{small}
	\end{center}
\end{table}
\subsection{Bayesian Logistic Regression}
We further conduct experiments in Bayesian Logistic Regression on multiple real-world benchmark datasets. 

Typically in logistic regression, we are given a group of pairs $\{\bm{a}_i, y_i\}$, where  $\bm{a}_i$ is the feature vector and $y_i$ is binary label for each sample. We assume the likelihood function has the form $p(y_i | \bm{a}_i, \bm{x}) = \frac{1}{1+\exp(-y_i \bm{a}_i^\top \bm{x} )}$, then we have the posterior of parameter $\bm{x}$ as:
$
p^*(\bm{x}) = p_{prior}(\bm{x}) \prod\limits_{i =1}^{N} p(y_i | \bm{a}_i, \bm{x}).
$

Here we use the Gaussian distribution $\mathcal{N}(\bm{0}, m^{-1} \bm{I}_{d\times d})$ as prior. 
The corresponding potential energy function $f(\bm{x})$ can be written as:
$$
f(\bm{x}) = \frac{m}{2} \norm{\bm{x}}^2 + \sum\limits_{i=1}^{N} \log(1+\exp(-y_i \bm{a}_i^\top \bm{x} ))\,.
$$

We choose four benchmark datasets from LIBSVM \cite{Chang2011}. 
Their dimensionality and sample size are summarized in \Cref{tab:datasets_nd}. 
We divide the data into training set and testing set evenly.
The batch size is set to 1 for all algorithms.
Since it is computationally intractable to calculate the 2-Wasserstein distance in high dimensional space, we choose to record the average potential energy for training dataset and negative log-likelihood for testing dataset along the sample path to reflect the convergence and sampling error.
In order to control the influence of step size on the sampling error, we choose a uniform step size for all algorithms.
We also set small batch size $b=1$ for all algorithms.
We run each algorithm several thousand times and report the average result to reduce the noise.
The full gradient method is not examined due to slow convergence.
The potential energy for training dataset is shown in \Cref{fig:potential} and the negative log-likelihood for testing dataset is shown in \Cref{fig:likelihood}.

Obviously all variance reduced methods based HMC achieve lower mean potential energy compared to the SG-HMC, which indicates that our HMC framework can approximate the posterior much better than SG-HMC. 
We also notice that all algorithms take similar number of iterations to reach equilibrium. 
However, SVRG-HMC and SARAH-HMC take three gradient queries for each iteration on average and SARGE-HMC takes two gradient queries for each iteration. 
Therefore, these methods need more gradient evaluation for burn-in than SAGA-HMC and SG-HMC.

\begin{figure}[ht]
	\vspace*{-7pt}
	\centering
	\begin{subfigure}[b]{0.23\textwidth}
		\centering
		\includegraphics[width=\textwidth]{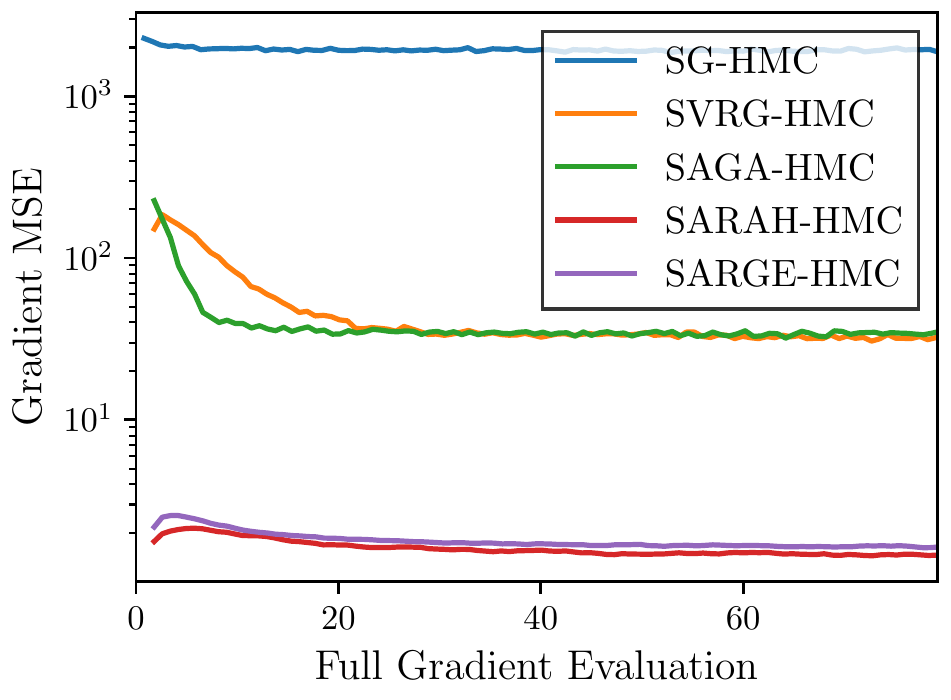}
		\caption{german}
	\end{subfigure}
	\begin{subfigure}[b]{0.23\textwidth}
		\centering
		\includegraphics[width=\textwidth]{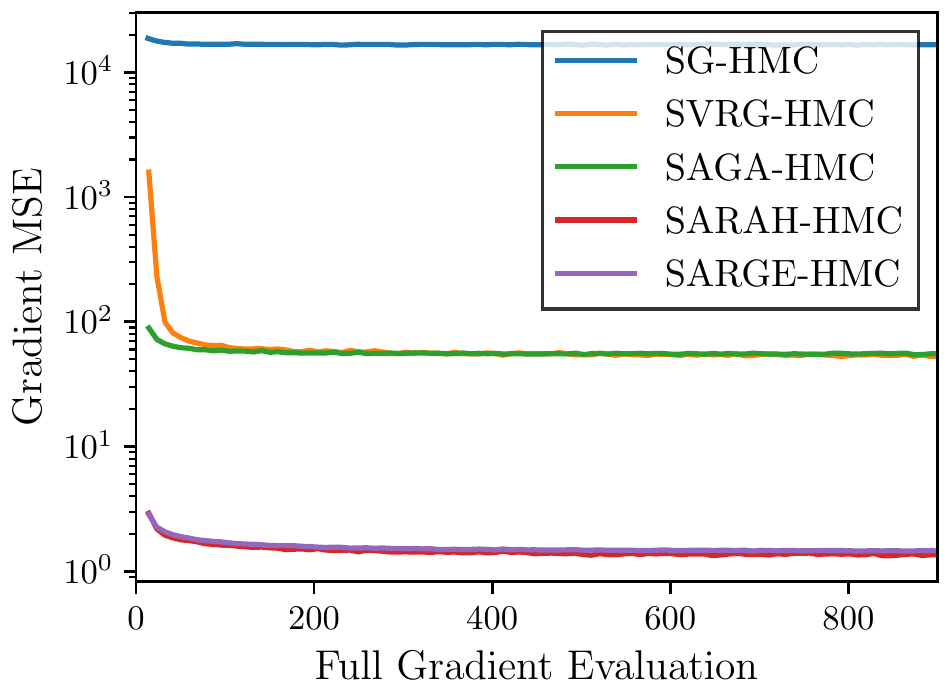}
		\caption{phishing}
	\end{subfigure}
	\caption{Gradient MSE for different algorithms.} 
	\label{fig:gradienterror}
	\vspace*{-14pt}
\end{figure}
We also report the gradient MSE of different algorithms on \textit{german} and \textit{phishing} datasets in \Cref{fig:gradienterror}. The gradient MSE plots for the other two datasets are similar.
Clearly, the biased gradient estimator (SARAH and SARGE) based methods achieve best gradient estimation. However, according to the mean potential energy and the negative log-likelihood, SARAH-HMC and SARGE-HMC are slightly worse than SVRG-HMC and SAGA-HMC. 
This phenomenon is once again consistent with our theoretical analysis.

\section{Conclusion}
We proposed a new framework of variance-reduced Hamiltonian Monte Carlo (HMC) method for sampling from an $L$-smooth and $m$-strongly log-concave distribution. 
The popular variance-reduction techniques, such as SAGA, SVRG, SARAH, and SARGE, can be combined with our framework. 
We derived the theoretical guarantee for the convergence of our framework based on the MSEB property, and we showed that all variance reduction methods considered in this paper improve the gradient complexity compared to the full gradient and stochastic gradient HMC approaches.

\bibliographystyle{unsrtnat}
\bibliography{hmc}

\begin{appendices}
\section{Proof of Main Theory}
Let $\Phi^t$ be the evolution operator of distribution regarding to the original Hamilton dynamics \cref{eq:sde_hd}. \\
Let $\Phi^t_{\nabla}$ be the evolution operator regarding to the Hamilton dynamics conditioned on full gradient \cref{eq:dis_sde_hd}. \\
Let $\Phi^t_{\tilde{\nabla}}$ be the evolution operator regarding to the Hamilton dynamics conditioned on MSEB gradient estimator \cref{eq:dis_sde_vrhd}. \\
\begin{equation}
\label{eq:dis_sde_vrhd}
\small
d\tilde{\bm{V}}_t' = - \tilde{\nabla}_k dt -\gamma \xi \tilde{\bm{V}}_t' dt  + \sqrt{2\gamma} d\bm{B}_t, \;
d\tilde{\bm{X}}_t' = \xi \tilde{\bm{V}}_t' dt.
\end{equation}
Let $\Phi^t_{\mathbb{E}\tilde{\nabla}}$ be the evolution operator regarding to the Hamilton dynamics conditioned on conditional expectation of MSEB gradient estimator \cref{eq:dis_sde_mean_vrhd}. \\
\begin{equation}
\label{eq:dis_sde_mean_vrhd}
\small
d\tilde{\bm{V}}_t'' = - \mathbb{E}_{k-1}\tilde{\nabla}_k dt -\gamma \xi \tilde{\bm{V}}_t'' dt  + \sqrt{2\gamma} d\bm{B}_t, \;
d\tilde{\bm{X}}_t'' = \xi \tilde{\bm{V}}_t'' dt.
\end{equation}

If the initial condition $(\bm{x}_k,\bm{v}_k)$ has the distribution $p_k$, then 
the distribution of 
$({\bm{X}}_t, {\bm{V}}_t)$ is 
$\Phi^t p_k$ and the distributions of 
$(\tilde{\bm{X}}_t, \tilde{\bm{V}}_t)$, 
$(\tilde{\bm{X}}_t', \tilde{\bm{V}}_t')$ and 
$(\tilde{\bm{X}}_t'', \tilde{\bm{V}}_t'')$ are 
$\Phi^t_{\nabla} p_k$, 
$\Phi^t_{\tilde{\nabla}} p_k$ and 
$\Phi^t_{\mathbb{E}\tilde{\nabla}} p_k$ respectively.
We also denote $\Phi^t \bm{x}_k$ and $\Phi^t \bm{v}_k$ as the stochastic variable $\bm{X}_t$ and $\bm{V}_t$ in \cref{eq:sde_hd} with initial value $\bm{x}_k$ $\bm{v}_k$. Similarly, $\Phi^t_{\nabla} \bm{x}_k$ and $\Phi^t_{\nabla} \bm{v}_k$ represent $\tilde{\bm{X}}_t$ and $\tilde{\bm{V}}_t$ in \cref{eq:dis_sde_hd} with initial value $\bm{x}_k$ $\bm{v}_k$.

\begin{lemma}\label{le:one_step_w2}
	Under same conditions of \cref{th:main}, we have
	
	\begin{equation}
	W_2^2(\Phi^h_{\tilde{\nabla}} q_k,\Phi^{h (k+1)} q^\ast) 
	\leq
	A + ( e^{-\frac{\delta}{4 \kappa}} W_2 (q_k, \Phi^{h k} q^\ast) + B )^2
	\end{equation}
	
	\begin{equation}
	A \leq \Theta \delta^4 (4 \norm{\bm{x}_0}^2 + \frac{6 \kappa d}{L}) = F_1 \Theta \frac{\delta^4}{4 L}
	\end{equation}
	
	\begin{equation}
	B \leq 
	(1-\rho_B) \sqrt{A} 
	+ \frac{\delta^2 ( \sqrt{15 \delta} + 5 \sqrt{3} ) \sqrt{F_2}}{60 \sqrt{L}}
	\end{equation}
	where $\delta = \gamma \xi h$.
\end{lemma}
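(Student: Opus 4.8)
The plan is to prove this one-step estimate by a synchronous-coupling argument that chains together four processes over the time interval $[0,h]$: the VR--HMC step $\Phi^h_{\tilde\nabla}$, the HMC step driven by the conditional-mean estimator $\Phi^h_{\mathbb{E}\tilde\nabla}$ (see \eqref{eq:dis_sde_mean_vrhd}), the full-gradient HMC step $\Phi^h_{\nabla}$ (see \eqref{eq:dis_sde_hd}), and the exact Hamiltonian flow $\Phi^h$. Concretely, I would take an optimal $W_2$-coupling of the law of $q_k$ with $\Phi^{hk}q^\ast$ and evolve both endpoints by their respective dynamics using the same Brownian increment (and, on the VR side, the same mini-batch). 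Writing $Q_{k+1}$, $\bar Q_{k+1}$, $\hat Q_{k+1}$, $Q'_h$ for the four evolved states expressed in the contraction coordinates $q=(\bm x,\bm x+\bm v)$, the lemma reduces to bounding $\mathbb{E}\norm{Q_{k+1}-Q'_h}^2$.

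The structural point that produces the \emph{un-squared} additive term $A$ is a mean-zero orthogonality decomposition. Conditionally on the filtration up to step $k-1$ and on the Brownian path, the update rules \eqref{eq:step_vrhd} make $Q_{k+1}-\bar Q_{k+1}$ a fixed linear image of $\tilde\nabla_k-\mathbb{E}_{k-1}\tilde\nabla_k$, which has conditional mean zero, whereas $\bar Q_{k+1}-Q'_h$ is conditionally deterministic; hence the cross term vanishes and $\mathbb{E}\norm{Q_{k+1}-Q'_h}^2=\mathbb{E}\norm{Q_{k+1}-\bar Q_{k+1}}^2+\mathbb{E}\norm{\bar Q_{k+1}-Q'_h}^2$. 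I would identify the first summand with $A$: the linear coefficient from \eqref{eq:step_vrhd} is $O(\delta^2)$ after the cancellation $h-\tfrac{1}{\gamma\xi}(1-e^{-\gamma\xi h})=O(h^2)$, and $\mathbb{E}\norm{\tilde\nabla_k-\mathbb{E}_{k-1}\tilde\nabla_k}^2\le\mathbb{E}\norm{\tilde\nabla_k-\nabla f(\bm x_k)}^2\le\mathcal M_{k-1}$ is controlled through the MSEB recursion \eqref{eq:mseb}, whose geometric sums produce the factor $\Theta=\tfrac{M_1}{\rho_M}+\tfrac{M_2}{\rho_M\rho_F}$; converting the $Q_l$ by $\tilde L$-smoothness into $L^2\,\mathbb{E}\norm{\bm x_{l+1}-\bm x_l}^2$ and then into uniform moment bounds gives $A\le\Theta\delta^4\bigl(4\norm{\bm x_0}^2+6\kappa d/L\bigr)=F_1\Theta\delta^4/(4L)$.

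For $\mathbb{E}\norm{\bar Q_{k+1}-Q'_h}^2$ I would insert $\hat Q_{k+1}$ and use the triangle inequality in $L^2$. The term $\norm{\hat Q_{k+1}-Q'_h}_{L^2}$ is exactly the one-step error of full-gradient HMC against the exact flow from a synchronously coupled start, so invoking the contraction estimate for the Hamiltonian dynamics in the $(\bm x,\bm x+\bm v)$ coordinates (as in \cite{Cheng2018a}) gives $\norm{\hat Q_{k+1}-Q'_h}_{L^2}\le e^{-\delta/(4\kappa)}W_2(q_k,\Phi^{hk}q^\ast)$ plus a discretization error of order $\delta^2\sqrt{F_2/L}$, obtained by integrating $\norm{\nabla f(\Phi^t\bm x_k)-\nabla f(\bm x_k)}$ over $[0,h]$ and bounding the drift speed uniformly (this is the source of $F_2$ and of the constant $\tfrac{1}{60}(\sqrt{15\delta}+5\sqrt3)$). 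The remaining term $\norm{\bar Q_{k+1}-\hat Q_{k+1}}_{L^2}$ is the genuinely biased contribution: it is the same $O(\delta^2)$ linear image of the bias $\mathbb{E}_{k-1}\tilde\nabla_k-\nabla f(\bm x_k)$, which by the MSEB bias relation in \eqref{eq:mseb} equals $(1-\rho_B)\bigl(\tilde\nabla_{k-1}-\nabla f(\bm x_{k-1})\bigr)$, so its $L^2$-norm is $(1-\rho_B)\sqrt{\mathbb{E}\norm{\tilde\nabla_{k-1}-\nabla f(\bm x_{k-1})}^2}\le(1-\rho_B)\sqrt A$ after the same $\Theta$/moment bookkeeping. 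Combining yields $\norm{\bar Q_{k+1}-Q'_h}_{L^2}\le e^{-\delta/(4\kappa)}W_2(q_k,\Phi^{hk}q^\ast)+B$, and therefore $\mathbb{E}\norm{Q_{k+1}-Q'_h}^2\le A+\bigl(e^{-\delta/(4\kappa)}W_2(q_k,\Phi^{hk}q^\ast)+B\bigr)^2$.

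The main obstacle I anticipate is the uniform moment control underlying $F_1$ and $F_2$: the $Q_l$ inside the MSEB bound must be turned into $\varepsilon$-independent a priori estimates on $\mathbb{E}\norm{\bm x_k}^2$, $\mathbb{E}\norm{\bm v_k}^2$ and $\mathbb{E}\norm{\tilde\nabla_k}^2$ valid for \emph{all} $k$ simultaneously, which requires a small coupled induction carried out under the step-size restriction $Lh\le\tfrac{1}{10\kappa}\min(1,1/\sqrt\Theta)$ so that $\Theta$ does not leak back into the moment bounds and spoil the stated closed forms. A secondary difficulty is making the conditional mean-zero decomposition of the second paragraph rigorous --- choosing the coupling so that $q_k$, its coupled partner, and the fresh Brownian increment and batch lie in the correct sub-$\sigma$-algebras --- and then tracking the precise numerical constants through the Taylor expansions of the coefficients in \eqref{eq:step_vrhd} and the covariances in \eqref{eq:covar_step_vrhd}.
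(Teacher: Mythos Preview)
Your proposal is correct and follows essentially the same approach as the paper's proof: the same four-process synchronous coupling $\Phi^h_{\tilde\nabla}\to\Phi^h_{\mathbb{E}\tilde\nabla}\to\Phi^h_{\nabla}\to\Phi^h$, the same conditional mean-zero decomposition that isolates $A$ as an additive (unsquared) term, the same triangle-inequality split of the remainder into bias $((1-\rho_B)\sqrt A)$, discretization error (controlled via the momentum bound, yielding the $\sqrt{F_2}$ constant), and contraction (from \cite{Cheng2018a}). Your anticipated obstacle---the uniform-in-$k$ moment bounds supplying $F_1,F_2$---is exactly the content of the paper's auxiliary lemma~\ref{le:discrete_final}, proved by a Lyapunov-function induction under precisely the step-size restriction you name.
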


\begin{proof}[Proof of \cref{le:one_step_w2}]
	\begin{equation}
	\begin{aligned}
	W_2^2(\Phi^h_{\tilde{\nabla}} q_k,\Phi^{h (k+1)} q^\ast) 
	= & \mathbb{E} \norm{
		\Phi^h_{\tilde{\nabla}} q_k - \Phi^h_{\mathbb{E}\tilde{\nabla}} q_k + \Phi^h_{\mathbb{E}\tilde{\nabla}} q_k - \Phi^{h (k+1)} q^\ast
	}^2 \\
	= & \mathbb{E} \norm{\Phi^h_{\tilde{\nabla}} q_k - \Phi^h_{\mathbb{E}\tilde{\nabla}} q_k}^2
	+ \mathbb{E} \norm{\Phi^h_{\mathbb{E}\tilde{\nabla}} q_k - \Phi^{h (k+1)} q^\ast}^2 \\
	& + 2 \mathbb{E} \inner{\Phi^h_{\tilde{\nabla}} q_k - \Phi^h_{\mathbb{E}\tilde{\nabla}} q_k}{\Phi^h_{\mathbb{E}\tilde{\nabla}} q_k - \Phi^{h (k+1)} q^\ast} \\
	= & \mathbb{E} \norm{\Phi^h_{\tilde{\nabla}} q_k - \Phi^h_{\mathbb{E}\tilde{\nabla}} q_k}^2
	+ \mathbb{E} \norm{\Phi^h_{\mathbb{E}\tilde{\nabla}} q_k - \Phi^{h (k+1)} q^\ast}^2 \\
	& + 2 \mathbb{E} \mathbb{E}_{k-1} \inner{\Phi^h_{\tilde{\nabla}} q_k - \Phi^h_{\mathbb{E}\tilde{\nabla}} q_k}{\Phi^h_{\mathbb{E}\tilde{\nabla}} q_k - \Phi^{h (k+1)} q^\ast} \\
	= & \mathbb{E} \norm{\Phi^h_{\tilde{\nabla}} q_k - \Phi^h_{\mathbb{E}\tilde{\nabla}} q_k}^2
	+ \mathbb{E} \norm{\Phi^h_{\mathbb{E}\tilde{\nabla}} q_k - \Phi^{h (k+1)} q^\ast}^2
	\end{aligned}
	\end{equation}
	
	According to \cref{le:one_step_ge_mean_ge}, we can bound the first term as follows.
	\begin{equation}
		\mathbb{E} \norm{\Phi^h_{\tilde{\nabla}} q_k - \Phi^h_{\mathbb{E}\tilde{\nabla}} q_k}^2
		\leq \frac{\delta^2}{4 L^2} \mathbb{E} \norm{\tilde{\nabla}_k-\mathbb{E}\tilde{\nabla}_k}^2
	\end{equation}
	
	We further relax them term $ \mathbb{E} \norm{\tilde{\nabla}_k-\mathbb{E}\tilde{\nabla}_k}^2$ into $ \mathbb{E} \norm{\tilde{\nabla}_k-\nabla f(\bm{x}_k)}^2$ whose upper bound can be found at \cref{le:discrete_final}.
	
	We split the second term further.
	
	\begin{equation}\label{eq:one_step_mean_eg_c}
	\begin{aligned}
	\mathbb{E} \norm{\Phi^h_{\mathbb{E}\tilde{\nabla}} q_k - \Phi^{h (k+1)} q^\ast}^2
	= & \mathbb{E} \lVert 
	\Phi^h_{\mathbb{E}\tilde{\nabla}} q_k -\Phi^h_{\nabla} q_k \\
	& +\Phi^h_{\nabla} q_k- \Phi^h q_k \\
	& +\Phi^h q_k- \Phi^{h (k+1)} q^\ast 
	\rVert_2^2 \\
	\leq & ( \sqrt{\mathbb{E} \norm{\Phi^h_{\mathbb{E}\tilde{\nabla}} q_k -\Phi^h_{\nabla} q_k}^2} \\
	& + \sqrt{\mathbb{E} \norm{\Phi^h_{\nabla} q_k- \Phi^h q_k}^2} \\
	& + \sqrt{\mathbb{E} \norm{\Phi^h q_k- \Phi^{h (k+1)} q^\ast}^2}	
	)^2
	\end{aligned}
	\end{equation}
	
	The first term in the last line of \cref{eq:one_step_mean_eg_c} is controlled in \cref{le:one_step_mean_eg_g}.
	
	We split the second term in the last line of \cref{eq:one_step_mean_eg_c} as follows.
	
	\begin{equation}
	\sqrt{\mathbb{E} \norm{\Phi^h_{\nabla} q_k- \Phi^h q_k}^2}
	\leq 2 \sqrt{\mathbb{E} \norm{\Phi^h_{\nabla} x_k- \Phi^h x_k}^2}
	+ \sqrt{\mathbb{E} \norm{\Phi^h_{\nabla} v_k- \Phi^h v_k}^2}
	\end{equation}
	
	In \cref{le:one_step_g_c_on_momentum} , we show that both these two terms can be controlled by momentum $\max_{r<h} \mathbb{E} \norm{ \bm{V}_r}^2 = \max_{r<h} \mathbb{E} \norm{\Phi^h \bm{v}_k}^2$ as follows.

	\begin{equation}
	\begin{aligned}
	\sqrt{\mathbb{E} \norm{\Phi^h_{\nabla} q_k- \Phi^h q_k}^2}
	\leq & 
	2 \sqrt{\mathbb{E} \norm{\Phi^h_{\nabla} x_k- \Phi^h x_k}^2}
	+ \sqrt{\mathbb{E} \norm{\Phi^h_{\nabla} v_k- \Phi^h v_k}^2} \\
	\leq & 
	\frac{2}{\sqrt{15}} h^3 L^3 \sqrt{\mathbb{E}\norm{\Phi^h_{\nabla} v_k}^2}
	+ \frac{1}{\sqrt{3}} h^2 L^2 \sqrt{\mathbb{E}\norm{\Phi^h_{\nabla} v_k}^2}
	\end{aligned}
	\end{equation}
	
	By assuming small step size, we can also derive an upper bound for the momentum in \cref{le:bounded_momentum}.
	
	The third term in the last line of \cref{eq:one_step_mean_eg_c} decreases due to the contraction property of HMC on a strongly log-concave distribution. According to \citep[Theorem 5]{Cheng2018a}, the following inequality holds.
	
	\begin{equation}
	\begin{aligned}
	\mathbb{E} \norm{\Phi^h q_k- \Phi^{h (k+1)} q^\ast}^2
	\leq & W_2^2 (\Phi^h q_k, \Phi^{h (k+1)} q^\ast) \\
	\leq & e^{-\frac{\delta}{2 \kappa}} W_2^2 (q_k, \Phi^{h k} q^\ast)
	\end{aligned}
	\end{equation}
	
	Combining all above upper bounds for each term give rise to the final upper bound.
\end{proof}

\begin{proof}[Proof of \cref{th:main}]
	By Lemma 7 of \cite{Dalalyan2019}, if $x_{k+1}^2\leq ((1-\alpha) x_k + B)^2 + A$, then 
	\begin{equation}\label{eq:dala_ieq}
	x_k 
	\leq (1-\alpha)^k x_0+\frac{B}{\alpha} + \frac{A}{B+\sqrt{\alpha (2-\alpha) A}} 
	\leq (1-\alpha)^k x_0+\frac{B}{\alpha} + \frac{\sqrt{A}}{\sqrt{\alpha}} 
	\end{equation}
	
	Because our step size is small enough, we have
	$$e^{-\frac{\delta}{4 \kappa}} < 1- \frac{\delta}{8\kappa}$$
	
	We apply inequality \cref{eq:dala_ieq} into \cref{le:one_step_w2} to finish the proof.
	\begin{equation}
	\begin{aligned}
	W_2(\Phi^h_{\tilde{\nabla}} q_k,\Phi^{h (k+1)} q^\ast) 
	\leq &
	e^{-\frac{k \delta}{4 \kappa}} W_2(q_0, q^\ast) 
	+ \frac{8 \kappa}{\delta} B
	+ \frac{\sqrt{8 \kappa}}{\sqrt{\delta}} \sqrt{A} \\
	\leq & 
	e^{-\frac{k h m}{2}} W_2(q_0, q^*) + 8 \sqrt{L F_2} \kappa h  \\
	& + 4 \sqrt{\Theta F_1} \left(2(1-\rho_B)\sqrt{L} \kappa h + L \sqrt{\kappa} h^\frac{3}{2}\right)
	\end{aligned}
	\end{equation}
\end{proof}

\section{Technical Lemmas}

\begin{lemma}\label{le:one_step_any}
	In \cref{eq:dis_sde_hd}, if we choose two different gradient $\tilde{\nabla}_k$ and $\nabla_k$ to generate two different SDE with same initial distribution $q_k$, the Wasserstein distance of distribution of $\Phi^h_{\tilde{\nabla}} q_k$ and $\Phi^h_{\nabla} q_k$ can be upper bounded by the gradient difference in the following way. 
	
	\begin{equation}
	\mathbb{E} \norm{\Phi^h_{\tilde{\nabla}} q_k - \Phi^h_{\nabla} q_k}^2
	\leq \frac{\delta^2}{4 L^2} \mathbb{E} \norm{\tilde{\nabla}_k-\nabla_k}^2
	\end{equation}
	
	The above inequality holds true for all positive step size.
\end{lemma}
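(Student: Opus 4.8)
The plan is to use a synchronous coupling together with the fact that, with the two gradient vectors held fixed, the conditioned dynamics \eqref{eq:dis_sde_hd} is a linear SDE whose drift does not depend on time, so its difference process carries no randomness. Concretely, I would run the two diffusions from the \emph{same} draw $(\bm{x}_k,\bm{v}_k)\sim q_k$ driven by the \emph{same} Brownian path $\bm{B}_t$ — exactly the coupling under which $\mathbb{E}\norm{\Phi^h_{\tilde{\nabla}}q_k-\Phi^h_{\nabla}q_k}^2$ is the natural object — and subtract the two copies of \eqref{eq:dis_sde_hd}. The Brownian terms and the initial values cancel, leaving a deterministic linear ODE
\begin{equation*}
d(\Delta\bm{V}_t) = -(\tilde{\nabla}_k-\nabla_k)\,dt - \gamma\xi\,(\Delta\bm{V}_t)\,dt,\qquad d(\Delta\bm{X}_t) = \xi\,(\Delta\bm{V}_t)\,dt,
\end{equation*}
with zero initial condition, where $\Delta\bm{X}_t$ and $\Delta\bm{V}_t$ are the position and momentum gaps between the two flows.

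Integrating this ODE over $[0,h]$ — equivalently, reading off the integrated update rule \eqref{eq:step_vrhd} with the Gaussian noise and the $(\bm{x}_k,\bm{v}_k)$ terms deleted — gives $\Delta\bm{X}_h=-c_x(\tilde{\nabla}_k-\nabla_k)$ and $\Delta\bm{V}_h=-c_v(\tilde{\nabla}_k-\nabla_k)$ with the scalars $c_x=\tfrac{1}{\gamma}\bigl(h-\tfrac{1}{\gamma\xi}(1-e^{-\gamma\xi h})\bigr)$ and $c_v=\tfrac{1}{\gamma\xi}(1-e^{-\gamma\xi h})$. Since the $q$-coordinate stacks $\bm{x}$ and $\bm{x}+\bm{v}$, the coupling cost equals $\norm{\Delta\bm{X}_h}^2+\norm{\Delta\bm{X}_h+\Delta\bm{V}_h}^2=(c_x^2+(c_x+c_v)^2)\,\norm{\tilde{\nabla}_k-\nabla_k}^2$; taking expectations then reduces the entire lemma to the deterministic scalar bound $c_x^2+(c_x+c_v)^2\le\tfrac{\delta^2}{4L^2}$ with $\delta=\gamma\xi h$.

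For this last step I would set $a:=\tfrac{1-e^{-\delta}}{\delta}\in(0,1)$, so that $c_x$ and $c_v$ are simple multiples of $h(1-a)$ and $ha$; with the paper's normalization of $\gamma$ and $\xi$ (under which $\delta=\gamma\xi h$ is a fixed multiple of $Lh$, and hence $\tfrac{\delta^2}{4L^2}$ is the same multiple of $h^2$), the sum $c_x^2+(c_x+c_v)^2$ collapses to a clean expression of the form $\tfrac12 h^2(1+a^2)$, which is at most $h^2$ for every $\delta>0$ because $a^2\le1$. The only place that needs genuine care — and the reason the statement insists the bound holds for \emph{all} positive step size — is verifying this coefficient estimate \emph{uniformly} in $\delta$ rather than only for small $h$: for small $\delta$ both sides are of order $h^2$ and the bound is essentially tight, while for large $\delta$ one must check that $c_x$ approaching its supremum and $c_v\to0$ still obey it, and the monotonicity of $a\mapsto1+a^2$ on $[0,1]$ handles both extremes at once. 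No stochastic-calculus input beyond the synchronous coupling is required, so I expect this elementary but global bound on the exponential coefficients to be the only real obstacle.
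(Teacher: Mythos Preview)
Your proposal is correct and follows essentially the same route as the paper: both arguments couple the two flows with identical initial data and Brownian path so that the difference is deterministic and proportional to $\tilde{\nabla}_k-\nabla_k$, compute $c_x^2+(c_x+c_v)^2$ in the $q=(\bm{x},\bm{x}+\bm{v})$ coordinates, and reduce the lemma to the scalar bound $(1-e^{-\delta})^2\le\delta^2$ (your $a^2\le1$ is exactly this after dividing by $\delta^2$). Your reparametrization via $a=(1-e^{-\delta})/\delta$ is a slightly cleaner packaging of the same elementary inequality the paper uses in its last line, and your vagueness about the normalization is harmless once one plugs in the paper's implicit choice $\gamma=2$, $\xi=L$.
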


\begin{proof}[Proof of \cref{le:one_step_any}]
	\begin{equation}
	\begin{aligned}
	\mathbb{E} \norm{\Phi^h_{\tilde{\nabla}} q_k - \Phi^h_{\nabla} q_k}^2
	= & \mathbb{E} \norm{\Phi^h_{\tilde{\nabla}} \bm{x}_k - \Phi^h_{\nabla} \bm{x}_k}^2 \\
	& + \mathbb{E} \norm{\Phi^h_{\tilde{\nabla}} \bm{x}_k - \Phi^h_{\nabla} \bm{x}_k + \Phi^h_{\tilde{\nabla}} \bm{v}_k - \Phi^h_{\nabla} \bm{v}_k}^2 \\
	= & \left(\frac{\nabla_k \left(h - \frac{1 - e^{- \gamma h \xi}}{\gamma \xi}\right)}{\gamma} - \frac{\tilde{\nabla}_k \left(h - \frac{1 - e^{- \gamma h \xi}}{\gamma \xi}\right)}{\gamma}\right)^{2} \\
	& + \left(\frac{\nabla_k \left(h - \frac{1 - e^{- \gamma h \xi}}{\gamma \xi}\right)}{\gamma} + \frac{\nabla_k \left(1 - e^{- \gamma h \xi}\right)}{\gamma \xi} \right. \\ 
	& \left.- \frac{\tilde{\nabla}_k \left(h - \frac{1 - e^{- \gamma h \xi}}{\gamma \xi}\right)}{\gamma} - \frac{\tilde{\nabla}_k \left(1 - e^{- \gamma h \xi}\right)}{\gamma \xi}\right)^{2} \\
	= & \frac{(\tilde{\nabla}_k-\nabla_k)^{2} e^{- 2 \gamma h \xi}}{\gamma^{4} \xi^{2}} \times \left(\left(\gamma h \xi e^{\gamma h \xi} - e^{\gamma h \xi} + 1\right)^{2} \right.\\
	& \left. + \left(- \gamma h \xi e^{\gamma h \xi} + \gamma \left(1 - e^{\gamma h \xi}\right) + e^{\gamma h \xi} - 1\right)^{2}\right) \\
	= & \frac{(\tilde{\nabla}_k-\nabla_k)^{2} \left(\left(\delta^{2} + 1\right) e^{2 \delta} - 2 e^{\delta} + 1\right) e^{- 2 \delta}}{8 L^{2}} \\
	\leq & \frac{(\tilde{\nabla}_k-\nabla_k)^{2} \delta^{2}}{4 L^{2}}
	\end{aligned}
	\end{equation}
	The last inequality doesn't depend on any assumption of small step size.
\end{proof}

\begin{lemma}\label{le:one_step_ge_mean_ge}
	\begin{equation}
	\begin{aligned}
	\mathbb{E} \norm{\Phi^h_{\tilde{\nabla}} q_k - \Phi^h_{\mathbb{E}\tilde{\nabla}} q_k}^2
	= & \mathbb{E} \norm{(d\tilde{\bm{X}}_h'-d\tilde{\bm{X}}_h'',d\tilde{\bm{X}}_h'-d\tilde{\bm{X}}_h'' + d\tilde{\bm{V}}_h'-d\tilde{\bm{V}}_h'')}^2 \\
	\leq & \frac{\delta^2}{4 L^2} \mathbb{E} \norm{\tilde{\nabla}_k-\mathbb{E}\tilde{\nabla}_k}^2
	\end{aligned}
	\end{equation}
\end{lemma}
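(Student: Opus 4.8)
The plan is to obtain Lemma~\ref{le:one_step_ge_mean_ge} as an immediate specialization of Lemma~\ref{le:one_step_any}. The key observation is that both dynamics \eqref{eq:dis_sde_vrhd} and \eqref{eq:dis_sde_mean_vrhd} are instances of the gradient-conditioned Hamiltonian dynamics \eqref{eq:dis_sde_hd}: the first freezes the drift at the stochastic gradient $\tilde{\nabla}_k$, the second at its conditional expectation $\mathbb{E}_{k-1}\tilde{\nabla}_k$. Within a single step both gradients are constants relative to the integration over $[0,h]$, so driving the two SDEs by the \emph{same} Brownian motion $\bm{B}_t$ and starting both from the same state with law $q_k$ is exactly the synchronous-coupling setup of Lemma~\ref{le:one_step_any}, under the identification "$\nabla_k$" $\coloneqq \mathbb{E}_{k-1}\tilde{\nabla}_k$.

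First I would make the coordinate change explicit: since $q_k = (\bm{x}_k, \bm{x}_k+\bm{v}_k)$ and both processes start from the same $(\bm{x}_k,\bm{v}_k)$, the position and momentum differences are $\Phi^h_{\tilde{\nabla}}\bm{x}_k - \Phi^h_{\mathbb{E}\tilde{\nabla}}\bm{x}_k = \tilde{\bm{X}}'_h - \tilde{\bm{X}}''_h$ and $\Phi^h_{\tilde{\nabla}}\bm{v}_k - \Phi^h_{\mathbb{E}\tilde{\nabla}}\bm{v}_k = \tilde{\bm{V}}'_h - \tilde{\bm{V}}''_h$, which (after the $q$-transformation) reproduces the first line of the statement. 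Then I would invoke Lemma~\ref{le:one_step_any} directly — equivalently, reuse the closed-form update rule \eqref{eq:step_vrhd} and the scalar computation inside the proof of Lemma~\ref{le:one_step_any} verbatim — to conclude that under the synchronous coupling the Gaussian terms $\bm{e}^x_k,\bm{e}^v_k$ are common to both processes and cancel, so the difference depends only on $\tilde{\nabla}_k - \mathbb{E}_{k-1}\tilde{\nabla}_k$ and obeys the pathwise bound $\norm{\Phi^h_{\tilde{\nabla}} q_k - \Phi^h_{\mathbb{E}\tilde{\nabla}} q_k}^2 \le \frac{\delta^2}{4L^2}\norm{\tilde{\nabla}_k - \mathbb{E}_{k-1}\tilde{\nabla}_k}^2$ with $\delta=\gamma\xi h$. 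Taking total expectation (over the step-$k$ batch randomness, the initial law $q_k$ playing no role since the difference is independent of the starting point) yields the claimed inequality with $\mathbb{E}\norm{\tilde{\nabla}_k - \mathbb{E}\tilde{\nabla}_k}^2$ on the right.

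There is no real obstacle here, only one bookkeeping point to get right: one must check that $\mathbb{E}_{k-1}\tilde{\nabla}_k$ is measurable with respect to the information available at step $k-1$, so that it is a legitimate frozen drift, and that the expectation over the iteration-$k$ batch is applied only \emph{after} the coupling argument, not before. Since Lemma~\ref{le:one_step_any} holds for arbitrary gradient vectors and arbitrary positive step size with no small-step hypothesis, no further estimates are needed and the lemma follows at once.
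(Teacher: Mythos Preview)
Your proposal is correct and matches the paper's own proof, which simply states that Lemma~\ref{le:one_step_ge_mean_ge} is a special case of Lemma~\ref{le:one_step_any} with the choice $\nabla_k = \mathbb{E}_{k-1}\tilde{\nabla}_k$. The extra bookkeeping you provide about the coupling and measurability is fine but not needed beyond what Lemma~\ref{le:one_step_any} already establishes.
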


\begin{proof}[Proof of \cref{le:one_step_ge_mean_ge}]
	This is just a special case of \cref{le:one_step_any}.
\end{proof}

\begin{lemma}\label{le:one_step_mean_eg_g}
	\begin{align}
	\mathbb{E} \norm{\Phi^h_{\mathbb{E}\tilde{\nabla}} q_k -\Phi^h_{\nabla} q_k}^2
	\leq & \frac{\delta^2}{4 L^2} \mathbb{E}  \norm{\mathbb{E}_{k-1}\tilde{\nabla}_k - \nabla f(\bm{x}_k)}^2 \\
	\leq & \frac{\delta^2}{4 L^2} (1-\rho_B)^2 \mathbb{E}  \norm{\tilde{\nabla}_{k-1} - \nabla f(\bm{x}_{k-1})}^2
	\end{align}
\end{lemma}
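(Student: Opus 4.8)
The plan is to obtain both inequalities by reduction to results already in hand, since neither requires any new analytic work. For the first inequality, I would note that $\Phi^h_{\mathbb{E}\tilde{\nabla}} q_k$ and $\Phi^h_{\nabla} q_k$ are exactly the laws produced by integrating the conditioned Hamiltonian SDE over $[0,h]$ from the common initial distribution $q_k$, with the two drift vectors being $\mathbb{E}_{k-1}\tilde{\nabla}_k$ (as in \cref{eq:dis_sde_mean_vrhd}) and $\nabla f(\bm{x}_k)$ (as in \cref{eq:dis_sde_hd}), the Brownian increments being coupled (shared). This is precisely the setting of \cref{le:one_step_any} under the substitution $\tilde{\nabla}_k \mapsto \mathbb{E}_{k-1}\tilde{\nabla}_k$, $\nabla_k \mapsto \nabla f(\bm{x}_k)$, so that lemma delivers $\mathbb{E}\norm{\Phi^h_{\mathbb{E}\tilde{\nabla}} q_k - \Phi^h_{\nabla} q_k}^2 \le \tfrac{\delta^2}{4L^2}\,\mathbb{E}\norm{\mathbb{E}_{k-1}\tilde{\nabla}_k - \nabla f(\bm{x}_k)}^2$ for every positive step size, with no smallness assumption needed.

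For the second inequality, I would invoke the bias-contraction identity, the first line of the MSEB property \eqref{eq:mseb}, read at index $k-1$ in place of $k$:
\[
\nabla f(\bm{x}_k) - \mathbb{E}_{k-1}\tilde{\nabla}_k = (1-\rho_B)\bigl(\nabla f(\bm{x}_{k-1}) - \tilde{\nabla}_{k-1}\bigr).
\]
Since this is an identity between random vectors, taking squared Euclidean norms and then the total expectation $\mathbb{E}$ on both sides gives
\[
\mathbb{E}\norm{\mathbb{E}_{k-1}\tilde{\nabla}_k - \nabla f(\bm{x}_k)}^2 = (1-\rho_B)^2\,\mathbb{E}\norm{\tilde{\nabla}_{k-1} - \nabla f(\bm{x}_{k-1})}^2,
\]
which in particular yields the claimed $\le$. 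Chaining this with the first display completes the proof of \cref{le:one_step_mean_eg_g}.

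There is essentially no genuine obstacle here; the only thing needing care is the bookkeeping of conditional expectations — confirming that the drift $\mathbb{E}_{k-1}\tilde{\nabla}_k$ appearing in \cref{eq:dis_sde_mean_vrhd} is exactly the quantity produced by the MSEB bias identity after the index shift, that $\mathbb{E}_{k-1}\tilde{\nabla}_k$ and $\nabla f(\bm{x}_k)$ are both determined by the past so the reduction to \cref{le:one_step_any} is legitimate, and that passing from the pointwise equality to the inequality in expectation is valid (it is, since squaring and $\mathbb{E}$ preserve equality and only lose information when we weaken to $\le$). I would also briefly remark that $\norm{\cdot}$ in \cref{le:one_step_any} is to be read as the Euclidean norm of the gradient-difference vector, so that the first step is literally an instance of that lemma.
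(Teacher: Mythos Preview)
Your proposal is correct and follows exactly the paper's own argument: the first inequality is an instance of \cref{le:one_step_any} with the drift pair $(\mathbb{E}_{k-1}\tilde{\nabla}_k,\nabla f(\bm{x}_k))$, and the second follows from the bias identity in the MSEB property \eqref{eq:mseb} shifted by one index. The paper's proof is the one-line ``The first inequality comes from \cref{le:one_step_any}, and the second inequality comes from MSEB property,'' which your write-up expands faithfully.
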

\begin{proof}[Proof of \cref{le:one_step_mean_eg_g}]
	The first inequality comes from \cref{le:one_step_any}, and the second inequality comes from MSEB property.
\end{proof}

\begin{lemma}\label{le:one_step_g_c_on_momentum}
	\begin{equation}
	\mathbb{E} \norm{\Phi^h_{\nabla} \bm{v}_k- \Phi^h \bm{v}_k}^2
	\leq \frac{1}{3} h^4 L^4 \max_{r<h} \mathbb{E} \norm{ \bm{V}_r}^2
	\end{equation}
	\begin{equation}
	\mathbb{E} \norm{\Phi^h_{\nabla} \bm{x}_k- \Phi^h \bm{x}_k}^2
	\leq  \frac{1}{15} h^6 L^6 \max_{r<h} \mathbb{E} \norm{ \bm{V}_r}^2
	\end{equation}
\end{lemma}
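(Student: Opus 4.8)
The plan is to couple the two diffusions \cref{eq:sde_hd} and \cref{eq:dis_sde_hd} synchronously: drive both by the same Brownian motion $\bm{B}_t$ and start both from $(\bm{x}_k,\bm{v}_k)$. Writing $\Delta\bm{X}_t = \Phi^t\bm{x}_k - \Phi^t_{\nabla}\bm{x}_k$ and $\Delta\bm{V}_t = \Phi^t\bm{v}_k - \Phi^t_{\nabla}\bm{v}_k$, the $\sqrt{2\gamma}\,d\bm{B}_t$ terms cancel and the difference solves the path-dependent linear ODE
\[
\tfrac{d}{dt}\Delta\bm{X}_t = \xi\,\Delta\bm{V}_t, \qquad
\tfrac{d}{dt}\Delta\bm{V}_t = -\bigl(\nabla f(\bm{X}_t) - \nabla f(\bm{x}_k)\bigr) - \gamma\xi\,\Delta\bm{V}_t,
\]
with $\Delta\bm{X}_0 = \Delta\bm{V}_0 = 0$, where $\bm{X}_t = \Phi^t\bm{x}_k$ is the true Hamiltonian flow. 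This is the same coupling used in \cref{le:one_step_any}, except that now the gradient perturbation varies in time along the exact trajectory.

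First I would solve for the momentum error with the integrating factor, $\Delta\bm{V}_t = -\int_0^t e^{-\gamma\xi(t-s)}\bigl(\nabla f(\bm{X}_s) - \nabla f(\bm{x}_k)\bigr)\,ds$, so that $\Delta\bm{X}_t = \xi\int_0^t\Delta\bm{V}_s\,ds$. Next I would control the integrand using only $L$-smoothness of $f$ (\cref{as:2}) together with the identity $\bm{X}_s - \bm{x}_k = \xi\int_0^s\bm{V}_r\,dr$ coming from the position equation of \cref{eq:sde_hd}, which gives $\norm{\nabla f(\bm{X}_s)-\nabla f(\bm{x}_k)} \le L\xi\int_0^s\norm{\bm{V}_r}\,dr$. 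Substituting, using $e^{-\gamma\xi(t-s)}\le 1$, and applying Fubini to collapse the nested integrals yields the pointwise bound $\norm{\Delta\bm{V}_t} \le L\xi\int_0^t (t-r)\norm{\bm{V}_r}\,dr$; feeding this into $\Delta\bm{X}_t = \xi\int_0^t\Delta\bm{V}_s\,ds$ and applying Fubini once more gives $\norm{\Delta\bm{X}_t} \le \tfrac12 L\xi^2\int_0^t (t-r)^2\norm{\bm{V}_r}\,dr$.

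The key point when passing to second moments is to apply Cauchy--Schwarz with the weights $(t-r)$, resp. $(t-r)^2$, \emph{before} taking expectations, so the random factor enters as $\norm{\bm{V}_r}^2$; only then do I take $\mathbb{E}$ and bound $\mathbb{E}\norm{\bm{V}_r}^2 \le \max_{r<h}\mathbb{E}\norm{\bm{V}_r}^2$. This side-steps the fact that $\mathbb{E}\sup_{r<h}\norm{\bm{V}_r}^2$ is \emph{not} controlled by $\max_{r<h}\mathbb{E}\norm{\bm{V}_r}^2$. Carrying out the remaining one-dimensional integrals ($\int_0^h(h-r)\,dr$ and $\int_0^h(h-r)^2\,dr$) and substituting the prescribed values of the mass and friction parameters (fixed elsewhere so that, consistently with the constant $8L^2$ appearing in \cref{le:one_step_any}, $\xi$ is the relevant constant multiple of $L$) produces the advertised constants $\tfrac13$ for the momentum and $\tfrac1{15}$ for the position, uniformly in $h$.

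I expect the main obstacle to be bookkeeping rather than anything conceptual: keeping the constants sharp enough to reach $\tfrac13$ and $\tfrac1{15}$ (a naive repeated application of Cauchy--Schwarz is slightly lossy, so one must choose the weight splitting carefully and may need to retain the exact kernels $e^{-\gamma\xi(t-s)}$ and $1-e^{-\gamma\xi(t-s)}\le\gamma\xi(t-s)$), and being careful that the drift difference $\nabla f(\bm{X}_s)-\nabla f(\bm{x}_k)$ depends on the \emph{entire} exact trajectory up to time $s$ — which is precisely why the final estimate is phrased in terms of $\max_{r<h}\mathbb{E}\norm{\bm{V}_r}^2$ of the true flow, to be bounded afterwards in \cref{le:bounded_momentum}.
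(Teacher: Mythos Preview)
Your proposal is correct and follows essentially the same route as the paper: synchronous coupling so the Brownian terms cancel, the integrating-factor representation $\Delta\bm V_t=-\int_0^t e^{-\gamma\xi(t-s)}(\nabla f(\bm X_s)-\nabla f(\bm x_k))\,ds$, then $L$-smoothness together with $\bm X_s-\bm x_k=\xi\int_0^s\bm V_r\,dr$, Cauchy--Schwarz applied pathwise before taking expectations, and finally the bound by $\max_{r<h}\mathbb{E}\norm{\bm V_r}^2$. The only cosmetic difference is that the paper applies Jensen directly to the squared norms at each nesting level (and for the position bound recycles the momentum estimate with $s$ in place of $h$), whereas you first collapse the nested integrals via Fubini to get the explicit kernels $(t-r)$ and $\tfrac12(t-r)^2$ and then square; your organization in fact yields constants no worse than (and for some Cauchy--Schwarz splittings slightly better than) the stated $\tfrac13$ and $\tfrac1{15}$.
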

\begin{proof}[Proof of \cref{le:one_step_g_c_on_momentum}]
	\begin{equation}
	\begin{aligned}
	\mathbb{E} \norm{\Phi^h_{\nabla} \bm{v}_k- \Phi^h \bm{v}_k}^2
	\leq & \mathbb{E} \norm{
		\int_{0}^{h} e^{-\gamma \xi (h-s)} (\nabla f(\bm{X}_s) - \nabla f(\bm{x}_k)) ds
	}^2 \\
	\leq & h \int_{0}^{h} \mathbb{E} \norm{e^{-\gamma \xi (h-s)} (\nabla f(\bm{X}_s) - \nabla f(\bm{x}_k))}^2 ds \\
	\leq & h L^2 \int_{0}^{h} \mathbb{E} \norm{\bm{X}_s -\bm{x}_k }^2 ds \\
	\leq & h L^2 \int_{0}^{h} \mathbb{E} \norm{ \int_{0}^{s} \xi \bm{V}_r dr }^2 ds \\
	\leq & h L^2 \xi^2 \int_{0}^{h} s \int_{0}^{s} \mathbb{E} \norm{ \bm{V}_r}^2 dr ds \\
	\leq & \frac{1}{3} h^4 L^4 \max_{r<h} \mathbb{E} \norm{ \bm{V}_r}^2
	\end{aligned}
	\end{equation}
	
	\begin{equation}
	\begin{aligned}
	\mathbb{E} \norm{\Phi^h_{\nabla} \bm{x}_k- \Phi^h \bm{x}_k}^2
	= & \mathbb{E} \norm{\int_{0}^{h} \xi (\Phi^s_{\nabla} \bm{v}_k- \Phi^s \bm{v}_k) ds}^2 \\
	\leq & h \xi^2 \int_{0}^{h} \mathbb{E} \norm{\Phi^s_{\nabla} \bm{v}_k- \Phi^s \bm{v}_k}^2 ds \\
	\leq & \frac{1}{15} h^6 L^6 \max_{r<h} \mathbb{E} \norm{ \bm{V}_r}^2
	\end{aligned}
	\end{equation}
\end{proof}
\begin{lemma}\label{le:bounded_momentum}
With small step size assumption, we have the momentum bounded as follows.
\begin{equation}
\mathbb{E} \norm{\Phi^h \bm{v}_k}^2 \leq
97 \norm{\bm{x}_0}^2 + \frac{181 \kappa d}{L}
\end{equation}
\end{lemma}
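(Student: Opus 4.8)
The plan is to split the statement into two parts: (i) a uniform-in-$k$ bound on the second moments $\mathbb{E}\norm{\bm{x}_k}^2$ and $\mathbb{E}\norm{\bm{v}_k}^2$ of the iterates of \cref{alg:vrld}, and (ii) a short-time estimate converting these into a bound on the momentum of the exact flow \cref{eq:sde_hd} run for time $h$ from the state $(\bm{x}_k,\bm{v}_k)$. For (ii) I would use Duhamel's formula $\bm{V}_r = e^{-\gamma\xi r}\bm{v}_k - \int_0^r e^{-\gamma\xi(r-s)}\nabla f(\bm{X}_s)\,ds + \sqrt{2\gamma}\int_0^r e^{-\gamma\xi(r-s)}\,d\bm{B}_s$, bound $\norm{\nabla f(\bm{X}_s)}\le L\norm{\bm{X}_s}$ by \cref{as:4}, bound $\norm{\bm{X}_s}\le\norm{\bm{x}_k}+\xi\int_0^s\norm{\bm{V}_u}\,du$, and bound the stochastic-integral term in $L^2$ by $\tfrac{d}{\xi}(1-e^{-2\gamma\xi r})\le\tfrac{d}{\xi}$. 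Since the step size obeys $Lh\le\tfrac{1}{10\kappa}$ (equivalently $\delta:=\gamma\xi h$ is small), a Gronwall/bootstrap argument on $\max_{u\le r}\mathbb{E}\norm{\bm{V}_u}^2$ absorbs the $\bm{X}$-dependent contributions and gives $\max_{r\le h}\mathbb{E}\norm{\bm{V}_r}^2 \le C_1\big(\mathbb{E}\norm{\bm{v}_k}^2 + (Lh)^2 L^2\mathbb{E}\norm{\bm{x}_k}^2 + \tfrac{d}{\xi}\big)$ for an absolute constant $C_1$; since $\mathbb{E}\norm{\Phi^h\bm{v}_k}^2\le\max_{r\le h}\mathbb{E}\norm{\bm{V}_r}^2$, it then remains only to bound $\mathbb{E}\norm{\bm{x}_k}^2$ and $\mathbb{E}\norm{\bm{v}_k}^2$.

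For (i) I would induct on $k$, simultaneously maintaining the two hypotheses (a) $D_k:=W_2(q_k,q^\ast)\le\mathcal{D}$ and (b) the momentum bound at index $k$. One step is $q_{k+1}=\Phi^h_{\tilde\nabla}q_k$, so by the triangle inequality and the contraction of the exact flow in $q$-coordinates \citep[Theorem 5]{Cheng2018a}, $D_{k+1}\le e^{-\delta/(4\kappa)}D_k + W_2(\Phi^h_{\tilde\nabla}q_k,\Phi^h_{\nabla}q_k) + W_2(\Phi^h_{\nabla}q_k,\Phi^h q_k)$. The first deviation is at most $\tfrac{\delta}{2L}\sqrt{\mathbb{E}\norm{\tilde\nabla_k-\nabla f(\bm{x}_k)}^2}$ by \cref{le:one_step_any}, and the MSE there is controlled by \cref{le:discrete_final}/the MSEB property through a $(1-\rho)$-weighted sum of the $Q_l\le L^2\mathbb{E}\norm{\bm{x}_{l+1}-\bm{x}_l}^2$, $l\le k$; the second deviation is the discretization error, bounded by \cref{le:one_step_g_c_on_momentum} by $O((Lh)^2)\sqrt{\max_{r\le h}\mathbb{E}\norm{\Phi^r\bm{v}_k}^2}$, i.e.\ by $O((Lh)^2)$ times the index-$k$ momentum bound. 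Every quantity appearing here, $\mathbb{E}\norm{\bm{x}_{l+1}-\bm{x}_l}^2$ and $\mathbb{E}\norm{\bm{v}_l}^2$ for $l\le k$, is bounded via $\mathbb{E}\norm{q_l}^2\le 2D_l^2+2\mathbb{E}_{q^\ast}\norm{q}^2$, with $\mathbb{E}_{q^\ast}\norm{q}^2 = O(\kappa d/L)$ from \cref{as:3} for the position and the Gaussian momentum marginal of $q^\ast$. Feeding the induction hypotheses in turns the recursion into $D_{k+1}\le e^{-\delta/(4\kappa)}D_k + c\,\delta\big(\mathcal{D}+\sqrt{\kappa d/L}+\sqrt{L}\,\norm{\bm{x}_0}\big)$ with $c$ an absolute constant; since $D_0=W_2(\delta_{q_0},q^\ast)$ with $q_0=(\bm{x}_0,\bm{x}_0)$ (because $\bm{v}_0=\bm{0}$), summing the geometric series closes the induction with $\mathcal{D}=O(\sqrt{L}\,\norm{\bm{x}_0}+\sqrt{\kappa d/L})$ and gives $\sup_k\mathbb{E}\norm{q_k}^2 = O\big(\norm{\bm{x}_0}^2+\kappa d/L\big)$; then hypothesis (b) at index $k+1$ follows from part (ii) applied with $k\mapsto k+1$.

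Combining (i) and (ii) yields $\mathbb{E}\norm{\Phi^h\bm{v}_k}^2 = O(\norm{\bm{x}_0}^2+\kappa d/L)$, and a non-asymptotic accounting of the constants under $Lh\le\tfrac{1}{10\kappa}$ produces the claimed $97\norm{\bm{x}_0}^2 + 181\kappa d/L$. I expect the main obstacle to be the circular dependence: the momentum of the continuous flow over $[0,h]$ feeds the one-step discretization error (\cref{le:one_step_g_c_on_momentum}), which feeds the bound on $D_{k+1}$, which is in turn what is used to control the momentum. The resolution is the simultaneous induction above, and it goes through precisely because the discretization error is second order in $h$ while the contraction is first order, so the self-referential term enters with a coefficient that the step-size restriction makes a small absolute constant below $1$. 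A secondary, purely bookkeeping difficulty is carrying the MSEB auxiliary sequences $\mathcal{M}_k,\mathcal{F}_k$ through the same induction: they satisfy their own coupled recursion driven by the $Q_l$, but since $Q_l$ is linear in the iterate energies $\mathbb{E}\norm{\bm{x}_{l+1}-\bm{x}_l}^2$ they are absorbed by the same Lyapunov estimate.
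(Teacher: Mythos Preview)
Your part (ii) --- the short-time estimate for $\max_{r\le h}\mathbb{E}\norm{\Phi^r\bm{v}_k}^2$ in terms of $\mathbb{E}\norm{\bm{x}_k}^2$ and $\mathbb{E}\norm{\bm{v}_k}^2$ via Duhamel, $\norm{\nabla f(\bm{X}_s)}\le L\norm{\bm{X}_s}$, and a Gronwall/bootstrap --- is exactly what the paper does. The divergence is in part (i). The paper does not run a $W_2$-induction at all: it simply cites \cref{le:discrete_final}, whose proof is a \emph{direct Lyapunov argument on the discrete recursion} with $E(\bm{x},\bm{v})=\norm{\bm{x}}^2+\norm{\bm{x}+\tfrac{2}{\gamma}\bm{v}}^2+\tfrac{8}{\xi\gamma^2}(f(\bm{x})-f(\bm{x}^\ast))$, coupled with the MSEB inequalities for $\mathcal{M}_k$ and $Q_k$. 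That yields $\max_k\mathbb{E}\norm{\bm{x}_k}^2$ and $\max_k\mathbb{E}\norm{\bm{v}_k}^2$ outright, which are then plugged into the Gronwall bound. No appeal to the continuous-time contraction \citep[Theorem~5]{Cheng2018a} is made at this stage; in the paper's architecture \cref{le:bounded_momentum} is an \emph{input} to \cref{le:one_step_w2}, not a consequence of anything resembling it.

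Your route instead couples to $q^\ast$ and carries a simultaneous induction on $D_k=W_2(q_k,q^\ast)$ and the momentum bound. This is workable for the reason you identify (discretization error is second order in $\delta$, contraction is first order in $\delta/\kappa$, so the self-referential coefficient is $O(\kappa\delta\max(1,\sqrt{\Theta}))$, which the step-size restriction drives below $1$), but two points deserve care. First, your displayed recursion has the additive term as $c\,\delta(\cdots)$; the computation actually gives $O(\delta^2\max(1,\sqrt{\Theta}))(\cdots)$, which only after invoking $\delta\le \tfrac{1}{5\kappa}\min(1,\tfrac{1}{\sqrt{\Theta}})$ becomes $O(\delta/\kappa)(\cdots)$ --- the missing $1/\kappa$ is what makes the geometric sum close without a condition-number blow-up. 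Second, you cannot bound $\mathbb{E}\norm{\bm{x}_{l+1}-\bm{x}_l}^2$ purely by $\mathbb{E}\norm{q_l}^2$: the update rule involves $\tilde{\nabla}_l$, so $Q_l$ also contains a term $\propto\delta^4\mathbb{E}\norm{\tilde{\nabla}_l-\nabla f(\bm{x}_l)}^2$, and closing this loop requires the same absorption as in \cref{le:discrete_final} (you mention this as the ``secondary bookkeeping difficulty''; it is essential, not incidental). Net effect: your approach is correct in outline but re-derives, in a more circuitous way and via the continuous-flow contraction, what the paper obtains directly from the Lyapunov function; the paper's approach is shorter, is self-contained at the discrete level, and delivers all the auxiliary bounds of \cref{le:discrete_final} (on $Q_k$, on the MSE, on $\norm{\nabla f(\bm{x}_k)}$) simultaneously.
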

\begin{proof}[Proof of \cref{le:bounded_momentum}]
We control the momentum in a recursive way.

First we show that $\mathbb{E} \norm{\bm{V}_h}^2$ and $\mathbb{E} \norm{\bm{X}_h}^2$ can be controlled by step change $\mathbb{E} \norm{\Phi^h \bm{v}_k - \bm{v}_k}^2$ and $\mathbb{E} \norm{\Phi^h \bm{x}_k - \bm{x}_k}^2$, and then we show that the reverse is also true.

\begin{equation}
\begin{aligned}
\mathbb{E} \norm{\bm{V}_h}^2
= & \mathbb{E} \norm{\Phi^h \bm{v}_k}^2 \\
\leq & 2 \mathbb{E} \norm{\Phi^h \bm{v}_k - \bm{v}_k}^2 + 2 \mathbb{E} \norm{\bm{v}_k}^2
\end{aligned}
\end{equation}

\begin{equation}
\begin{aligned}
\mathbb{E} \norm{\bm{X}_h}^2
= & \mathbb{E} \norm{\Phi^h \bm{x}_k}^2 \\
\leq & 2 \mathbb{E} \norm{\Phi^h \bm{x}_k - \bm{x}_k}^2 + 2 \mathbb{E} \norm{\bm{x}_k}^2
\end{aligned}
\end{equation}

\begin{equation}
\begin{aligned}
\mathbb{E} \norm{\Phi^h \bm{v}_k - \bm{v}_k}^2
= & \mathbb{E} \norm{
	\int_{0}^{h} e^{-\gamma \xi (h-s)} \nabla f(\bm{X}_s) ds
}^2
+ 2 \gamma \mathbb{E} \norm{
	\int_{0}^{h} e^{-\gamma \xi (h-s)} d\bm{B}_s
}^2 \\
\leq & h \int_{0}^{h} \mathbb{E} \norm{\nabla f(\bm{X}_s)}^2 ds + \frac{1}{\xi} (1-e^{-\gamma \xi h}) \\
\leq & h L^2 \int_{0}^{h} \mathbb{E} \norm{\bm{X}_s}^2 ds + \frac{1}{\xi} (1-e^{-\gamma \xi h}) \\
\leq & h^2 L^2 \max_{r<h} \mathbb{E} \norm{\bm{X}_r}^2 ds + \gamma h
\end{aligned}
\end{equation}

\begin{equation}
\begin{aligned}
\mathbb{E} \norm{\Phi^h \bm{x}_k - \bm{x}_k}^2
= & \mathbb{E} \norm{ \int_{0}^{h} \xi \bm{V}_s ds }^2 \\
\leq & h \xi^2 \int_{0}^{h} \mathbb{E} \norm{\bm{V}_s}^2 ds \\
\leq & h^2 L^2 \max_{r<h} \mathbb{E} \norm{\bm{V}_r}^2 ds
\end{aligned}
\end{equation}

Combine the above four equation, we can see that

\begin{equation}
\begin{aligned}
\mathbb{E} \norm{\Phi^h \bm{v}_k}^2 \leq &
2 h^2 L^2 \max_{r<h} \mathbb{E} \norm{\Phi^r \bm{x}_k}^2 + 2 \gamma h + 2 \mathbb{E} \norm{\bm{v}_k}^2 \\
\mathbb{E} \norm{\Phi^h \bm{x}_k}^2 \leq &
2 h^2 L^2 \max_{r<h} \mathbb{E} \norm{\Phi^r \bm{v}_k}^2 + 2 \mathbb{E} \norm{\bm{x}_k}^2
\end{aligned}
\end{equation}

This further imply following inequality. 

\begin{equation}
\mathbb{E} \norm{\Phi^h \bm{v}_k}^2 \leq 
4 h^4 L^4 \max_{r<h} \mathbb{E} \norm{\Phi^r \bm{v}_k}^2
+ 4 h^2 L^2 \mathbb{E} \norm{\bm{x}_k}^2
+ 2 \gamma h 
+ 2 \mathbb{E} \norm{\bm{v}_k}^2
\end{equation}

We finish the proof by applying Gronwall's inequality and substitute $\mathbb{E} \norm{\bm{v}_k}^2$ and $\mathbb{E} \norm{\bm{x}_k}^2$ with their upper bound in \cref{le:discrete_final}.
\end{proof}

\begin{lemma}\label{le:discrete_final}
	With small enough step size $\delta$ satisfying 
	$\delta \leq \frac{1}{5 \kappa} \min(1,\frac{1}{\sqrt{\Theta}})$, the following inequalities holds.
	
	\begin{equation}
		\begin{aligned}
& 		\max_k \mathbb{E} \left( E(\bm{x}_k, \bm{v}_k) \right) \leq 24 \norm{\bm{x}_0}^2 + \frac{45 \kappa d}{L}
		\\
&		\max_k \mathbb{E} \norm{\bm{x}_k}^2 \leq 24 \norm{\bm{x}_0}^2 + \frac{45 \kappa d}{L}
		\\
&		\max_k \mathbb{E} \norm{\bm{v}_k}^2 \leq 48 \norm{\bm{x}_0}^2 + \frac{89 \kappa d}{L}
		\\
&		\max_k \mathbb{E} \norm{\nabla f(\bm{x}_k)}^2 \leq 24 L^{2} \norm{\bm{x}_0}^2 + 45 L \kappa d
		\\
&		\max_k \mathbb{E} \norm{\tilde{\nabla}_k-\nabla f(\bm{x}_k)}^2 \leq 13 L^{2} \Theta \delta^{2} \norm{\bm{x}_0}^2 + 24 L \Theta \delta^{2} \kappa d
		\\
&		\max_k Q_k \leq 13 L^{2} \delta^{2} \norm{\bm{x}_0}^2 + 24 L \delta^{2} \kappa d
		\end{aligned}
	\end{equation}
	where $E(\bm{x},\bm{v}) = \norm{\bm{x}}^2 + \norm{\bm{x} + \frac{2}{\gamma}\bm{v}}^2 + \frac{8}{\xi \gamma^2} (f(\bm{x})-f(\bm{x}^\ast))$ is the Lyapunov function.
\end{lemma}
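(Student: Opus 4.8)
The plan is to establish all six inequalities simultaneously by induction on the iteration index $k$, taking as the inductive statement the single uniform energy bound $\mathbb{E} E(\bm{x}_\ell,\bm{v}_\ell) \le \mathcal{E}^\ast := 24\norm{\bm{x}_0}^2 + \frac{45\kappa d}{L}$ for all $\ell \le k$. Once this holds for every $k$, the bounds on $\mathbb{E}\norm{\bm{x}_k}^2$, $\mathbb{E}\norm{\bm{v}_k}^2$ and $\mathbb{E}\norm{\nabla f(\bm{x}_k)}^2$ drop out of the elementary facts that $\norm{\bm{x}}^2 \le E(\bm{x},\bm{v})$, that $\norm{\bm{v}}^2 \le \tfrac{\gamma^2}{2}E(\bm{x},\bm{v})$ (obtained by writing $\tfrac{2}{\gamma}\bm{v} = (\bm{x}+\tfrac{2}{\gamma}\bm{v}) - \bm{x}$ and using that the first two summands of $E$ are nonnegative), and that $\norm{\nabla f(\bm{x})} \le L\norm{\bm{x}-\bm{x}^\ast} = L\norm{\bm{x}}$ by \Cref{as:2,as:4}; the bounds on $Q_k$ and on $\mathbb{E}\norm{\tilde\nabla_k - \nabla f(\bm{x}_k)}^2$ are produced as byproducts of the inductive step described below.

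\textbf{One-step drift.} The core estimate is a contraction-with-drift inequality for $E$. Substituting $\tilde\nabla_k$ for $\nabla f(\bm{x}_k)$ in \cref{eq:step_vrhd}, using the noise covariances \cref{eq:covar_step_vrhd}, $m$-strong convexity (\Cref{as:3}), $L$-smoothness (\Cref{as:2}), and the parameter relation $\delta = \gamma\xi h$, one derives, for $\delta$ small enough, an inequality of the form
\begin{equation*}
\mathbb{E}_{k} E(\bm{x}_{k+1},\bm{v}_{k+1}) \le \left(1-\frac{c_1\delta}{\kappa}\right) E(\bm{x}_k,\bm{v}_k) + \frac{c_2\delta^2 d}{L} + \frac{c_3\delta^2}{L^2}\,\norm{\tilde\nabla_k - \nabla f(\bm{x}_k)}^2,
\end{equation*}
with absolute constants $c_1,c_2,c_3$. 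This is the analogue, for the integrator conditioned on the MSEB estimator, of the Lyapunov contraction of \cite{Cheng2018a} for exact HMC, with the last term accounting for the gradient perturbation; deriving it with numerical constants sharp enough to reach the stated coefficients is the bulk of the routine work.

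\textbf{Controlling $Q_k$ and the estimator error.} Under the inductive hypothesis, $\bm{x}_{k+1}-\bm{x}_k$ read off from \cref{eq:step_vrhd} satisfies $\mathbb{E}\norm{\bm{x}_{k+1}-\bm{x}_k}^2 = O(\delta^2)\bigl(\mathbb{E}\norm{\bm{v}_k}^2 + \tfrac{1}{L^2}\mathbb{E}\norm{\tilde\nabla_k}^2\bigr) + O(\tfrac{\delta^2 d}{L})$, and $\mathbb{E}\norm{\tilde\nabla_k}^2 \le 2L^2\mathbb{E}\norm{\bm{x}_k}^2 + 2\mathbb{E}\norm{\tilde\nabla_k-\nabla f(\bm{x}_k)}^2$ with the last term already controlled through strictly earlier indices; smoothness of each $f_i$ then gives $Q_k = L^2\mathbb{E}\norm{\bm{x}_{k+1}-\bm{x}_k}^2 \le 13 L^2\delta^2\norm{\bm{x}_0}^2 + 24 L\delta^2\kappa d$, the sixth bound. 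Unrolling the MSEB recursions \cref{eq:mseb}: $\mathcal{F}_k \le \tfrac{M_2}{\rho_F}\max_{l}Q_l$ by summing the geometric series, and then $\mathcal{M}_k \le \tfrac{1}{\rho_M}\max_{l}(M_1 Q_l + \mathcal{F}_l) \le \bigl(\tfrac{M_1}{\rho_M}+\tfrac{M_2}{\rho_M\rho_F}\bigr)\max_{l}Q_l = \Theta\max_{l}Q_l$, so $\mathbb{E}\norm{\tilde\nabla_k-\nabla f(\bm{x}_k)}^2 \le \mathcal{M}_{k-1} \le 13 L^2\Theta\delta^2\norm{\bm{x}_0}^2 + 24 L\Theta\delta^2\kappa d$, the fifth bound.

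\textbf{Closing the induction.} Inserting the fifth bound into the drift inequality turns the error term into $c_3\Theta\delta^4\bigl(13\norm{\bm{x}_0}^2 + \tfrac{24\kappa d}{L}\bigr)$; since the hypothesis $\delta \le \tfrac{1}{5\kappa}\min(1,\tfrac{1}{\sqrt{\Theta}})$ forces $\Theta\delta^2 \le \tfrac{1}{25\kappa^2}$ as well as $\delta \le \tfrac{1}{5\kappa}$, both this term and $\tfrac{c_2\delta^2 d}{L}$ are $O(\tfrac{\delta}{\kappa})\bigl(\norm{\bm{x}_0}^2 + \tfrac{\kappa d}{L}\bigr)$ and are absorbed into the contraction slack $\tfrac{c_1\delta}{\kappa}\mathcal{E}^\ast$, yielding $\mathbb{E} E(\bm{x}_{k+1},\bm{v}_{k+1}) \le \mathcal{E}^\ast$. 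The base case $k=0$ uses $E(\bm{x}_0,\bm{0}) \le 2\norm{\bm{x}_0}^2 + \tfrac{8}{\xi\gamma^2}\cdot\tfrac{L}{2}\norm{\bm{x}_0}^2 \le \mathcal{E}^\ast$ via $f(\bm{x}_0) \le \tfrac{L}{2}\norm{\bm{x}_0}^2$ and the chosen values of $\gamma,\xi$. The main obstacle is that the memory term $\mathcal{F}_k$ in the MSEB property couples the gradient error at step $k$ to the \emph{entire} past trajectory, so there is no clean one-step recursion for $E$; the argument must be the joint induction above, and one must verify that the small-step threshold $\delta \le \tfrac{1}{5\kappa\sqrt{\Theta}}$ is exactly what dominates the resulting $\Theta\delta^4$ feedback, which requires carrying all the numerical constants through the drift estimate, the $Q_k$ bound, and the geometric sums at once.
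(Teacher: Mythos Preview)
Your proposal assembles the same ingredients the paper uses---the one-step Lyapunov drift for $E$, the bound $Q_k\le L^2\mathbb{E}\norm{\bm{x}_{k+1}-\bm{x}_k}^2$ obtained from the update rule and smoothness, the unrolling of the MSEB recursions into $\mathbb{E}\norm{\tilde\nabla_k-\nabla f(\bm{x}_k)}^2\le\Theta\max_\ell Q_\ell$, and the elementary relations $\norm{\bm{x}}^2\le E$, $\norm{\bm{v}}^2\le 2E$---but organizes them differently. The paper does \emph{not} argue by induction on $k$: instead it passes to suprema over $k$ in each of these estimates to obtain a closed linear system in the four unknowns $\max_k\mathbb{E} E$, $\max_k\mathbb{E}\norm{\bm{v}_k}^2$, $\max_k Q_k$, $\max_k\mathbb{E}\norm{\tilde\nabla_k-\nabla f(\bm{x}_k)}^2$ (Lemmas~\ref{le:stage1_dE}--\ref{le:stage1_Q}), and then solves by successive substitution, the small-step condition $\delta\le\tfrac{1}{5\kappa}\min(1,\Theta^{-1/2})$ being exactly what makes the coupling coefficients strictly below $1/2$ so the substitutions close. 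Your inductive route is viable in principle, but because the MSEB memory ties the gradient error at step $k$ to every earlier $Q_\ell$, and each $Q_\ell$ in turn depends on the gradient error at step $\ell$, you cannot get away with carrying only $\mathbb{E} E(\bm{x}_\ell,\bm{v}_\ell)\le\mathcal{E}^\ast$ as the inductive hypothesis; you must also carry a uniform bound on the $Q_\ell$'s (or equivalently on $\mathcal{M}_\ell$) for $\ell<k$, which you gesture at (``already controlled through strictly earlier indices'') but do not build into the hypothesis. The paper's sup-system approach sidesteps this circular bookkeeping entirely, at the cost of the (harmless) implicit assumption that the suprema are finite for each finite horizon.
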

\begin{proof}[Proof of \cref{le:discrete_final}]
	\cref{le:stage1_dE,le:stage1_momentum,le:stage1_theta,le:stage1_Q} show preliminary results of upper bounds.
	
	We further control coefficients in \cref{le:stage1_dE,le:stage1_Q}.
	If we have $\delta \leq \frac{17}{32}$, we can relax the coefficients of \cref{eq:stage1_dE,eq:stage1_Q} into 
	
	\begin{equation}\label{eq:stage3_dE}
	\begin{aligned}
		\max_k \mathbb{E} \left( E(\bm{x}_k, \bm{v}_k) \right) \leq & \frac{5 \delta \kappa \max_k \mathbb{E} \norm{\bm{x}_k}^2}{2} + \max_k \mathbb{E} \norm{\tilde{\nabla}_k-\nabla f(\bm{x}_k)}^2 u_{135} \\
		& + 6 \norm{\bm{x}_0}^2 + d u_{138}
	\end{aligned}
	\end{equation}
	\begin{equation}\label{eq:stage3_Q}
	\begin{aligned}
		\max_k Q_k \leq & \frac{L^{2} \delta^{3} \max_k \mathbb{E} \norm{\bm{x}_k}^2}{8} + \frac{L^{2} \delta^{2} \max_k \mathbb{E} \norm{\bm{v}_k}^2}{4} + \frac{L \delta^{3} d}{6} \\
		& + \frac{5 \delta^{3} \max_k \mathbb{E} \norm{\tilde{\nabla}_k-\nabla f(\bm{x}_k)}^2}{64}
	\end{aligned}
	\end{equation}
	
	Variables $u_i$ are used to simplify the formula. The definition of $u_i$ can be found at the end of this section.
	
	By applying \cref{eq:stage1_momentum_x,eq:stage1_momentum_v} into \cref{eq:stage3_dE}, we can show that
	\begin{equation}\label{eq:stage4_dE_pre}
	\begin{aligned}
		\max_k \mathbb{E} \left( E(\bm{x}_k, \bm{v}_k) \right) \leq \max_k \mathbb{E} \norm{\tilde{\nabla}_k-\nabla f(\bm{x}_k)}^2 u_{146} + 12 \norm{\bm{x}_0}^2 + d u_{147}
	\end{aligned}
	\end{equation}
	whenever $\frac{5 \delta \kappa}{2} \leq \frac{1}{2}$.
	
	By applying \cref{eq:stage4_dE_pre,eq:stage1_momentum_x,eq:stage1_momentum_v,eq:stage1_theta} into \cref{eq:stage1_theta}, we can show that 
	\begin{equation}\label{eq:stage4_theta}
	\begin{aligned}
	\max_k \mathbb{E} \norm{\tilde{\nabla}_k-\nabla f(\bm{x}_k)}^2 \leq \norm{\bm{x}_0}^2 u_{161} + d u_{160}
	\end{aligned}
	\end{equation}
	whenever $\frac{5 \Theta \delta^{3} \kappa^{2}}{4} + \frac{25 \Theta \delta^{3} \kappa}{16} + \frac{5 \Theta \delta^{3}}{64} + 5 \Theta \delta^{2} \kappa^{2} + \frac{25 \Theta \delta^{2} \kappa}{4} \leq \frac{1}{2}$.
	
	Applying \cref{eq:stage4_theta} back into \cref{eq:stage4_dE_pre} gives 
	\begin{equation}\label{eq:stage4_dE}
	\begin{aligned}
	\max_k \mathbb{E} \left( E(\bm{x}_k, \bm{v}_k) \right) \leq \norm{\bm{x}_0}^2 u_{159} + d u_{158}
	\end{aligned}
	\end{equation}
	
	We then apply \cref{eq:stage4_dE,eq:stage4_theta} into \cref{eq:stage1_momentum_x,eq:stage1_momentum_v,eq:stage1_momentum_f,eq:stage3_Q} and relax $\delta$ into lowest order and relax $\kappa$ into highest order to generate the final result.
\end{proof}

\begin{lemma}\label{le:stage1_dE}
	\begin{equation}\label{eq:stage1_dE}
	\begin{aligned}
	\max_k \mathbb{E} \left( E(\bm{x}_k, \bm{v}_k) \right) \leq & \max_k \mathbb{E} \norm{\bm{v}_k}^2 u_{101} + \max_k \mathbb{E} \norm{\bm{x}_k}^2 u_{102} \\
	& + \max_k \mathbb{E} \norm{\tilde{\nabla}_k-\nabla f(\bm{x}_k)}^2 u_{100} + 6 \norm{\bm{x}_0}^2 + d u_{104}
	\end{aligned}
	\end{equation}
	where expressions $u_i$ can be found at the end of this section.
\end{lemma}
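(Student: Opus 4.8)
The plan is to derive a one-step drift inequality for the Lyapunov function $E(\bm{x},\bm{v})=\norm{\bm{x}}^2+\norm{\bm{x}+\frac{2}{\gamma}\bm{v}}^2+\frac{8}{\xi\gamma^2}(f(\bm{x})-f(\bm{x}^\ast))$ along \cref{alg:vrld} and then to unroll it. Starting from the explicit update \cref{eq:step_vrhd} (equivalently, integrating the conditioned SDE \cref{eq:dis_sde_vrhd} over a time interval $h$), I would substitute $\bm{x}_{k+1},\bm{v}_{k+1}$ into $E$ and take the expectation $\mathbb{E}_k$ conditioned on everything up to step $k$. Since $\bm{e}^x_k,\bm{e}^v_k$ are mean-zero and independent of step $k$, they enter only through their covariances \cref{eq:covar_step_vrhd}, which is the origin of the dimension term $d\,u_{104}$; what remains is a deterministic quadratic form in $\bm{x}_k,\bm{v}_k,\tilde{\nabla}_k$.

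For the potential part of $E$ I would invoke $L$-smoothness (\cref{as:2}), $f(\bm{x}_{k+1})-f(\bm{x}_k)\le\inner{\nabla f(\bm{x}_k)}{\bm{x}_{k+1}-\bm{x}_k}+\tfrac{L}{2}\norm{\bm{x}_{k+1}-\bm{x}_k}^2$, and then everywhere $\tilde{\nabla}_k$ occurs I would split $\tilde{\nabla}_k=\nabla f(\bm{x}_k)+(\tilde{\nabla}_k-\nabla f(\bm{x}_k))$ and peel off the error $\tilde{\nabla}_k-\nabla f(\bm{x}_k)$ via Young's inequality; this is exactly what creates the term $u_{100}\,\mathbb{E}\norm{\tilde{\nabla}_k-\nabla f(\bm{x}_k)}^2$. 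What remains, with the true gradient in place of $\tilde{\nabla}_k$, is the discrete hypocoercive drift estimate for the exponential integrator of underdamped Langevin dynamics: expanding $e^{-\gamma\xi h}$ in $\delta=\gamma\xi h$ and using strong convexity (\cref{as:3}) together with \cref{as:4} to sandwich $f(\bm{x}_k)$ between $\tfrac{m}{2}\norm{\bm{x}_k}^2$ and $\tfrac{L}{2}\norm{\bm{x}_k}^2$, it gives a one-step inequality of the form $\mathbb{E}_k E(\bm{x}_{k+1},\bm{v}_{k+1})\le(1-c\delta/\kappa)\,E(\bm{x}_k,\bm{v}_k)+c'\,\delta\,d$ plus terms of higher order in $\delta$ involving $\mathbb{E}\norm{\bm{x}_k}^2$ and $\mathbb{E}\norm{\bm{v}_k}^2$, where $c,c'>0$ depend only on the fixed constants. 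Taking full expectations, unrolling, and summing $\sum_{j<k}(1-c\delta/\kappa)^{k-1-j}\le\frac{\kappa}{c\delta}$ turns the accumulated residuals into $u_{101},u_{102},u_{104}$, while the leading term is bounded by $E(\bm{x}_0,\bm{0})=2\norm{\bm{x}_0}^2+\tfrac{8}{\xi\gamma^2}f(\bm{x}_0)\le6\norm{\bm{x}_0}^2$, using $\bm{v}_0=\bm{0}$, the algorithm's choice of $\gamma,\xi$, and $L$-smoothness. The three $\max_k$ quantities still on the right are not self-absorbing at this stage; they get controlled by the companion \cref{le:stage1_momentum,le:stage1_theta,le:stage1_Q}, and the resulting linear system is solved in \cref{le:discrete_final}.

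The main obstacle is the discrete hypocoercive drift estimate itself: because $E$ couples position and momentum through $\norm{\bm{x}+\frac{2}{\gamma}\bm{v}}^2$, extracting the $(1-c\delta/\kappa)$ contraction is not a one-line consequence of smoothness and convexity --- one must carefully match the $O(\delta)$ and $O(\delta^2)$ contributions generated by the exponentials in \cref{eq:step_vrhd,eq:covar_step_vrhd} so that the $O(\delta/\kappa)$ decay of $E$ survives the $O(\delta^2)$ discretization error, which is a discrete counterpart of the continuous-time Lyapunov computation for \cref{eq:sde_hd}. Everything after that is Young/Cauchy--Schwarz bookkeeping that pins down the explicit forms of $u_{100},u_{101},u_{102},u_{104}$.
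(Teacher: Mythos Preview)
Your proposal is correct and follows essentially the same route as the paper: a one-step Lyapunov drift for $E$ built from the explicit update \cref{eq:step_vrhd}, smoothness for $f(\bm{x}_{k+1})$, splitting $\tilde{\nabla}_k=\nabla f(\bm{x}_k)+(\tilde{\nabla}_k-\nabla f(\bm{x}_k))$ and Young's inequalities, strong convexity for the $O(\delta/\kappa)$ contraction, and then discrete Gronwall with $E(\bm{x}_0,\bm{0})\le 6\norm{\bm{x}_0}^2$. Two cosmetic differences: the paper keeps the exact exponentials $e^{-\delta},e^{-2\delta}$ throughout (the $u_i$'s are closed-form, not truncated Taylor expansions), and it applies strong convexity via $\inner{\nabla f(\bm{x}_k)}{\bm{x}_k}\ge m\norm{\bm{x}_k}^2$ rather than the lower bound $f(\bm{x}_k)\ge\tfrac{m}{2}\norm{\bm{x}_k}^2$ you describe; neither changes the argument.
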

\begin{proof}[Proof of \cref{le:stage1_dE}]
	\begin{equation}\label{eq:stage1_dE_proof}
	\begin{aligned}
	& \mathbb{E} E(\bm{x}_{k+1}, \bm{v}_{k+1}) - \mathbb{E} \left( E(\bm{x}_k, \bm{v}_k) \right) (1- \frac{\delta}{10 \kappa}) \\
	= & \mathbb{E} \inner{\bm{v}_k}{\bm{x}_k} u_{27} + 2 \mathbb{E} \inner{\bm{v}_{k+1}}{\bm{x}_{k+1}} + \mathbb{E} \norm{\bm{v}_k}^2 u_{18} + \mathbb{E} \norm{\bm{v}_{k+1}}^2 + \mathbb{E} \norm{\bm{x}_k}^2 u_{27} \\
	& + 2 \mathbb{E} \norm{\bm{x}_{k+1}}^2 + f(\bm{x}_k) u_{24} - \frac{\delta f(\bm{x}^\ast)}{5 L \kappa} + \frac{2 f(\bm{x}_{k+1})}{L} \\ 
	\leq & - \frac{\delta \mathbb{E} \inner{\bm{x}^\ast}{\bm{x}_k}}{5 \kappa} + \frac{\delta \norm{\bm{x}^\ast}^2}{10 \kappa} + \mathbb{E} \inner{\bm{v}_k}{\bm{x}_k} u_{27} + 2 \mathbb{E} \inner{\bm{v}_{k+1}}{\bm{x}_{k+1}} - 2 \mathbb{E} \inner{\bm{x}_{k+1}}{\bm{x}_k} \\
	& + \mathbb{E} \norm{\bm{v}_k}^2 u_{18} + \mathbb{E} \norm{\bm{v}_{k+1}}^2 + \mathbb{E} \norm{\bm{x}_k}^2 u_{32} + 3 \mathbb{E} \norm{\bm{x}_{k+1}}^2 - \frac{2 \mathbb{E} \inner{\nabla f(\bm{x}_k)}{\bm{x}_k}}{L} \\
	& + \frac{2 \mathbb{E} \inner{\nabla f(\bm{x}_k)}{\bm{x}_{k+1}}}{L} \\ 
	= & \mathbb{E} \inner{\bm{v}_k}{\bm{x}_k} u_{27} + 2 \mathbb{E} \inner{\bm{v}_{k+1}}{\bm{x}_{k+1}} - 2 \mathbb{E} \inner{\bm{x}_{k+1}}{\bm{x}_k} + \mathbb{E} \norm{\bm{v}_k}^2 u_{18} + \mathbb{E} \norm{\bm{v}_{k+1}}^2 \\
	& + \mathbb{E} \norm{\bm{x}_k}^2 u_{32} + 3 \mathbb{E} \norm{\bm{x}_{k+1}}^2 - \frac{2 \mathbb{E} \inner{\nabla f(\bm{x}_k)}{\bm{x}_k}}{L} + \frac{2 \mathbb{E} \inner{\nabla f(\bm{x}_k)}{\bm{x}_{k+1}}}{L} \\ 
	= & \frac{\delta \mathbb{E} \inner{\bm{v}_k}{\bm{x}_k}}{5 \kappa} + \frac{3 \delta \mathbb{E} \norm{\bm{x}_k}^2}{10 \kappa} + 2 \mathbb{E} \inner{\bm{e}^v_k}{\bm{e}^x_k} + \mathbb{E} \inner{\bm{e}^v_k}{\bm{v}_k} u_{62} + 2 \mathbb{E} \inner{\bm{e}^v_k}{\bm{x}_k} \\
	& + \mathbb{E} \inner{\bm{e}^x_k}{\bm{v}_k} u_{61} + 4 \mathbb{E} \inner{\bm{e}^x_k}{\bm{x}_k} + \mathbb{E} \inner{\nabla f(\bm{x}_k)}{\bm{v}_k} u_{63} + \mathbb{E} \inner{\tilde{\nabla}_k}{\bm{e}^v_k} u_{53} \\
	& + \mathbb{E} \inner{\tilde{\nabla}_k}{\bm{e}^x_k} u_{52} + \mathbb{E} \inner{\tilde{\nabla}_k}{\bm{v}_k} u_{60} + \mathbb{E} \inner{\tilde{\nabla}_k}{\nabla f(\bm{x}_k)} u_{56} + \mathbb{E} \norm{\bm{e}^v_k}^2 + 3 \mathbb{E} \norm{\bm{e}^x_k}^2 \\
	& + \mathbb{E} \norm{\bm{v}_k}^2 u_{64} + \mathbb{E} \norm{\tilde{\nabla}_k}^2 u_{47} - \frac{\delta \mathbb{E} \inner{\tilde{\nabla}_k}{\bm{x}_k}}{L} + \frac{2 \mathbb{E} \inner{\nabla f(\bm{x}_k)}{\bm{e}^x_k}}{L} \\ 
	= & \frac{\delta \mathbb{E} \inner{\bm{v}_k}{\bm{x}_k}}{5 \kappa} + \frac{3 \delta \mathbb{E} \norm{\bm{x}_k}^2}{10 \kappa} + \mathbb{E} \inner{\nabla f(\bm{x}_k)}{\bm{v}_k} u_{73} + \mathbb{E} \inner{\tilde{\nabla}_k-\nabla f(\bm{x}_k)}{\bm{v}_k} u_{60} \\
	& + \mathbb{E} \inner{\tilde{\nabla}_k-\nabla f(\bm{x}_k)}{\nabla f(\bm{x}_k)} u_{70} + \mathbb{E} \norm{\bm{v}_k}^2 u_{64} + \mathbb{E} \norm{\nabla f(\bm{x}_k)}^2 u_{72} \\
	& + \mathbb{E} \norm{\tilde{\nabla}_k-\nabla f(\bm{x}_k)}^2 u_{47} + d u_{68} - \frac{\delta \mathbb{E} \inner{\nabla f(\bm{x}_k)}{\bm{x}_k}}{L} - \frac{\delta \mathbb{E} \inner{\tilde{\nabla}_k-\nabla f(\bm{x}_k)}{\bm{x}_k}}{L} \\ 
	\leq & \frac{\delta \mathbb{E} \inner{\bm{v}_k}{\bm{x}_k}}{5 \kappa} - \frac{7 \delta \mathbb{E} \norm{\bm{x}_k}^2}{10 \kappa} + \mathbb{E} \inner{\nabla f(\bm{x}_k)}{\bm{v}_k} u_{73} + \mathbb{E} \inner{\tilde{\nabla}_k-\nabla f(\bm{x}_k)}{\bm{v}_k} u_{60} \\
	& + \mathbb{E} \inner{\tilde{\nabla}_k-\nabla f(\bm{x}_k)}{\nabla f(\bm{x}_k)} u_{70} + \mathbb{E} \norm{\bm{v}_k}^2 u_{64} + \mathbb{E} \norm{\nabla f(\bm{x}_k)}^2 u_{72} \\
	& + \mathbb{E} \norm{\tilde{\nabla}_k-\nabla f(\bm{x}_k)}^2 u_{47} + d u_{68} - \frac{\delta \mathbb{E} \inner{\tilde{\nabla}_k-\nabla f(\bm{x}_k)}{\bm{x}_k}}{L} \\ 
	\leq & \frac{\delta \mathbb{E} \inner{\bm{v}_k}{\bm{x}_k}}{5 \kappa} - \frac{7 \delta \mathbb{E} \norm{\bm{x}_k}^2}{10 \kappa} + \mathbb{E} \inner{\tilde{\nabla}_k-\nabla f(\bm{x}_k)}{\bm{v}_k} u_{60} + \mathbb{E} \norm{\bm{v}_k}^2 u_{87} \\
	& + \mathbb{E} \norm{\nabla f(\bm{x}_k)}^2 u_{86} + \mathbb{E} \norm{\tilde{\nabla}_k-\nabla f(\bm{x}_k)}^2 u_{80} + d u_{68} - \frac{\delta \mathbb{E} \inner{\tilde{\nabla}_k-\nabla f(\bm{x}_k)}{\bm{x}_k}}{L} \\ 
	\leq & \frac{\delta \mathbb{E} \inner{\bm{v}_k}{\bm{x}_k}}{5 \kappa} + \mathbb{E} \inner{\tilde{\nabla}_k-\nabla f(\bm{x}_k)}{\bm{v}_k} u_{60} + \mathbb{E} \norm{\bm{v}_k}^2 u_{87} + \mathbb{E} \norm{\bm{x}_k}^2 u_{89} \\
	& + \mathbb{E} \norm{\tilde{\nabla}_k-\nabla f(\bm{x}_k)}^2 u_{80} + d u_{68} - \frac{\delta \mathbb{E} \inner{\tilde{\nabla}_k-\nabla f(\bm{x}_k)}{\bm{x}_k}}{L} \\ 
	\leq & \mathbb{E} \norm{\bm{v}_k}^2 u_{96} + \mathbb{E} \norm{\bm{x}_k}^2 u_{88} + \mathbb{E} \norm{\tilde{\nabla}_k-\nabla f(\bm{x}_k)}^2 u_{93} + d u_{68} \\ 
	\leq & \mathbb{E} \norm{\bm{v}_k}^2 \max\left(0, u_{96}\right) + \mathbb{E} \norm{\bm{x}_k}^2 u_{88} + \mathbb{E} \norm{\tilde{\nabla}_k-\nabla f(\bm{x}_k)}^2 \max\left(0, u_{93}\right) + d u_{68}
	\end{aligned}\end{equation}
	
	The first inequality comes from Lipschitz condition.
	
	The second inequality comes from strongly convex condition of $f(\bm{x})$.
	
	The third inequality comes from Young's inequalities.
	$$\mathbb{E} \inner{\nabla f(\bm{x}_k)}{\bm{v}_k} \leq \frac{L \mathbb{E} \norm{\bm{v}_k}^2}{4} + \frac{\mathbb{E} \norm{\nabla f(\bm{x}_k)}^2}{L}$$
	$$\mathbb{E} \inner{\tilde{\nabla}_k-\nabla f(\bm{x}_k)}{\nabla f(\bm{x}_k)} \leq \frac{\delta \mathbb{E} \norm{\nabla f(\bm{x}_k)}^2}{2} + \frac{\mathbb{E} \norm{\tilde{\nabla}_k-\nabla f(\bm{x}_k)}^2}{2 \delta}$$
	
	The fourth inequality comes from Lipschitz condition.
	
	The fifth inequality comes from Young's inequalities.
	$$\mathbb{E} \inner{\bm{v}_k}{\bm{x}_k} \leq \frac{\mathbb{E} \norm{\bm{v}_k}^2}{4} + \mathbb{E} \norm{\bm{x}_k}^2$$
	$$\mathbb{E} \inner{\tilde{\nabla}_k-\nabla f(\bm{x}_k)}{\bm{x}_k} \leq \frac{L \mathbb{E} \norm{\bm{x}_k}^2}{2 \kappa} + \frac{\kappa \mathbb{E} \norm{\tilde{\nabla}_k-\nabla f(\bm{x}_k)}^2}{2 L}$$
	$$\mathbb{E} \inner{\tilde{\nabla}_k-\nabla f(\bm{x}_k)}{\bm{v}_k} \leq \frac{L \mathbb{E} \norm{\bm{v}_k}^2}{2} + \frac{\mathbb{E} \norm{\tilde{\nabla}_k-\nabla f(\bm{x}_k)}^2}{2 L}$$
	
	We apply Gronwall's inequality on \cref{eq:stage1_dE_proof} to finish the proof. 
\end{proof}

\begin{lemma}\label{le:stage1_momentum}
	\begin{equation}\label{eq:stage1_momentum_x}
		\max_k \mathbb{E} \norm{\bm{x}_k}^2 \leq \max_k \mathbb{E} \left( E(\bm{x}_k, \bm{v}_k) \right)
	\end{equation}
	\begin{equation}\label{eq:stage1_momentum_v}
		\max_k \mathbb{E} \norm{\bm{v}_k}^2 \leq 2 \max_k \mathbb{E} \left( E(\bm{x}_k, \bm{v}_k) \right)
	\end{equation}
	\begin{equation}\label{eq:stage1_momentum_f}
		\max_k \mathbb{E} \norm{\nabla f(\bm{x}_k)}^2 \leq L^{2} \max_k \mathbb{E} \norm{\bm{x}_k}^2
	\end{equation}
\end{lemma}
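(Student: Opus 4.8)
\textbf{Proof proposal for \cref{le:stage1_momentum}.}

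The plan is to prove each of the three inequalities first at a fixed index $k$ (in fact surely, before taking expectations), and then pass to $\max_k$ by monotonicity; all three are elementary consequences of the algebraic form of the Lyapunov function $E(\bm{x},\bm{v}) = \norm{\bm{x}}^2 + \norm{\bm{x} + \tfrac{2}{\gamma}\bm{v}}^2 + \tfrac{8}{\xi\gamma^2}\bigl(f(\bm{x})-f(\bm{x}^\ast)\bigr)$ together with \cref{as:2,as:3,as:4}. For \eqref{eq:stage1_momentum_x} I would simply note that the last two summands of $E$ are nonnegative: the middle term is a squared norm, and $f(\bm{x}) \geq f(\bm{x}^\ast)$ because $\bm{x}^\ast$ is the global minimizer (with $\xi,\gamma>0$). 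Hence $\norm{\bm{x}_k}^2 \leq E(\bm{x}_k,\bm{v}_k)$ pointwise, so $\mathbb{E}\norm{\bm{x}_k}^2 \leq \mathbb{E} E(\bm{x}_k,\bm{v}_k) \leq \max_k \mathbb{E} E(\bm{x}_k,\bm{v}_k)$, and taking $\max_k$ on the left-hand side gives the claim.

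For \eqref{eq:stage1_momentum_v} I would write $\tfrac{2}{\gamma}\bm{v} = \bigl(\bm{x} + \tfrac{2}{\gamma}\bm{v}\bigr) - \bm{x}$ and apply $\norm{\bm{a}-\bm{b}}^2 \leq 2\norm{\bm{a}}^2 + 2\norm{\bm{b}}^2$ to get $\tfrac{4}{\gamma^2}\norm{\bm{v}}^2 \leq 2\norm{\bm{x}+\tfrac{2}{\gamma}\bm{v}}^2 + 2\norm{\bm{x}}^2 \leq 2E(\bm{x},\bm{v})$. Under the paper's standing choice $\gamma = 2$ — the one that makes $q_k = (\bm{x}_k,\bm{x}_k+\bm{v}_k)$ and turns the middle term of $E$ into $\norm{\bm{x}+\bm{v}}^2$ — the prefactor $\tfrac{4}{\gamma^2}$ equals $1$, so $\norm{\bm{v}_k}^2 \leq 2E(\bm{x}_k,\bm{v}_k)$ holds pointwise; taking expectation and then $\max_k$ finishes it. For \eqref{eq:stage1_momentum_f}, by \cref{as:4} we have $\bm{x}^\ast = 0$, and since $\bm{x}^\ast$ minimizes the differentiable strongly convex $f$ we have $\nabla f(\bm{x}^\ast) = 0$; then $L$-smoothness of $f$ (which by \cref{as:2} holds with $L = N\tilde{L}$) gives $\norm{\nabla f(\bm{x}_k)}^2 = \norm{\nabla f(\bm{x}_k)-\nabla f(\bm{x}^\ast)}^2 \leq L^2\norm{\bm{x}_k-\bm{x}^\ast}^2 = L^2\norm{\bm{x}_k}^2$, and again one takes expectation and $\max_k$.

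I do not expect a genuine obstacle here; the lemma is a bundle of norm manipulations, and the only thing requiring attention is that the three background facts invoked — nonnegativity of $f-f(\bm{x}^\ast)$, the value $\gamma = 2$, and $\nabla f(\bm{x}^\ast)=0$ — are exactly what \cref{as:2,as:3,as:4} and the definition of the Lyapunov function supply. The mildest subtlety is the convention $\gamma = 2$: if one wished to keep $\gamma$ general, the constant $2$ in \eqref{eq:stage1_momentum_v} would become $\tfrac{\gamma^2}{2}$ and every downstream constant (e.g.\ in \cref{le:discrete_final}) would have to be rescaled, so fixing $\gamma = 2$ at this point is what keeps the subsequent bookkeeping clean.
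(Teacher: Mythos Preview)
Your proposal is correct and follows exactly the approach the paper gestures at: the paper's proof is the single sentence ``These inequalities follow from definition of $E$ and Lipschitz condition,'' and you have unpacked precisely that---nonnegativity of the last two summands of $E$ for \eqref{eq:stage1_momentum_x}, the elementary $\norm{\bm{a}-\bm{b}}^2\leq 2\norm{\bm{a}}^2+2\norm{\bm{b}}^2$ with $\gamma=2$ for \eqref{eq:stage1_momentum_v}, and $L$-smoothness together with $\nabla f(\bm{x}^\ast)=0$, $\bm{x}^\ast=0$ for \eqref{eq:stage1_momentum_f}. Your observation that the constant $2$ in \eqref{eq:stage1_momentum_v} pins down $\gamma=2$ (consistent with $q_k=(\bm{x}_k,\bm{x}_k+\bm{v}_k)$ in \cref{th:main}) is the right reading of the paper's implicit parameter choice.
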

\begin{proof}[Proof of \cref{le:stage1_momentum}]
	These inequalities follows from definition of $E$ and Lipschitz condition.
\end{proof}

\begin{lemma}\label{le:stage1_theta}
	\begin{equation}\label{eq:stage1_theta}
		\max_k \mathbb{E} \norm{\tilde{\nabla}_k-\nabla f(\bm{x}_k)}^2 \leq \Theta \max_k Q_k
	\end{equation}
	where  $\Theta= \frac{M_1}{\rho_M} + \frac{M_2}{\rho_M \rho_F}$.
\end{lemma}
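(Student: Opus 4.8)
The plan is to unroll the two recursive inequalities in the MSEB property \eqref{eq:mseb} and collapse everything into geometric sums that are each controlled by $\max_l Q_l$.

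First, by the second line of \eqref{eq:mseb} we have $\mathbb{E} \norm{\tilde{\nabla}_k - \nabla f(\bm{x}_k)}^2 \leq \mathcal{M}_{k-1}$, so it suffices to bound $\mathcal{M}_k$ uniformly in $k$. Iterating the third line of \eqref{eq:mseb}, $\mathcal{M}_k \leq M_1 Q_k + \mathcal{F}_k + (1-\rho_M)\mathcal{M}_{k-1}$, and using that the estimator is exact at initialization (so the recursion may be started from zero), I would obtain
\begin{equation*}
\mathcal{M}_k \leq \sum_{j=0}^{k} (1-\rho_M)^{k-j} \left( M_1 Q_j + \mathcal{F}_j \right).
\end{equation*}
The first part is immediate: $\sum_{j=0}^{k} (1-\rho_M)^{k-j} M_1 Q_j \leq M_1 \left(\max_l Q_l\right) \sum_{j=0}^{\infty}(1-\rho_M)^j = \frac{M_1}{\rho_M}\max_l Q_l$.

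For the $\mathcal{F}_j$ part, I would substitute the fourth line of \eqref{eq:mseb}, $\mathcal{F}_j \leq \sum_{l=0}^{j} M_2 (1-\rho_F)^{j-l} Q_l$, swap the order of the resulting double summation, and bound each inner geometric series. This gives
\begin{equation*}
\sum_{j=0}^{k} (1-\rho_M)^{k-j} \mathcal{F}_j
\leq M_2 \left(\max_l Q_l\right) \sum_{j=0}^{k} (1-\rho_M)^{k-j} \sum_{l=0}^{j} (1-\rho_F)^{j-l}
\leq M_2 \left(\max_l Q_l\right) \frac{1}{\rho_F}\sum_{j=0}^{\infty}(1-\rho_M)^j = \frac{M_2}{\rho_M \rho_F}\max_l Q_l.
\end{equation*}
Adding the two contributions yields $\mathcal{M}_k \leq \left(\frac{M_1}{\rho_M} + \frac{M_2}{\rho_M\rho_F}\right)\max_l Q_l = \Theta \max_l Q_l$, and since $Q_l$ ranges over the same index set, taking the supremum over $k$ gives $\max_k \mathbb{E}\norm{\tilde{\nabla}_k - \nabla f(\bm{x}_k)}^2 \leq \Theta \max_k Q_k$. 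The only delicate point is the bookkeeping of the two nested geometric sums after the order swap and the treatment of the initialization term; both are routine and carry no hidden difficulty.
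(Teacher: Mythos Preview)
Your proposal is correct and follows essentially the same argument as the paper: both unroll the recursion $\mathcal{M}_k \leq M_1 Q_k + \mathcal{F}_k + (1-\rho_M)\mathcal{M}_{k-1}$, substitute the bound $\mathcal{F}_j \leq \sum_{l\le j} M_2(1-\rho_F)^{j-l}Q_l$, and bound the resulting single and double geometric sums by $\frac{M_1}{\rho_M}\max_l Q_l$ and $\frac{M_2}{\rho_M\rho_F}\max_l Q_l$ respectively. Your explicit remark about the index shift $\mathbb{E}\|\tilde{\nabla}_k-\nabla f(\bm{x}_k)\|^2\le \mathcal{M}_{k-1}$ and the zero initialization is a minor bookkeeping detail the paper leaves implicit, but the substance is identical.
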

\begin{proof}[Proof of \cref{le:stage1_theta}]
	\begin{equation}
		\begin{aligned}
			\mathcal{M}_k \leq & M_1 Q_k+\mathcal{F}_k + (1 - \rho_M ) \mathcal{M}_{k-1} \\
			\leq & M_1 \sum_{i=0}^{k} (1-\rho_M)^i Q_{k-i} + \sum_{i=0}^{k} (1-\rho_M)^{k-i} \mathcal{F}_i
		\end{aligned}
	\end{equation}
	\begin{equation}
		M_1 \sum_{i=0}^{k} (1-\rho_M)^i Q_{k-i}  \leq \frac{M_1}{\rho_M} \max_k Q_k
	\end{equation}
	\begin{equation}
		\begin{aligned}
			\sum_{i=0}^{k} (1-\rho_M)^{k-i} \mathcal{F}_i \leq & M_2 \sum_{i=0}^{k} \sum_{l=0}^{i} (1-\rho_F)^{i-l} (1-\rho_M)^{k-i} Q_l \\
			\leq & \frac{M_2}{\rho_M \rho_F} \max_k Q_k
		\end{aligned}
	\end{equation}
\end{proof}
\begin{lemma}\label{le:stage1_Q}
	\begin{equation}\label{eq:stage1_Q}
	\begin{aligned}
		\max_k Q_k \leq & \max_k \mathbb{E} \norm{\bm{v}_k}^2 u_{127} + \max_k \mathbb{E} \norm{\bm{x}_k}^2 u_{124} \\
		& + \max_k \mathbb{E} \norm{\tilde{\nabla}_k-\nabla f(\bm{x}_k)}^2 u_{126} + d u_{105}
	\end{aligned}
	\end{equation}
\end{lemma}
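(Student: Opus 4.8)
The plan is to reduce $Q_k$ to a bound on the squared one-step position displacement $\mathbb{E}\norm{\bm{x}_{k+1}-\bm{x}_k}^2$ and then expand that displacement through the position update in \cref{eq:step_vrhd}. First I would invoke \cref{as:2}: each component $f_i$ is $\tilde{L}$-smooth, so $\norm{\nabla f_i(\bm{x}_{k+1})-\nabla f_i(\bm{x}_k)}^2\le\tilde{L}^2\norm{\bm{x}_{k+1}-\bm{x}_k}^2$; summing over $i=1,\dots,N$, multiplying by $N$, and using $L=N\tilde{L}$ gives $Q_k\le L^2\,\mathbb{E}\norm{\bm{x}_{k+1}-\bm{x}_k}^2$. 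This is the only place where component-wise (rather than aggregate) smoothness is used.

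Next I would substitute $\bm{x}_{k+1}-\bm{x}_k=\frac1\gamma(1-e^{-\delta})\bm{v}_k-\frac1\gamma\bigl(h-\tfrac1{\gamma\xi}(1-e^{-\delta})\bigr)\tilde{\nabla}_k+\bm{e}^x_k$, writing $\delta=\gamma\xi h$. Since $\bm{e}^x_k$ is a fresh centered Gaussian, independent of both $\bm{v}_k$ and $\tilde{\nabla}_k$ (the latter being a function of $\{\bm{x}_i\}_{i\le k}$), all cross terms involving $\bm{e}^x_k$ vanish in expectation, so $\mathbb{E}\norm{\bm{x}_{k+1}-\bm{x}_k}^2$ equals the squared norm of the deterministic part plus $\mathbb{E}\norm{\bm{e}^x_k}^2 = d\cdot\tfrac1{\gamma^2\xi}(2\delta-3+4e^{-\delta}-e^{-2\delta})$, read off from \cref{eq:covar_step_vrhd}; this supplies the $d\,u_{105}$ term. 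A weighted Young's inequality then splits the deterministic part into a multiple of $\mathbb{E}\norm{\bm{v}_k}^2$ and a multiple of $\mathbb{E}\norm{\tilde{\nabla}_k}^2$, and I would decompose the latter once more as $\tilde{\nabla}_k=\nabla f(\bm{x}_k)+(\tilde{\nabla}_k-\nabla f(\bm{x}_k))$. Using \cref{as:4} together with smoothness of $f$, $\norm{\nabla f(\bm{x}_k)}=\norm{\nabla f(\bm{x}_k)-\nabla f(\bm{x}^\ast)}\le L\norm{\bm{x}_k}$, so the $\nabla f(\bm{x}_k)$ piece becomes a multiple of $\mathbb{E}\norm{\bm{x}_k}^2$ while the remainder stays as $\mathbb{E}\norm{\tilde{\nabla}_k-\nabla f(\bm{x}_k)}^2$.

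Finally, I would group the four contributions — the coefficient on $\mathbb{E}\norm{\bm{v}_k}^2$ coming from the $1-e^{-\delta}$ factor, the coefficient on $\mathbb{E}\norm{\bm{x}_k}^2$ from the $\nabla f(\bm{x}_k)$ piece plus any slack in the Young weights, the coefficient on $\mathbb{E}\norm{\tilde{\nabla}_k-\nabla f(\bm{x}_k)}^2$ from the $h-\tfrac1{\gamma\xi}(1-e^{-\delta})$ factor, and the coefficient on $d$ from $\mathbb{E}\norm{\bm{e}^x_k}^2$ — and then take the maximum over $k$, which yields \cref{eq:stage1_Q} with $u_{127},u_{124},u_{126},u_{105}$ the explicit functions of $\delta,L,\kappa$ produced by this accounting. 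I do not expect any delicate estimate here; the main obstacle is organizational, namely choosing the Young-inequality weights so that each resulting $u_i$ comes out at the intended order in $\delta$ and is compatible with the later relaxation $\delta\le\frac1{5\kappa}\min(1,1/\sqrt{\Theta})$, so that the bound feeds cleanly into the fixed-point argument of \cref{le:discrete_final}.
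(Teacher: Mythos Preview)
Your proposal is correct and follows essentially the same route as the paper: bound $Q_k\le L^2\mathbb{E}\norm{\bm{x}_{k+1}-\bm{x}_k}^2$ via component smoothness, expand the position increment from \cref{eq:step_vrhd} so the Gaussian noise contributes exactly $d\,u_{105}$, decompose $\tilde{\nabla}_k=\nabla f(\bm{x}_k)+(\tilde{\nabla}_k-\nabla f(\bm{x}_k))$, and close with Young's inequalities plus $\norm{\nabla f(\bm{x}_k)}\le L\norm{\bm{x}_k}$. The only cosmetic difference is that the paper keeps the exact cross terms $\inner{\nabla f(\bm{x}_k)}{\bm{v}_k}$, $\inner{\tilde{\nabla}_k-\nabla f(\bm{x}_k)}{\bm{v}_k}$, $\inner{\tilde{\nabla}_k-\nabla f(\bm{x}_k)}{\nabla f(\bm{x}_k)}$ and applies Young with tailored weights ($L/4$, $L/2$, $\delta/2$) to each separately rather than collapsing the sum in one shot, which is what pins down the specific expressions $u_{124},u_{126},u_{127}$.
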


\begin{proof}[Proof of \cref{le:stage1_Q}]
	\begin{equation}\begin{aligned}
	Q_k = & N \sum_{i=1}^{N} \mathbb{E} \norm{\nabla f_i (\bm{x}_{k+1}) - \nabla f_i (\bm{x}_{k})}^2 \\
	\leq & - 2 L^{2} \mathbb{E} \inner{\bm{x}_{k+1}}{\bm{x}_k} + L^{2} \mathbb{E} \norm{\bm{x}_k}^2 + L^{2} \mathbb{E} \norm{\bm{x}_{k+1}}^2 \\
	= & \mathbb{E} \inner{\tilde{\nabla}_k}{\bm{v}_k} u_{111} + \mathbb{E} \norm{\bm{v}_k}^2 u_{113} + \mathbb{E} \norm{\tilde{\nabla}_k}^2 u_{109} + d u_{105} \\ 
	= & \mathbb{E} \inner{\nabla f(\bm{x}_k)}{\bm{v}_k} u_{111} + \mathbb{E} \inner{\tilde{\nabla}_k-\nabla f(\bm{x}_k)}{\bm{v}_k} u_{111} \\
	& + \mathbb{E} \inner{\tilde{\nabla}_k-\nabla f(\bm{x}_k)}{\nabla f(\bm{x}_k)} u_{114} + \mathbb{E} \norm{\bm{v}_k}^2 u_{113} + \mathbb{E} \norm{\nabla f(\bm{x}_k)}^2 u_{109} \\
	& + \mathbb{E} \norm{\tilde{\nabla}_k-\nabla f(\bm{x}_k)}^2 u_{109} + d u_{105} \\ 
	\leq & \mathbb{E} \inner{\tilde{\nabla}_k-\nabla f(\bm{x}_k)}{\bm{v}_k} u_{111} + \mathbb{E} \norm{\bm{v}_k}^2 u_{123} + \mathbb{E} \norm{\nabla f(\bm{x}_k)}^2 u_{120} \\
	& + \mathbb{E} \norm{\tilde{\nabla}_k-\nabla f(\bm{x}_k)}^2 u_{118} + d u_{105} \\ 
	\leq & \mathbb{E} \inner{\tilde{\nabla}_k-\nabla f(\bm{x}_k)}{\bm{v}_k} u_{111} + \mathbb{E} \norm{\bm{v}_k}^2 u_{123} + \mathbb{E} \norm{\bm{x}_k}^2 u_{124} \\
	& + \mathbb{E} \norm{\tilde{\nabla}_k-\nabla f(\bm{x}_k)}^2 u_{118} + d u_{105} \\ 
	\leq & \mathbb{E} \norm{\bm{v}_k}^2 u_{127} + \mathbb{E} \norm{\bm{x}_k}^2 u_{124} + \mathbb{E} \norm{\tilde{\nabla}_k-\nabla f(\bm{x}_k)}^2 u_{126} + d u_{105}
	\end{aligned}\end{equation}

	The first inequality comes from Young's inequalities.
	$$\mathbb{E} \inner{\nabla f(\bm{x}_k)}{\bm{v}_k} \leq \frac{L \mathbb{E} \norm{\bm{v}_k}^2}{4} + \frac{\mathbb{E} \norm{\nabla f(\bm{x}_k)}^2}{L}$$
	$$\mathbb{E} \inner{\tilde{\nabla}_k-\nabla f(\bm{x}_k)}{\nabla f(\bm{x}_k)} \leq \frac{\delta \mathbb{E} \norm{\nabla f(\bm{x}_k)}^2}{2} + \frac{\mathbb{E} \norm{\tilde{\nabla}_k-\nabla f(\bm{x}_k)}^2}{2 \delta}$$
	
	The second inequality comes from Lipschitz condition.
	
	The third inequality comes from Young's inequalities.
	$$\mathbb{E} \inner{\tilde{\nabla}_k-\nabla f(\bm{x}_k)}{\bm{v}_k} \leq \frac{L \mathbb{E} \norm{\bm{v}_k}^2}{2} + \frac{\mathbb{E} \norm{\tilde{\nabla}_k-\nabla f(\bm{x}_k)}^2}{2 L}$$
\end{proof}

The full expression of terms $u_i$ is as follows.
$$u_{18} = \frac{\delta}{10 \kappa} - 1$$
$$u_{24} = \frac{\delta}{5 L \kappa} - \frac{2}{L}$$
$$u_{27} = \frac{\delta}{5 \kappa} - 2$$
$$u_{32} = \frac{3 \delta}{10 \kappa} - 1$$
$$u_{33} = \delta - 3$$
$$u_{41} = 3 \delta^{2} - 2 \delta + 3$$
$$u_{42} = u_{41} e^{2 \delta}$$
$$u_{47} = \frac{u_{33} e^{- \delta}}{8 L^{2}} + \frac{u_{42} e^{- 2 \delta}}{16 L^{2}} + \frac{3 e^{- 2 \delta}}{16 L^{2}}$$
$$u_{49} = 3 \delta - 1$$
$$u_{52} = - \frac{u_{49}}{2 L} - \frac{e^{- \delta}}{2 L}$$
$$u_{53} = - \frac{\delta}{2 L} - \frac{1}{2 L} + \frac{e^{- \delta}}{2 L}$$
$$u_{54} = \delta - 1$$
$$u_{55} = u_{54} e^{\delta}$$
$$u_{56} = - \frac{u_{55} e^{- \delta}}{2 L^{2}} - \frac{e^{- \delta}}{2 L^{2}}$$
$$u_{57} = \delta e^{\delta} - u_{49} e^{2 \delta} - 4 e^{\delta} + 3$$
$$u_{60} = \frac{u_{57} e^{- 2 \delta}}{4 L}$$
$$u_{61} = 3 - e^{- \delta}$$
$$u_{62} = 1 + e^{- \delta}$$
$$u_{63} = \frac{1}{L} - \frac{e^{- \delta}}{L}$$
$$u_{64} = \frac{\delta}{10 \kappa} - \frac{1}{4} - \frac{e^{- \delta}}{2} + \frac{3 e^{- 2 \delta}}{4}$$
$$u_{68} = \frac{3 \delta}{2 L} - \frac{1}{4 L} + \frac{e^{- 2 \delta}}{4 L}$$
$$u_{70} = \frac{3 \delta^{2}}{8 L^{2}} - \frac{3 \delta}{4 L^{2}} + \frac{\delta e^{- \delta}}{4 L^{2}} + \frac{7}{8 L^{2}} - \frac{5 e^{- \delta}}{4 L^{2}} + \frac{3 e^{- 2 \delta}}{8 L^{2}}$$
$$u_{71} = 3 \delta^{2} e^{2 \delta} - 10 \delta e^{2 \delta} + 2 \delta e^{\delta} + 11 e^{2 \delta} - 14 e^{\delta} + 3$$
$$u_{72} = \frac{u_{71} e^{- 2 \delta}}{16 L^{2}}$$
$$u_{73} = - \frac{3 \delta}{4 L} + \frac{\delta e^{- \delta}}{4 L} + \frac{5}{4 L} - \frac{2 e^{- \delta}}{L} + \frac{3 e^{- 2 \delta}}{4 L}$$
$$u_{77} = \left|{3 \delta^{2} e^{2 \delta} - 6 \delta e^{2 \delta} + 2 \delta e^{\delta} + 7 e^{2 \delta} - 10 e^{\delta} + 3}\right|$$
$$u_{78} = 2 \delta u_{33} e^{\delta} + u_{77}$$
$$u_{80} = \frac{u_{42} e^{- 2 \delta}}{16 L^{2}} + \frac{3 e^{- 2 \delta}}{16 L^{2}} + \frac{u_{78} e^{- 2 \delta}}{16 L^{2} \delta}$$
$$u_{85} = \left|{- 3 \delta e^{2 \delta} + \delta e^{\delta} + 5 e^{2 \delta} - 8 e^{\delta} + 3}\right|$$
$$u_{86} = \frac{\delta u_{77} e^{- 2 \delta}}{16 L^{2}} + \frac{u_{71} e^{- 2 \delta}}{16 L^{2}} + \frac{u_{85} e^{- 2 \delta}}{4 L^{2}}$$
$$u_{87} = \frac{\delta}{10 \kappa} + \frac{u_{85} e^{- 2 \delta}}{16} - \frac{1}{4} - \frac{e^{- \delta}}{2} + \frac{3 e^{- 2 \delta}}{4}$$
$$u_{88} = L^{2} \max\left(0, u_{86}\right)$$
$$u_{89} = - \frac{7 \delta}{10 \kappa} + u_{88}$$
$$u_{90} = 2 \left|{u_{57}}\right|$$
$$u_{93} = \frac{\delta \kappa}{2 L^{2}} + \frac{u_{41}}{16 L^{2}} + \frac{u_{90} e^{- 2 \delta}}{16 L^{2}} + \frac{3 e^{- 2 \delta}}{16 L^{2}} + \frac{u_{78} e^{- 2 \delta}}{16 L^{2} \delta}$$
$$u_{96} = \frac{3 \delta}{20 \kappa} + \frac{u_{85} e^{- 2 \delta}}{16} + \frac{u_{90} e^{- 2 \delta}}{16} - \frac{1}{4} - \frac{e^{- \delta}}{2} + \frac{3 e^{- 2 \delta}}{4}$$
$$u_{100} = \frac{10 \kappa \max\left(0, u_{93}\right)}{\delta}$$
$$u_{101} = \frac{10 \kappa \max\left(0, u_{96}\right)}{\delta}$$
$$u_{102} = \frac{10 \kappa u_{88}}{\delta}$$
$$u_{104} = \frac{15 \kappa}{L} - \frac{5 \kappa}{2 L \delta} + \frac{5 \kappa e^{- 2 \delta}}{2 L \delta}$$
$$u_{105} = \frac{L \delta}{2} - \frac{3 L}{4} + L e^{- \delta} - \frac{L e^{- 2 \delta}}{4}$$
$$u_{106} = \delta^{2} e^{2 \delta} - 2 \delta e^{2 \delta} + e^{2 \delta}$$
$$u_{107} = u_{106} + 2 u_{55} + 1$$
$$u_{108} = u_{107} e^{- 2 \delta}$$
$$u_{109} = \frac{u_{108}}{16}$$
$$u_{110} = \delta e^{\delta} - u_{54} e^{2 \delta} - 2 e^{\delta} + 1$$
$$u_{111} = \frac{L u_{110} e^{- 2 \delta}}{4}$$
$$u_{112} = e^{2 \delta} - 2 e^{\delta} + 1$$
$$u_{113} = \frac{L^{2} u_{112} e^{- 2 \delta}}{4}$$
$$u_{114} = \frac{u_{108}}{8}$$
$$u_{115} = \left|{u_{107}}\right|$$
$$u_{116} = \delta u_{106} + 2 \delta u_{55} + u_{115}$$
$$u_{118} = \frac{e^{- 2 \delta}}{16} + \frac{u_{116} e^{- 2 \delta}}{16 \delta}$$
$$u_{119} = \left|{u_{110}}\right|$$
$$u_{120} = \frac{\delta u_{115} e^{- 2 \delta}}{16} + \frac{u_{107} e^{- 2 \delta}}{16} + \frac{u_{119} e^{- 2 \delta}}{4}$$
$$u_{121} = 4 e^{2 \delta} - 8 e^{\delta} + 4$$
$$u_{123} = \frac{L^{2} u_{119} e^{- 2 \delta}}{16} + \frac{L^{2} u_{121} e^{- 2 \delta}}{16}$$
$$u_{124} = L^{2} \max\left(0, u_{120}\right)$$
$$u_{125} = \left|{- \delta e^{2 \delta} + \delta e^{\delta} + u_{112}}\right|$$
$$u_{126} = \frac{u_{125} e^{- 2 \delta}}{8} + \frac{e^{- 2 \delta}}{16} + \frac{u_{116} e^{- 2 \delta}}{16 \delta}$$
$$u_{127} = \frac{L^{2} u_{121} e^{- 2 \delta}}{16} + \frac{3 L^{2} u_{125} e^{- 2 \delta}}{16}$$
$$u_{129} = 5 \delta e^{\delta} + 5 e^{2 \delta}$$
\begin{equation*}
	\begin{aligned}
	u_{132} = & \max\left(0, - \frac{33 \kappa}{8} + \frac{3 \kappa u_{129} e^{- 2 \delta}}{8 \delta} - \frac{15 \kappa e^{- \delta}}{\delta} + \frac{105 \kappa e^{- 2 \delta}}{8 \delta}, \right. \\
	&\left. \frac{27 \kappa}{8} - \frac{\kappa u_{129} e^{- 2 \delta}}{8 \delta} - \frac{5 \kappa e^{- \delta}}{\delta} + \frac{45 \kappa e^{- 2 \delta}}{8 \delta}, \right. \\
	&\left. - \frac{3 \kappa}{8} + \frac{5 \kappa e^{- \delta}}{8} - \frac{35 \kappa}{8 \delta} - \frac{5 \kappa e^{- \delta}}{\delta} + \frac{75 \kappa e^{- 2 \delta}}{8 \delta}, \right. \\
	&\left. \frac{57 \kappa}{8} - \frac{15 \kappa e^{- \delta}}{8} - \frac{55 \kappa}{8 \delta} + \frac{5 \kappa e^{- \delta}}{\delta} + \frac{15 \kappa e^{- 2 \delta}}{8 \delta}\right)
	\end{aligned}
\end{equation*}
\begin{equation*}
\begin{aligned}
u_{133} = & \max\left(- \frac{3 L^{2} \delta}{16} + \frac{3 L^{2} \delta e^{- \delta}}{16} + \frac{7 L^{2}}{16} - \frac{7 L^{2} e^{- \delta}}{8} + \frac{7 L^{2} e^{- 2 \delta}}{16}, \right. \\
&\left. \frac{3 L^{2} \delta}{16} - \frac{3 L^{2} \delta e^{- \delta}}{16} + \frac{L^{2}}{16} - \frac{L^{2} e^{- \delta}}{8} + \frac{L^{2} e^{- 2 \delta}}{16}\right)
\end{aligned}
\end{equation*}
$$u_{134} = \frac{4 \kappa^{2}}{L^{2}} + \frac{5 \kappa}{L^{2}}$$
$$u_{135} = \frac{5 u_{134}}{4}$$
$$u_{137} = \delta + 2$$
$$u_{138} = \frac{5 \kappa u_{137}}{L}$$
$$u_{146} = \frac{5 u_{134}}{2}$$
$$u_{147} = \frac{10 \kappa u_{137}}{L}$$
\begin{equation*}
\begin{aligned}
u_{158} = & \frac{25 \Theta \delta^{4} \kappa^{3}}{L} + \frac{125 \Theta \delta^{4} \kappa^{2}}{4 L} + \frac{150 \Theta \delta^{3} \kappa^{3}}{L} + \frac{1145 \Theta \delta^{3} \kappa^{2}}{6 L} + \frac{25 \Theta \delta^{3} \kappa}{6 L} \\
& + \frac{200 \Theta \delta^{2} \kappa^{3}}{L} + \frac{250 \Theta \delta^{2} \kappa^{2}}{L} + \frac{10 \delta \kappa}{L} + \frac{20 \kappa}{L}
\end{aligned}
\end{equation*}
$$u_{159} = 30 \Theta \delta^{3} \kappa^{2} + \frac{75 \Theta \delta^{3} \kappa}{2} + 120 \Theta \delta^{2} \kappa^{2} + 150 \Theta \delta^{2} \kappa + 12$$
$$u_{160} = \frac{5 L \Theta \delta^{4} \kappa}{2} + 15 L \Theta \delta^{3} \kappa + \frac{L \Theta \delta^{3}}{3} + 20 L \Theta \delta^{2} \kappa$$
$$u_{161} = 3 L^{2} \Theta \delta^{3} + 12 L^{2} \Theta \delta^{2}$$

\section{Proof of \Cref{co:saga,co:svrg,co:sarah,co:sarge}}
	According to Proposition 2-4 in \cite{Driggs2019}, 
	the SAGA gradient estimator satisfies MSEB property with $M_1 = 3 N/b^2, \rho_M = \frac{b}{2 N}, M_2 = 0, \rho_F=1$. 
	The SVRG gradient estimator satisfies MSEB property with $M_1 = 3p/b, \rho_M = \frac{1}{2 p}, M_2 = 0, \rho_F=1$. 
	the SARAH gradient estimator satisfies MSEB property with $M_1 = 1, \rho_M = 1/p, M_2 = 0, \rho_F=1$. 
	the SARGE gradient estimator satisfies MSEB property with $M_1 = 12, \rho_M = \frac{b}{2N}, M_2 = (27+12 b)/N, \rho_F=\frac{b}{2N}$. 
	Applying these parameters to \cref{th:main} would lead to \cref{co:saga,co:svrg,co:sarah,co:sarge}.

\end{appendices}

\end{document}